\newcommand{\err}{\text{err}}
\newcommand{\indic}[1]{ \mathbb{I} \tlprn {#1}}
\title{Robust Linear Regression: Optimal Rates in Polynomial Time}
\author{ Ainesh Bakshi\thanks{AB would like to thank the partial support
from the Office of Naval Research (ONR) grant N00014-18-1-2562, and the National Science Foundation (NSF) under
Grant No. CCF-1815840.}\\
abakshi@cs.cmu.edu\\
CMU
\and
Adarsh Prasad\\
adarshp@andrew.cmu.edu\\
CMU
  }
\date{}
\begin{document}

\maketitle

\begin{abstract}
We obtain robust and computationally efficient estimators for learning several linear models that achieve statistically optimal convergence rate under minimal distributional assumptions. Concretely, we assume our data is drawn from a $k$-hypercontractive distribution and an $\epsilon$-fraction is adversarially corrupted. We then describe an estimator that converges to the optimal least-squares minimizer for the true distribution at a rate proportional to $\epsilon^{2-2/k}$, when the noise is independent of the covariates. We note that no such estimator was known prior to our work, even with access to unbounded computation. The rate we achieve is information-theoretically optimal and thus we
resolve the main open question in Klivans, Kothari and Meka [COLT'18].

Our key insight is to identify an analytic condition 
that serves as a polynomial relaxation of independence of random variables. In particular, we show that when the moments of the noise and covariates are negatively-correlated, we obtain the same rate as independent noise. Further, when the condition is not satisfied, we obtain a rate proportional to $\epsilon^{2-4/k}$, and again match the information-theoretic lower bound. Our central technical contribution is to algorithmically exploit independence of random variables in the "sum-of-squares" framework by formulating it as the aforementioned polynomial inequality.

\end{abstract}

\thispagestyle{empty}

\clearpage


\microtypesetup{protrusion=false}
\tableofcontents{}
\thispagestyle{empty}
\microtypesetup{protrusion=true}

\clearpage

\pagestyle{plain}
\setcounter{page}{1}

\section{Introduction}

While classical statistical theory has focused on designing statistical estimators assuming access to i.i.d. samples from a nice distribution, estimation in the presence of adversarial outliers has been a challenging problem since it was formalized by Huber \cite{huber1964robust}. A long and influential line of work in high-dimensional robust statistics has since focused on studying the trade-off between sample complexity, accuracy and more recently, computational complexity for basic tasks such as estimating mean, covariance ~\cite{lai2016agnostic,diakonikolas2016robust,DBLP:conf/stoc/CharikarSV17,DBLP:journals/corr/abs-1711-11581,DBLP:journals/corr/SteinhardtCV17,DBLP:conf/soda/0002D019,DBLP:conf/icml/DiakonikolasKK017,DBLP:conf/soda/DiakonikolasKK018,DBLP:conf/colt/0002D0W19}, moment tensors of distributions ~\cite{DBLP:journals/corr/abs-1711-11581} and regression~\cite{DBLP:conf/focs/DiakonikolasKS17,DBLP:conf/colt/KlivansKM18,DBLP:conf/icml/DiakonikolasKK019,prasad2018robust,karmalkar2019list,raghavendra2020regression}.

Regression continues to be extensively studied under various models, including realizable regression (no noise), true linear models (independent noise), asymmetric noise, agnostic regression and generalized linear models~(see~\cite{weisberg2005applied} and references therein). In each model, a variety of distributional assumptions are considered over the covariates and the noise. As a consequence,
there exist innumerable estimators for regression achieving various trade-offs between sample complexity, running time and rate of convergence. The presence of adversarial outliers adds yet another dimension to design and compare estimators. 

Seminal works on robust regression focused on designing non-convex loss functions, including M-estimators \cite{huber2011robust}, Theil-Sen estimators\cite{theil1992rank, sen1968estimates},  R-estimators\cite{jaeckel1972estimating}, Least-Median-Squares \cite{rousseeuw1984least} and S-estimators\cite{rousseeuw1984robust}. These estimators have desirable statistical properties under disparate assumptions, yet remain computationally intractable in high dimensions.   
Further, recent works show that it is information-theoretically impossible to design robust estimators for linear regression without distributional assumptions \cite{DBLP:conf/colt/KlivansKM18}. 

An influential recent line of work showed that when the data is drawn from the well studied and highly general class of \textit{hypercontractive} distributions (see Definition \ref{def:hyp}), there exist robust and computationally efficient estimators for regression \cite{DBLP:conf/colt/KlivansKM18, prasad2018robust, conf/soda/DiakonikolasKS19}. 
Several families of natural distributions fall into this category, including Gaussians, strongly log-concave distributions and product distributions on the hypercube. However, both estimators converge to the the true hyperplane (in $\ell_2$-norm) at a sub-optimal rate, as a function of the fraction of corrupted points. 



Given the vast literature on ad-hoc and often incomparable estimators for high-dimensional robust regression, the central question we address in this work is as follows:
\begin{quote}
\centering 
\emph{Does there exist a unified approach to design robust and computationally efficient estimators achieving optimal rates for all linear regression models under mild distributional assumptions? }
\end{quote}

We address the aforementioned question by introducing a framework to design robust estimators for linear regression when the input is drawn from a \textit{hypercontractive} distribution. Our estimators converge to the true hyperplanes at the information-theoretically optimal rate (as a function of the fraction of corrupted data) under various well-studied noise models, including independent and agnostic noise. Further, we show that our estimators can be computed in polynomial time using the \textit{sum-of-squares} convex hierarchy.

We note that, despite decades of progress, prior to our work, estimators achieving optimal convergence rate in terms of the fraction of corrupted points were not known, even with independent noise and access to unbounded computation.

\subsection{Our Results}

We begin by formalizing the regression model we work with. In classical regression, we assume $\calD$ is a distribution over $\R^d \times \R$ and for a vector $\Theta \in \R^d$, the least-squares loss is given by $\err_\calD(\Theta) = \expecf{x,y\sim\calD}{\Paren{ y - x^{\top}\Theta }^2}$. The goal is to learn $\Theta^* = \arg\min_{\Theta} \err_{\calD}(\Theta)$. We assume sample access to $\calD$, and given $n$ i.i.d. samples, we want to obtain a vector $\Theta$ that approximately achieves optimal error, $\err_\calD(\Theta^*)$. 

In contrast to the classical setting, we work in the \textit{strong contamination model}. Here, an adversary has access to the input samples and is allowed to corrupt an $\epsilon$-fraction arbitrarily. Note, the adversary has access to unbounded computation and has knowledge of the estimators we design. We note that this is the most stringent corrupt model and captures Huber contamination, additive corruption, label noise, agnostic learning etc (see \cite{diakonikolas2019recent}). Formally, 


\begin{model}[Robust Regression Model]
\label{model:robust_regression}
Let $\calD$ be a distribution over $\R^d \times R$ such that the marginal distribution over $\R^d$ is centered and has covariance $\Sigma^*$ and let $\Theta^* = \arg\min_\Theta \expecf{x,y\sim \calD}{\Paren{ y- \Iprod{\Theta, x} }^2}$ be the optimal hyperplane for $\calD$. Let $\{ (x^*_1, y^*_1), (x^*_2, y^*_2), \ldots (x^*_n, y^*_n)\}$  be $n$ i.i.d. random variables drawn from $\calD$.   Given $\epsilon>0$, the robust regression model $\calR_{\calD}(\epsilon, \Sigma^*, \Theta^*)$  outputs a set of $n$ samples $\{ (x_1, y_1), \ldots (x_n, y_n) \}$ such that for  at least $(1-\epsilon)n$ points $x_i = x^*_i$ and $y_i = y_i^*$. The remaining $\epsilon n$ points are arbitrary, and potentially adversarial w.r.t. the input and estimator.
\end{model}

A natural starting point is to assume that the marginal distribution over the covariates (the $x$'s above) is heavy-tailed and has bounded, finite covariance. However, we show that there is no robust estimator in this setting, even when the linear model has no noise and the uncorrupted points lie on a line.

\begin{theorem}[Bounded Covariance does not suffice, Theorem \ref{thm:lb_bounded_covariance} informal]
For all $\epsilon >0$, there exist two distributions $\calD_1, \calD_2$ over $\R^d \times \R$ such that the marginal distribution over the covariates has bounded covariance, denoted by $\Sigma^2 = \Theta(1)$,  yet $\Norm{ \Sigma^{1/2}\Paren{\Theta_1 - \Theta_2} }_2 = \Omega(1)$, where $\Theta_1$ and $\Theta_2$ are the optimal hyperplanes for $\calD_1$ and $\calD_2$. 
\end{theorem}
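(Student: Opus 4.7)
The plan is to exhibit two distributions $\calD_1, \calD_2$ on $\R^d \times \R$ that are $\epsilon$-close in total variation yet have well-separated least-squares optima. Once such a pair is in hand, an adversary with an $\epsilon$-fraction budget in Model~\ref{model:robust_regression} can corrupt i.i.d.\ samples from $\calD_1$ to be distributed as i.i.d.\ samples from $\calD_2$ (and vice versa), so no estimator, regardless of sample size or computational power, can be simultaneously consistent on both.

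For the construction I would work in one dimension (and pad with zeros in the remaining $d-1$ coordinates to embed into $\R^d$ without changing any quantity of interest). I would take the covariate marginal, identical under both models, to be the heavy-tailed two-point law $x = 0$ with probability $1-\epsilon$ and $x = 1/\sqrt{\epsilon}$ with probability $\epsilon$; a direct computation gives covariance $\Sigma = \epsilon \cdot (1/\sqrt{\epsilon})^2 = 1$, so the marginal is centered with covariance $\Theta(1)$ as required. I would then set the responses deterministically: $y = x$ under $\calD_1$ and $y = -x$ under $\calD_2$, so that both are noiseless linear models with optimal parameters $\Theta_1 = 1$ and $\Theta_2 = -1$ respectively. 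In particular the uncorrupted samples in each instance lie exactly on a line, matching the ``no noise'' caveat in the statement.

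To finish, I would verify indistinguishability and compute the gap. Both distributions place mass $1-\epsilon$ on $(0,0)$ and differ only on the remaining $\epsilon$-fraction supported at $x = 1/\sqrt{\epsilon}$, so $d_{\mathrm{TV}}(\calD_1, \calD_2) \le \epsilon$. An adversary can therefore relabel the heavy-coordinate samples of one instance to match the other, so the induced input distributions to any estimator are identical; by a standard two-point minimax argument, any estimator must return the same answer (in distribution) on both instances. Since $\Norm{\Sigma^{1/2}(\Theta_1 - \Theta_2)}_2 = |\Theta_1 - \Theta_2| = 2 = \Omega(1)$, any estimator must err by $\Omega(1)$ on at least one of the two instances.

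There is no substantive obstacle beyond tuning the construction; the argument is a standard two-point lower bound whose only delicate ingredient is that the outlier mass of probability $\epsilon$ be placed at scale $1/\sqrt{\epsilon}$, which simultaneously keeps the second moment bounded and endows each candidate hyperplane with enough single-point leverage to fit its own label. Conceptually, the example motivates the stronger hypercontractivity hypothesis of Definition~\ref{def:hyp} used throughout the rest of the paper: hypercontractivity of the covariate distribution precisely forbids this kind of concentrated leverage in which an $\epsilon$-fraction of mass sits at distance $\Theta(1/\sqrt{\epsilon})$ from the origin.
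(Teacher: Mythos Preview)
Your argument is correct and follows the same structural template as the paper: identical covariate marginals under both models, mass $1-\epsilon$ at the origin, the remaining $\epsilon$-mass placed far out, and deterministic labels $y = x$ versus $y = -x$ giving $\Theta_1 = 1$, $\Theta_2 = -1$. The difference is in how the $\epsilon$-mass is allocated. You put it at a single point $1/\sqrt{\epsilon}$; the paper instead spreads it over a Student's $t$-distribution with $\nu = 2+\epsilon$ degrees of freedom. Your choice is more elementary and suffices for the stated theorem. The paper's choice buys one extra thing: the fourth moment of $f_{2+\epsilon}$ is literally infinite, so the covariate law is not $(c,4)$-hypercontractive for \emph{any} finite $c$, which cleanly separates the bounded-covariance setting from the hypercontractive one. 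In your construction the fourth moment is $1/\epsilon$, finite but blowing up as $\epsilon \to 0$, which is still enough to rule out $\epsilon$-independent hypercontractivity constants and hence still proves the theorem.

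One small slip: your marginal is not centered (its mean is $\sqrt{\epsilon}$), contrary to what you wrote and to the standing assumption in Model~\ref{model:robust_regression}. The fix is trivial---symmetrize to $x \in \{\pm 1/\sqrt{\epsilon}\}$ each with probability $\epsilon/2$---and nothing else in the argument changes.
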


The aforementioned result precludes any statistical estimator that converges to the true hyperplane as the fraction of corrupted points tends to $0$. Therefore, we strengthen the distributional assumption and hypercontractive distributions instead.  
\begin{definition}[$(C,k)$-Hypercontractivity]
\label{def:hyp}
A distribution $\calD$ over $\mathbb{R}^d$ is $(C,k)$-hypercontractive for an even integer $k\geq 4$, if for all $r \in [k/2]$, for all $v \in \mathbb{R}^d$,
\[ 
\expecf{x \sim \calD }{ \Iprod{v, x - \expecf{}{x}}^{2r}} \leq \expecf{x \sim \calD }{ C \Iprod{v,x - \expecf{}{x} }^2}^{r} 
\]
\end{definition}
\begin{remark}
Hypercontractivity captures a broad class of distributions, including Gaussian distributions, uniform distributions over the hypercube and sphere, affine transformations of isotropic distributions satisfying Poincare inequalities \cite{KothariSteinhardt17} and strongly log-concave distributions. Further, hypercontractivity is preserved under natural closure properties like affine transformations, products and weighted mixtures \cite{kothari2017outlier}. Further, efficiently computable estimators appearing in this work require \textit{certifiable}-hypercontractivity (Definition \ref{def:certifiable_hypercontractivity}), a strengthening that continues to capture aforementioned distribution classes.
\end{remark}



In this work we focus on the \textit{rate of convergence} of our estimators to the true hyperplane, $\Theta^*$, as a function of the fraction of corrupted points, denoted by $\epsilon$. We measure convergence in both parameter distance ($\ell_2$-distance between the hyperplanes) and least-squares error on the true distribution ($\err_{\calD}$). 

We introduce a simple analytic condition on the relationship between the noise (marginal distribution over $y-x^\top \Theta^*$) and covariates (marginal distribution over $x$) that can  be considered as a proxy for independence of $y-x^\top \Theta^*$ and $x$ :

\begin{definition}[Negatively Correlated Moments]
\label{def:negative_moments}
Given a distribution $\calD$ over $\R^d\times \R$, such that the marginal distribution on $\R^d$ is $(c_k,k)$-hypercontractive, the corresponding regression instance has negatively correlated moments if for all $r \leq k$, and for all $v$, 
\[
\expecf{x,y \sim \calD }{\Iprod{v,x}^r \Paren{y - x^{\top}\Theta^*}^r} \leq \bigO{1} \expecf{x \sim \calD }{\Iprod{v,x}^r} \expecf{x,y \sim \calD}{\Paren{y - x^{\top}\Theta^*}^r}
\]
\end{definition}

Informally, the \textit{negatively correlated moments} condition can be viewed as a polynomial relaxation of independence of random variables. Note, it is easy to see that when the noise  is independent of the covariates, the above definition is satisfied.

\begin{remark}
We show that when this condition is satisfied by the true distribution,  $\calD$, we obtain rates that match the information theoretically optimal rate in a \textit{true linear model}, where the noise (marginal distribution over $y - x^\top \Theta^*$) is independent of the covariates (marginal distribution over $x$). Further, when this condition is not satisfied, we show that there exist distributions for which obtaining rates matching the \textit{true linear model} is impossible. 
\end{remark}

When the distribution over the input is hypercontractive and has negatively correlated moments, we obtain an estimator achieving \textit{rate} proportional to $\epsilon^{1-1/k}$ for parameter recovery. Further, our estimator can be computed efficiently. Thus, our main algorithmic result is as follows:  

\begin{theorem}[Robust Regresssion with Negatively Correlated Noise, Theorem \ref{thm:optimal_efficient_robust_regression} informal]
\label{thm:informal_negative_moment_polytime}
Given $\epsilon>0, k \geq 4$, and $n \geq \Paren{d\log(d)}^{\mathcal{O}(k)}$ samples from $\mathcal{R}_{\calD}(\epsilon,  \Sigma^*, \Theta^*)$, such that $\calD$ is $(c,k)$-certifiably hypercontractive and has negatively correlated moments, there exists an algorithm that runs in $n^{\mathcal{O}(k)}$ time and outputs an estimator $\tilde\Theta$ such that with high  probability,
\[
    \Norm{ (\Sigma^*)^{1/2}\Paren{\Theta^* - \tilde\Theta} }_2  \leq \bigO{\epsilon^{1-1/k}} \Paren{ \err_{\calD}(\Theta^*)^{1/2}  }
\]
and, 
\[
\err_{\calD}(\tilde\Theta) \leq \Paren{1 + \bigO{ \epsilon^{2-2/k}}} \err_{\calD}(\Theta^*) 
\]
\end{theorem}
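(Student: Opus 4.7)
The plan is to apply the sum-of-squares identifiability paradigm for robust estimation. I would design a degree-$O(k)$ SoS program with a $\{0,1\}$-weight vector $w \in \R^n$ selecting $(1-\epsilon)n$ samples and a candidate parameter $\Theta \in \R^d$, subject to polynomial constraints enforcing (i) $(c,k)$-certifiable hypercontractivity of the selected empirical covariate distribution, (ii) the polynomial form of the negatively correlated moments condition applied to the selected sub-sample and to $\Theta$, and (iii) a bound on the weighted empirical $k$-th moment of the residuals $y_i - x_i^\top\Theta$ matching what $\err_\calD(\Theta^*)$ should induce. Standard concentration at $n = (d\log d)^{O(k)}$ ensures that the true weights and parameter $(w^*, \Theta^*)$ furnish a feasible solution with overwhelming probability (after transferring both certifiable hypercontractivity and the polynomial NCM inequality from $\calD$ to the good sub-sample), and I would extract the final estimator by pseudoexpectation rounding $\tilde\Theta = \tilde{\mathbb{E}}[\Theta]$.

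First I would prove a classical identifiability lemma: any two feasible solutions $(w, \Theta)$ and $(w', \Theta')$ agree on at least $(1-2\epsilon)n$ indices, so on the common support both empirical residuals are small. Writing $v = \Theta - \Theta'$ and using $x_i^\top v = (y_i - x_i^\top\Theta') - (y_i - x_i^\top\Theta)$, I would bound the empirical $\ell_2$ norm of $x_i^\top v$ on the $2\epsilon$-fraction of disagreement, upgrade it to $\Norm{(\Sigma^*)^{1/2}v}_2^2$ through a H\"older-type inequality and certifiable hypercontractivity, and then control the resulting cross moments of the form $\expecf{}{\Iprod{v,x}^r(y - x^\top\Theta^*)^r}$. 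Naively bounding these cross moments via Cauchy--Schwarz and hypercontractivity on the noise $y - x^\top\Theta^*$ yields only the suboptimal rate $\epsilon^{1-2/k}$; the negatively correlated moments hypothesis is precisely the inequality that factorizes $\expecf{}{\Iprod{v,x}^r(y - x^\top\Theta^*)^r}$ into $O(1)\expecf{}{\Iprod{v,x}^r}\cdot\expecf{}{(y - x^\top\Theta^*)^r}$, saving exactly one factor of $\epsilon^{1/k}$ and producing the target parameter rate $\epsilon^{1-1/k}$.

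The main obstacle, and the central technical contribution of the paper, is lifting this identifiability argument into a degree-$O(k)$ SoS proof. H\"older for even-degree polynomials has a standard SoS analogue, and $(c,k)$-certifiable hypercontractivity is designed to plug into SoS by assumption; the nontrivial step is reformulating Definition~\ref{def:negative_moments}, which is only asserted for each fixed scalar direction $v$, as a polynomial inequality in the indeterminate $v = \Theta - \Theta'$ that SoS can consume uniformly. Once identifiability lies entirely within SoS, pseudoexpectation rounding is routine: applying the SoS inequality under the feasible pseudoexpectation $\tilde{\mathbb{E}}$ and coupling with the true $(w^*, \Theta^*)$ yields
\[
\Norm{(\Sigma^*)^{1/2}(\Theta^* - \tilde\Theta)}_2 \;\leq\; O(\epsilon^{1-1/k})\,\err_\calD(\Theta^*)^{1/2}.
\]

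Finally, the $\err_\calD$ guarantee is a one-line consequence of the parameter bound: expanding
\[
\err_\calD(\tilde\Theta) - \err_\calD(\Theta^*) \;=\; \expecf{x,y\sim\calD}{(x^\top(\tilde\Theta - \Theta^*))^2} + 2\,\expecf{x,y\sim\calD}{x^\top(\tilde\Theta - \Theta^*)\cdot(y - x^\top\Theta^*)},
\]
the first-order optimality of $\Theta^*$ kills the linear cross term, leaving only $\Norm{(\Sigma^*)^{1/2}(\tilde\Theta - \Theta^*)}_2^2$, which the parameter bound controls by $O(\epsilon^{2-2/k})\err_\calD(\Theta^*)$, as required.
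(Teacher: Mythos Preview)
Your overall architecture and the final excess-risk computation are correct and match the paper.  But there is a genuine gap in the constraint system you describe: you omit the \emph{gradient constraint}
\[
\Bigl\langle v,\;\tfrac{1}{n}\sum_{i\in[n]} x'_i\bigl(\langle x'_i,\Theta\rangle - y'_i\bigr)\Bigr\rangle^{k}\;=\;0,
\]
which the paper enforces explicitly on the SoS variables and singles out as its key new algorithmic ingredient.  Without it, your identifiability step does not yield the claimed rate.  The paper's decomposition starts from $\langle v,\hat\Sigma(\hat\Theta-\Theta)\rangle$ and splits into two gradient-type terms; first-order optimality of the empirical minimizer $\hat\Theta$ kills one, and the \emph{enforced} gradient constraint on the SoS variable $\Theta$ kills the other, so that all surviving mass lives on the $O(\epsilon)$-fraction of disagreement.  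Only then does H\"older extract the factor $\epsilon^{1-1/k}$, with NCM factorizing the remaining $k$-th cross moment.  In your sketch, by contrast, ``on the common support both empirical residuals are small'' gives only $(x_i^\top v)^2\le 2r_i^2+2(r'_i)^2$, hence $\|\Sigma^{1/2}v\|_2^2=O(\err_\calD(\Theta^*))$ with \emph{no} $\epsilon$-dependence from that portion at all.  NCM alone does not upgrade $\epsilon^{1-2/k}$ to $\epsilon^{1-1/k}$; it only does so after both gradient terms have localized the problem to the small set.

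Two smaller discrepancies are also worth fixing.  First, the paper introduces proxy variables $x'_i,y'_i$ (free where $w_i=0$) so that the hypercontractivity, NCM, and gradient constraints are stated over a full $n$-point sample rather than a weighted sub-sample; this is what keeps the SoS proof at degree $O(k)$.  Second, your constraint (iii) presupposes knowledge of $\err_\calD(\Theta^*)$; the paper instead \emph{minimizes} $\pE\bigl[(\tfrac{1}{n}\sum_i (y'_i-\langle x'_i,\Theta\rangle)^2)^k\bigr]$ as the objective and enforces hypercontractivity of the residuals as a constraint, then shows (Lemma~\ref{lem:bounding_opt_and_gen_error}) that the truth is a feasible witness attaining the right objective value.
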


\begin{remark}
We note that prior work does not draw a distinction between the independent and dependent noise models. 
In comparison (see Table \ref{tab:sumarry_of_res}), Klivans, Kothari and Meka \cite{DBLP:conf/colt/KlivansKM18} obtained a sub-optimal least-squares error scales proportional to $\epsilon^{1-2/k}$. For the special case of $k=4$, Prasad et. al.~\cite{prasad2018robust} obtain least squares error proportional to $O(\epsilon \kappa^2(\Sigma))$, where $\kappa$ is the condition number. In very recent independent work Zhu, Jiao and Steinhardt~\cite{zhu2020robust} obtained a sub-optimal least-squares error scales proportional to $\epsilon^{2-4/k}$. 
\end{remark}


Further, we show that the rate we obtained in Theorem \ref{thm:informal_negative_moment_polytime}
is information-theoretically optimal, even when the noise and covariates are independent: 

\begin{theorem}[Lower Bound for Independent Noise,  Theorem \ref{thm:lower_bound_independent_noise} informal ]
\label{thm:informal_independent_noise_lowerbound}
For any $\epsilon >0$, there exist two distributions $\calD_1, \calD_2$ over $\R^2 \times\R $ such that the marginal distribution over $\R^2$ has covariance $\Sigma$ and is $(c,k)$-hypercontractive for both distributions, and  yet $\Norm{\Sigma^{1/2}(\Theta_{1} - \Theta_{2} ) }_2 =\bigOmega{ \epsilon^{1-1/k} \sigma } $, where $\Theta_1 , \Theta_2$ are the optimal hyperplanes for $\calD_1 $ and $\calD_2$ respectively, $\sigma = \max( \err_{\calD_1}(\Theta_1),  \err_{\calD_2}(\Theta_2))$ and the noise is uniform over $[-\sigma, \sigma]$. Further, $\abs{\err_{\calD_1}(\Theta_2) - \err_{\calD_1}(\Theta_1)} = \bigOmega{ \epsilon^{2-2/k} \sigma^2 } $.
\end{theorem}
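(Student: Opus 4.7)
The plan is to exhibit a pair of hypercontractive distributions whose optimal regressors differ by the stated amount yet coincide outside an $\epsilon$-fraction of their support, so that $\epsilon$-strong contamination renders them information-theoretically indistinguishable. Working in one dimension and embedding trivially in $\R^2$ via a redundant coordinate, I set $M := \epsilon^{-1/k}$ and let the shared covariate marginal be $\mu := (1-\epsilon)\mu_0 + \tfrac{\epsilon}{2}\delta_M + \tfrac{\epsilon}{2}\delta_{-M}$, where $\mu_0$ is any centered, constant-variance $k$-hypercontractive base (e.g., the standard Gaussian). On the base support both $\calD_1$ and $\calD_2$ take $y = \xi$ with $\xi \sim U[-\sigma,\sigma]$ independent of $x$. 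The two distributions disagree only on the leverage atoms: $\calD_1$ places $(x,y)=(M,+\sigma)$ and $(-M,-\sigma)$, while $\calD_2$ places $(M,-\sigma)$ and $(-M,+\sigma)$.

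To verify hypercontractivity of $\mu$, decompose $\mathbb{E}_{\mu}[X^{2r}] = (1-\epsilon)\mathbb{E}_{\mu_0}[X^{2r}] + \epsilon M^{2r}$; since $M^{2r}=\epsilon^{-2r/k}$, the leverage contribution $\epsilon^{1-2r/k}$ is bounded by $1$ for every $r \le k/2$, and $\mathbb{E}[X^2]=\Theta(1)$ is dominated by the base, so the ratio $\mathbb{E}[X^{2r}]/\mathbb{E}[X^2]^r$ is uniformly bounded by an absolute constant $c=c(\mu_0,k)$. Next, since $y\perp x$ on the base, the base contributes nothing to $\mathbb{E}_{\calD_i}[XY]$, while the leverage atoms give $\mathbb{E}_{\calD_1}[XY] = \tfrac{\epsilon}{2}\cdot M\sigma + \tfrac{\epsilon}{2}\cdot(-M)(-\sigma) = \epsilon M\sigma = \epsilon^{1-1/k}\sigma$, and symmetrically $\mathbb{E}_{\calD_2}[XY] = -\epsilon^{1-1/k}\sigma$. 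Dividing by $\mathbb{E}[X^2]=\Theta(1)$ yields $\Theta^*_1-\Theta^*_2 = \Theta(\epsilon^{1-1/k}\sigma)$ and hence $\|\Sigma^{1/2}(\Theta^*_1-\Theta^*_2)\|_2 = \Omega(\epsilon^{1-1/k}\sigma)$.

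For indistinguishability, note that $\calD_1$ and $\calD_2$ agree on the base (mass $1-\epsilon$) and place $\tfrac{\epsilon}{2}$-mass on pairwise disjoint leverage atoms, giving $\mathrm{TV}(\calD_1,\calD_2)=\epsilon$, which precisely matches the adversary's budget: writing $\calD_i = (1-\epsilon)R + \epsilon R_i$ with common part $R$ and disjoint leverage parts $R_1, R_2$, the adversary can replace the $\epsilon$-fraction $\epsilon R_i$ by $\epsilon R_{3-i}$ to produce identical observed distributions under either hypothesis. No estimator can distinguish the two instances and must therefore incur at least half the parameter gap on one of them. The excess-error bound follows immediately from the quadratic identity for the population minimizer, $\err_{\calD_1}(\Theta^*_2)-\err_{\calD_1}(\Theta^*_1) = \|\Sigma^{1/2}(\Theta^*_1-\Theta^*_2)\|_2^2 = \Omega(\epsilon^{2-2/k}\sigma^2)$.

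The main obstacle is reconciling the construction with the theorem's stipulation that the noise $y - x^\top\Theta^*_i$ be uniform on $[-\sigma,\sigma]$. On the base this holds up to the vanishing shift by $\Theta^*_i x$, which is of order $\epsilon^{1-1/k}\sigma$ for typical $x$ and $\epsilon^{1-2/k}\sigma$ at the leverage; at the leverage atoms the noise concentrates near the boundary $\pm\sigma$, producing a marginal that is $(1-\epsilon)\,U[-\sigma,\sigma]$ plus $\epsilon$ mass near the boundary. A routine modification---for instance, smearing each leverage label uniformly over a narrow interval of length $\eta$ adjacent to $\pm\sigma$ and compensating by thinning the base noise density on the same intervals---restores exact uniformity of the noise marginal while preserving $(c,k)$-hypercontractivity, the $\mathrm{TV}$ bound, and the leading order of both the parameter gap and the excess error.
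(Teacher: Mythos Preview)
Your construction has a genuine gap: it does \emph{not} produce independent noise, which is the defining hypothesis of this theorem. On your base support you set $y=\xi$ with $\xi\sim U[-\sigma,\sigma]$, but the optimal regressor $\Theta_i^*$ is nonzero (of order $\epsilon^{1-1/k}\sigma$), so the residual $y-x\Theta_i^*=\xi-x\Theta_i^*$ depends on $x$. On the leverage atoms the residual is deterministic and differs across atoms. Your proposed fix (smearing the leverage labels and thinning the base) addresses only the \emph{marginal} law of the residual, not its conditional independence from $x$; the theorem (and the matching upper bound) is about the true linear model $y=x^\top\Theta^*+\omega$ with $\omega\perp x$.

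This is not a cosmetic issue but a structural obstruction to your whole approach. Suppose you did impose $y=x\Theta_i^*+\omega$ with $\omega\sim U[-\sigma,\sigma]$ independent, keeping your shared constant-variance covariate law $\mu$. Then conditionally on $x$, the two label laws are uniform intervals shifted by $x(\Theta_1^*-\Theta_2^*)$, so
\[
\mathrm{TV}(\calD_1,\calD_2)=\mathbb{E}_{\mu}\Big[\min\big(1,\ |x|\,|\Theta_1^*-\Theta_2^*|/(2\sigma)\big)\Big]\ \ge\ c_0\,\frac{|\Theta_1^*-\Theta_2^*|}{\sigma}
\]
from the base alone, since $\mathbb{E}_{\mu_0}|x|=\Theta(1)$. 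Forcing $\mathrm{TV}\le\epsilon$ then gives $|\Theta_1^*-\Theta_2^*|=O(\epsilon\sigma)$, far short of $\epsilon^{1-1/k}\sigma$. The paper's construction sidesteps exactly this: the relevant covariate coordinate has a base that is uniform on $[-\epsilon\sigma,\epsilon\sigma]$, so the shift $x_2(\Theta_1-\Theta_2)$ is $O(\epsilon\sigma)$ on the base and the TV contribution there stays $O(\epsilon)$, while the leverage atoms at $\pm\epsilon^{-1/k}$ carry the rest. The price is that $\mathrm{Var}(x_2)=\Theta(\epsilon^{1-2/k})$ and the hypercontractivity constant $c_k$ scales with $\epsilon$; this is not a defect but precisely what is needed to match the $\sqrt{c_k}\,\epsilon^{1-1/k}$ dependence in the upper bound.
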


Next, we consider the setting where the noise is allowed to arbitrary, and need not have negatively correlated moments with the covariates. A simple modification to our algorithm and analysis yields an efficient estimator that obtains rate proportional to $\epsilon^{1-2/k}$ for parameter recovery. 

\begin{corollary}[Robust Regresssion with Dependent Noise, Corollary \ref{cor:dependent_noise} informal]
\label{cor:informal_arbitrary_noise_polytime}
Given $\epsilon>0, k\geq 4$ and $n\geq\Paren{d\log(d)}^{\mathcal{O}(k)}$ samples from $\mathcal{R}_{\calD}(\epsilon,  \Sigma^*, \Theta^*)$, such that $\calD$ is $(c,k)$-certifiably hypercontractive, there exists an algorithm that runs in $n^{\mathcal{O}(k)}$ time and outputs an estimator $\tilde\Theta$ such that with probability 9/10,
\[
    \Norm{ (\Sigma^*)^{1/2}\Paren{\Theta^* - \tilde\Theta} }_2  \leq \bigO{\epsilon^{1-2/k}} \Paren{ \err_{\calD}(\Theta^*)^{1/2}  }
\]
and, 
\[
\err_{\calD}(\tilde\Theta) \leq \Paren{1 + \bigO{ \epsilon^{2-4/k}}} \err_{\calD}(\Theta^*) 
\]
\end{corollary}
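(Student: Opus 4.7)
The plan is to obtain Corollary \ref{cor:informal_arbitrary_noise_polytime} as a direct modification of the algorithm and analysis underlying Theorem \ref{thm:informal_negative_moment_polytime}. Recall that the main theorem rests on two ingredients: an SoS program whose feasible pseudo-distributions certify that a large subset of the input is $(c,k)$-hypercontractive together with polynomial-inequality axioms that exploit the negatively-correlated-moments condition to bound cross moments of the form $\expecf{}{\Iprod{v, x}^r (y - x^\top\Theta^*)^r}$ by a product of covariate and noise moments. For the corollary, I would use exactly the same SoS relaxation, the same candidate set selection, and the same rounding; the only change is to replace each invocation of the negatively-correlated-moments axiom by an application of SoS Cauchy--Schwarz.

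Concretely, the analysis of Theorem \ref{thm:informal_negative_moment_polytime} reduces (via the standard SoS identifiability/duality argument) to controlling $\expecf{\tilde{\calD}}{\Iprod{v, x}^r (y - x^\top \Theta^*)^r}$, where $\tilde{\calD}$ is the empirical distribution on the subset selected by the SoS program and $v = \Theta^* - \tilde\Theta$. Under negative correlation, this factors (up to constants) as $\expecf{}{\Iprod{v, x}^r}\cdot \expecf{}{(y - x^\top\Theta^*)^r}$, and certifiable hypercontractivity of order $k$ then bounds the first factor using moments of order $r \leq k$, yielding the rate $\epsilon^{1-1/k}$. In the absence of negative correlation, I would instead apply the SoS Cauchy--Schwarz inequality $\expecf{}{fg} \leq \expecf{}{f^2}^{1/2}\expecf{}{g^2}^{1/2}$ with $f = \Iprod{v, x}^r$ and $g = (y - x^\top\Theta^*)^r$. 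Certifiable hypercontractivity then controls the covariate factor at order $2r$, which forces us to restrict to $2r \leq k$, i.e.\ $r \leq k/2$. Plugging this reduced effective moment order into the same interpolation argument that produced $\epsilon^{1-1/k}$ for parameter distance now yields $\epsilon^{1-2/k}$, and squaring gives the excess-loss rate $\epsilon^{2-4/k}$, both matching the statement of the corollary.

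The main technical obstacle is verifying that the substitution of Cauchy--Schwarz for the negatively-correlated-moments axiom remains sum-of-squares certifiable at low degree, so that the pseudo-expectation machinery used to convert dual certificates into an efficient rounding scheme still applies. This requires the SoS program to carry degree-$\Theta(k)$ moments on both the residuals and the covariates, and requires that the Cauchy--Schwarz step be written as an explicit SoS identity in the program variables (standard, but needs to be checked). Once this SoS-certification step is in hand, the remainder of the argument --- translating a bound on $\expecf{\tilde{\calD}}{\Iprod{\Theta^* - \tilde\Theta, x}^2}$ into control of $\Norm{(\Sigma^*)^{1/2}(\Theta^* - \tilde\Theta)}{2}$ via the certified empirical covariance, and then into control of $\err_{\calD}(\tilde\Theta)$ via the bias-variance decomposition --- transfers unchanged from the proof of Theorem \ref{thm:informal_negative_moment_polytime}.
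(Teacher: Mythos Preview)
Your overall strategy---replace each use of the negatively-correlated-moments condition by an SoS Cauchy--Schwarz step, thereby halving the effective moment order available to H\"older and degrading the rate from $\epsilon^{1-1/k}$ to $\epsilon^{1-2/k}$---is exactly what the paper does (see the remark after the statement of the efficient-estimator corollary and Lemma~\ref{lem:key_sos_lemma_arbitrary_noise} in Appendix~\ref{sec:arbitrary_noise_polytime}). Your reading of the mechanism and the resulting rates is correct.

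There is, however, one concrete gap. You say you would keep ``exactly the same SoS relaxation'' and merely not \emph{invoke} the negatively-correlated-moments axiom in the derivation. That does not work. The negatively-correlated-moments inequality is a \emph{constraint} of the program $\calA_{\epsilon,\lambda_k}$, and feasibility (Lemma~\ref{lem:soundness}) is established by exhibiting the uncorrupted samples $(x^*_i,y^*_i,\hat\Theta)$ as a witness; the paper's verification that this witness satisfies the constraint relies precisely on $\calD$ having negatively correlated moments (equation~\eqref{eqn:population_ncm}). Without that assumption on $\calD$, the uncorrupted-sample witness need not satisfy the constraint---and in fact for noise positively correlated with the covariates no $(1-\epsilon)$-subset need satisfy it---so the SDP may simply be infeasible and the algorithm returns nothing. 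The paper therefore \emph{removes} the negatively-correlated-moments constraint from the program in this setting, not just from the analysis. Once it is dropped, feasibility follows from the hypercontractivity constraints alone, and your Cauchy--Schwarz substitution in the key SoS lemma goes through to give the claimed rates.
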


Further, we show that the dependence on $\epsilon$ is again information-theoretically optimal: 

\begin{theorem}[Lower Bound for Dependent Noise, Theorem \ref{thm:lower_bound_dependent_noise} informal]
\label{thm:informal_dependent_noise_lowerbound}
For any $\epsilon >0$, there exist two distributions $\calD_1, \calD_2$ over $\R^2 \times\R $ such that the marginal distribution over $\R^2$ has covariance $\Sigma$ and is $(c,k)$-hypercontractive for both distributions, and  yet $\Norm{\Sigma^{1/2}(\Theta_{1} - \Theta_{2} ) }_2 =\bigOmega{ \epsilon^{1-2/k} \sigma}$, where $\Theta_1 , \Theta_2$ be the optimal hyperplanes for $\calD_1 $ and $\calD_2$ respectively and $\sigma = \max( \err_{\calD_1}(\Theta_1),  \err_{\calD_2}(\Theta_2))$. Further, $\abs{\err_{\calD_1}(\Theta_2) - \err_{\calD_1}(\Theta_1)} = \bigOmega{ \epsilon^{2-4/k} \sigma^2 } $.
\end{theorem}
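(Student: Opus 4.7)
The plan is to exhibit two distributions $\calD_1, \calD_2$ on $\R^2 \times \R$ that share the same $x$-marginal (and hence the same covariance $\Sigma$ and the same hypercontractivity constants), are $\epsilon$-close in total variation distance, yet whose least-squares optima differ by $\bigOmega{\epsilon^{1-2/k}}$ in the $\Sigma^{1/2}$-norm. Since the two distributions agree on $(1-\epsilon)$ of their mass, an adversary can couple a corrupted $n$-sample from $\calD_1$ with an uncorrupted $n$-sample from $\calD_2$ and vice versa; a standard two-point argument then shows that no estimator can distinguish the scenarios and so must incur $\ell_2$-error at least half the separation on at least one world.

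Fix an absolute constant $\alpha := 1$ and set $R := \epsilon^{-1/k}$. For $i \in \{1,2\}$, define $\calD_i$ as the mixture that, with probability $1-\epsilon$, draws $x \sim \mathcal{N}(0, I_2)$ and independent $y \sim \mathcal{N}(0, 1)$, and, with probability $\epsilon$, places a deterministic atom at $(x,y) = \Paren{R\, e_1,\, (-1)^{i-1}\alpha R}$. The two distributions differ only in the sign of $y$ at the bad atom, so $\mathrm{TV}(\calD_1, \calD_2) = \epsilon$. Both share the same $x$-marginal, with covariance $\Sigma = \mathrm{diag}\Paren{1-\epsilon+\epsilon R^2,\ 1-\epsilon}$ whose entries are $1 + \bigO{\epsilon^{1-2/k}}$ and $1 - \bigO{\epsilon}$ respectively. $(C,k)$-hypercontractivity with an absolute constant $C$ follows from the termwise bound $\expecf{x}{\Iprod{v,x}^{2r}} \leq c_r \Norm{v}_2^{2r} + \epsilon R^{2r} v_1^{2r}$ together with the critical choice $R = \epsilon^{-1/k}$, which yields $\epsilon R^{2r} = \epsilon^{1-2r/k} \leq 1$ for every $r \leq k/2$; hence the heavy atom is absorbed by the Gaussian bulk in $\Paren{\expecf{}{\Iprod{v,x}^2}}^r$.

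Next I solve the normal equations $\Theta_i = \Sigma^{-1}\expecf{(x,y)\sim\calD_i}{xy}$: the Gaussian part contributes zero and the atom contributes $(-1)^{i-1}\alpha\epsilon R^2 e_1$, so $\Theta_i = (-1)^{i-1}\frac{\alpha\epsilon R^2}{1-\epsilon+\epsilon R^2}\, e_1$ and $\Norm{\Sigma^{1/2}\Paren{\Theta_1 - \Theta_2}}_2 = 2\sqrt{\Theta_1^{\top}\Sigma\Theta_1} = \bigOmega{\epsilon^{1-2/k}}$. Expanding $\err_{\calD_i}(\Theta) = (1-\epsilon)\Paren{1 + \Norm{\Theta}_2^2} + \epsilon R^2 (\alpha - \Iprod{\Theta, e_1})^2$ and plugging in $\Theta_i$ gives $\err_{\calD_i}(\Theta_i) = 1 + \bigO{\epsilon^{1-2/k}}$, so $\sigma$ is an absolute constant and the separation is indeed $\bigOmega{\epsilon^{1-2/k}\sigma}$. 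For the error-gap inequality, $\err_{\calD_1}$ is a quadratic in $\Theta$ with Hessian $2\Sigma$ minimized at $\Theta_1$, whence $\err_{\calD_1}(\Theta_2) - \err_{\calD_1}(\Theta_1) = \Paren{\Theta_1 - \Theta_2}^{\top}\Sigma\Paren{\Theta_1 - \Theta_2} = \bigOmega{\epsilon^{2-4/k}} = \bigOmega{\epsilon^{2-4/k}\sigma^2}$.

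The main obstacle is converting the TV-closeness into a formal information-theoretic impossibility under the strong contamination model of Model~\ref{model:robust_regression}. The key observation is that because $\calD_1$ and $\calD_2$ agree on their good $(1-\epsilon)$-part, an adversary handed $n$ i.i.d. samples from $\calD_1$ can, with high probability, identify the $\approx \epsilon n$ bad atoms (exactly the points with $x = R e_1$) and flip the sign of their $y$-coordinate, producing a dataset whose law is identical to $n$ i.i.d. samples from $\calD_2$. Consequently, any estimator's output distribution is identical in the two worlds and its $\ell_2$-error must be at least half the $\Sigma^{1/2}$-separation on at least one of them. A secondary subtlety is verifying hypercontractivity uniformly in $v \in \R^2$: for directions $v$ orthogonal to $e_1$ the atom contributes nothing and the Gaussian bulk handles everything, while for $v$ aligned with $e_1$ the constraint $\epsilon R^k \leq 1$ is exactly the critical scaling and is precisely why the rate cannot be improved beyond $\epsilon^{1-2/k}$ in the dependent-noise regime.
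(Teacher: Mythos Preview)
Your construction is correct and yields the stated lower bound, but it differs from the paper's. The paper (Theorem~\ref{thm:lower_bound_dependent_noise}) works out only the case $k=4$: it takes $x_1$ uniform on $[-1,1]$, $x_2$ a mixture of $\mathrm{Unif}[-1,1]$ with two symmetric atoms at $\pm\epsilon^{-1/4}$, and sets $y=x_2$ for $\calD_1$ while $\calD_2$ zeroes out $y$ exactly at the atoms; the separation then comes from the difference in $\expecf{}{x_2 y}$. Your route instead mixes a Gaussian bulk (with independent Gaussian label) with a single heavy atom at $\epsilon^{-1/k}e_1$ carrying label $\pm\alpha\epsilon^{-1/k}$, and flips only the label's sign between $\calD_1$ and $\calD_2$. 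Your version has two advantages: it handles general $k$ directly rather than asserting the extension, and hypercontractivity of the Gaussian bulk is immediate. One small point to clean up: your $x$-marginal has mean $\epsilon^{1-1/k}e_1\neq 0$, so it is not centered as Model~\ref{model:robust_regression} demands and the hypercontractivity definition~\ref{def:hyp} is phrased in centered moments; this is easily repaired by symmetrizing the atom to $\pm R e_1$ with matching label signs, or simply noted as a lower-order $o(1)$ shift that does not affect any of the $\Omega(\cdot)$ conclusions. The last paragraph of your proposal (the estimator indistinguishability argument) is correct but goes beyond what the theorem asks, which is only the existence of $\calD_1,\calD_2$ with the stated properties.
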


\begin{table}[t!]
\begin{centering}
\begin{tabular}{|l|c|c|}
\hline
\textbf{Estimator}                                                                                                       & \textbf{Independent Noise}                                                           & \textbf{Arbitrary Noise}                                                             \\ \hline
\begin{tabular}[c]{@{}l@{}}Prasad et. al. \cite{prasad2018robust},\\ Diakonikolas et. al. \cite{diakonikolas2018sever} \end{tabular}                                 & $ \epsilon ~\kappa^2 $ (only $k=4$)  & $ \epsilon ~\kappa^2 $ (only $k=4$)                                                                      \\ \hline
\begin{tabular}[c]{@{}l@{}}Klivans, Kothari and Meka\\ \cite{DBLP:conf/colt/KlivansKM18}\end{tabular}                              & $\epsilon^{1-2/k}$                                                          & $\epsilon^{1-2/k}$                                                                                                                                \\ \hline
\begin{tabular}[c]{@{}l@{}}Zhu, Jiao and Steinhardt\\ \cite{zhu2020robust}\end{tabular}                                  & $\epsilon^{2-4/k}$                                                          & $\epsilon^{2-4/k}$                                                                                                                               \\ \hline
\begin{tabular}[c]{@{}l@{}} \textbf{Our Work}\\ Thm \ref{thm:informal_negative_moment_polytime}, Cor \ref{cor:informal_arbitrary_noise_polytime} \end{tabular}     & $\epsilon^{2-2/k}$                                                          & $\epsilon^{2-4/k}$                                                                                                                                \\ \hline
\begin{tabular}[c]{@{}l@{}}\textbf{Lower Bounds}\\ Thm \ref{thm:informal_independent_noise_lowerbound}, Thm \ref{thm:informal_dependent_noise_lowerbound}\end{tabular} & $\epsilon^{2-2/k}$      & $\epsilon^{2-4/k}$                                                                                  \\ \hline
\end{tabular}
\caption{Comparison of convergence rate (for least-squares error) achieved by various computationally efficient estimators for Robust Regression, when the underlying distribution is $(c_k,k)$-hypercontractive. }
\label{tab:sumarry_of_res}
\end{centering}
\end{table}

\paragraph{Applications for Gaussian Covariates.}
The special case where the marginal distribution over $x$ is Gaussian has received considerable interest recently \cite{diakonikolas2018efficient, diakonikolas2018sever}. We note that our estimators extend to the setting of Gaussian covariates, since the uniform distribution over samples from $\calN(0,\Sigma)$ are $\Paren{\bigO{k}, \bigO{k}}$-certifiably hypercontractive for all $k$ (see Section 5 in Kothari and Steurer \cite{kothari2017outlier}). As a consequence, instantiating Corollary \ref{cor:informal_arbitrary_noise_polytime} with $k = \log(1/\epsilon)$ yields the following:

\begin{corollary}[Robust Regression with Gaussian Covariates]
Given $\epsilon>0$ and $n \geq \Paren{d\log(d)}^{\mathcal{O}(log(1/\epsilon))}$ samples from $\mathcal{R}_{\calN}(\epsilon,  \Sigma^*, \Theta^*)$, such that the marginal distribution over the $x$'s is $\calN(0,\Sigma^*)$, there exists an algorithm that runs in $n^{\mathcal{O}(\log(1/\epsilon)}$ time and outputs an estimator $\tilde\Theta$ such that with high  probability,
\[
    \Norm{ (\Sigma^*)^{1/2}\Paren{\Theta^* - \tilde\Theta} }_2  \leq \bigO{\epsilon \log(1/\epsilon)} \Paren{ \err_{\calN}(\Theta^*) }^{1/2}
\]
and, 
\[
\err_{\calN}(\tilde\Theta) \leq \Paren{1 + \bigO{ \Paren{\epsilon \log(1/\epsilon)}^2}} \err_{\calN}(\Theta^*) 
\]
\end{corollary}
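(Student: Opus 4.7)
The plan is to derive this as a direct instantiation of Corollary \ref{cor:informal_arbitrary_noise_polytime}, specialized to the Gaussian case by taking $k = \log(1/\epsilon)$. The first step is to certify that the Gaussian marginal meets the hypothesis of the general theorem. By the SOS proof of Gaussian moment bounds from Kothari and Steurer (Section 5 of \cite{kothari2017outlier}), $n$ i.i.d.\ samples from $\calN(0,\Sigma^*)$ yield an empirical distribution that is $(\bigO{k},k)$-certifiably hypercontractive with high probability, provided $n \geq (d\log d)^{\bigO{k}}$. Crucially, the constant $c_k$ appearing in the hypercontractivity definition scales linearly with $k$, which will drive the final logarithmic factor in the stated bound.

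Next, I would invoke Corollary \ref{cor:informal_arbitrary_noise_polytime} with this value of $k$. The corollary does not assume the negatively correlated moments condition, which is appropriate here since no independence assumption on the noise is being made. Instantiating its conclusions with $k = \log(1/\epsilon)$, the parameter recovery rate becomes
\[
\Norm{(\Sigma^*)^{1/2}(\Theta^* - \tilde\Theta)}_2 \leq \bigO{c_k \cdot \epsilon^{1 - 2/k}} \cdot \err_{\calN}(\Theta^*)^{1/2},
\]
and the least-squares excess error becomes $\bigO{c_k^2 \cdot \epsilon^{2 - 4/k}} \cdot \err_{\calN}(\Theta^*)$, where $c_k = \bigO{k}$ tracks the hypercontractivity constant propagated through the analysis of Corollary \ref{cor:informal_arbitrary_noise_polytime}.

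The final step is elementary simplification. For $k = \log(1/\epsilon)$, we have
\[
\epsilon^{1 - 2/k} = \epsilon \cdot \epsilon^{-2/\log(1/\epsilon)} = \epsilon \cdot e^{2} = \bigO{\epsilon},
\]
and similarly $\epsilon^{2 - 4/k} = \bigO{\epsilon^2}$. Combined with $c_k = \bigO{\log(1/\epsilon)}$, this yields the claimed $\bigO{\epsilon \log(1/\epsilon)}$ parameter recovery rate and $\bigO{(\epsilon\log(1/\epsilon))^2}$ least-squares rate. The sample complexity $(d\log d)^{\bigO{k}}$ and runtime $n^{\bigO{k}}$ translate directly into $(d\log d)^{\bigO{\log(1/\epsilon)}}$ and $n^{\bigO{\log(1/\epsilon)}}$ respectively.

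There is no real obstacle here, since the corollary is essentially a plug-in application of the general theorem. The only subtlety worth emphasizing is verifying that the dependence of the error bound of Corollary \ref{cor:informal_arbitrary_noise_polytime} on the hypercontractivity constant $c_k$ is only polynomial (linear suffices for the parameter bound, quadratic for the excess loss); otherwise the exponent $k$ in the power of $c_k$ could overwhelm the gain from $\epsilon^{-O(1/k)} = \bigO{1}$ and spoil the $\bigO{\epsilon \log(1/\epsilon)}$ rate. Assuming this is the case in the proof of the general corollary, the Gaussian corollary follows immediately.
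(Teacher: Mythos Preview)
Your proposal is correct and follows exactly the same route as the paper: the paper states in one line that the corollary follows by instantiating Corollary~\ref{cor:informal_arbitrary_noise_polytime} with $k=\log(1/\epsilon)$, using the fact (from Section~5 of \cite{kothari2017outlier}) that Gaussians are $(\bigO{k},k)$-certifiably hypercontractive. Your write-up simply spells out the arithmetic ($\epsilon^{-2/k}=e^2=\bigO{1}$, $c_k=\bigO{\log(1/\epsilon)}$) and correctly flags that the dependence on $c_k$ in the general corollary must be polynomial, which indeed it is (the formal statement in Section~\ref{sec:efficient_estimator} shows a linear $\lambda_k$ factor).
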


We note that our estimators obtain the rate matching recent work for Gaussians, albeit in quasi-polynomial time. In comparison, Diakonikolas, Kong and Stewart \cite{diakonikolas2018efficient} obtain the same rate in polynomial time, when the noise is independent of the covariates. We note that obtaining the optimal rate  for Gaussian covariates (shaving the additional $\log(1/\epsilon)$ factor) remains an outstanding open question.





\subsection{Related Work.}

\paragraph{Robust Statistics.} Computationally efficient estimators for robust statistics in high dimension have been extensively studied, following the initial work on robust mean estimation \cite{diakonikolas2016robust, lai2016agnostic}. We focus on literature regarding robust regression and sum-of-squares. We refer the reader to recent surveys and theses for an extensive discussion of the literature on robust statistics \cite{raghavendra2018high, li2018principled, steinhardt2018robust, diakonikolas2019recent}.

\paragraph{Robust Regression.}
Computationally efficient estimators for robust linear regression were proposed by~\cite{prasad2018robust,DBLP:conf/colt/KlivansKM18,DBLP:conf/icml/DiakonikolasKK019,conf/soda/DiakonikolasKS19}. While \cite{prasad2018robust}~and~\cite{DBLP:conf/icml/DiakonikolasKK019} obtained estimators for the more general case of distributions with bounded 4th moment. However, their estimators suffer an error of $\bigO{\err_{D}(\Theta^*) \epsilon \kappa^2(\Sigma)}$, where $\kappa(\Sigma)$ is the condition number of the covariance matrix of $X$. Hence, these estimators don't obtain the optimal dependence on $\epsilon$ in the negatively correlated noise setting, and also suffer an additional condition number dependence in the dependent noise setting.~\cite{conf/soda/DiakonikolasKS19} obtained improved bounds under the restrictive assumption that $X$ is distributed according to a Gaussian.~\cite{DBLP:conf/colt/KlivansKM18} obtained polynomial-time estimators for distributions with certifiably bounded distributions, however, their estimators obtain a sub-optimal error of $\bigO{\err_{D}(\Theta^*) \epsilon^{1-2/k}}$. 
In very recent and independent work, Zhu, Jiao and Steinhardt \cite{zhu2020robust} obtained polynomial time estimators for the dependent noise setting, but their estimators are sub-optimal for the negatively correlated setting.

There has been significant work in more restrictive noise models as well. For instance, a series of works \cite{bhatia2015robust, bhatia2017consistent, suggala2019adaptive} consider a noise model where the adversary is only allowed to corrupt the labels and obtain \textit{consistent} estimators in this regime (error goes to zero with more samples). In comparison, our estimators do not obtain  For a comprehensive overview we refer the reader to references in the aforementioned papers.

\paragraph{Sum-of-Squares Algorithms.} 
In recent years, there has been a significant progress in applying the Sum-of-Squares framework to design efficient algorithms for several fundamental computational problems. Starting with the work of Barak, Kelner and Steurer \cite{DBLP:conf/stoc/BarakKS15}, sum-of-squares algorithms have  the best known running time for dictionary learning and tensor decomposition \cite{ DBLP:conf/focs/MaSS16, DBLP:conf/colt/SchrammS17,hopkins2019robust}, optimizing random tensors over the sphere \cite{DBLP:conf/approx/BhattiproluGL17} and refuting CSPs below the spectral threshold \cite{DBLP:conf/stoc/RaghavendraRS17}.

In the context of high-dimensional estimation, sum-of-squares algorithms achieved state-of-the-art performance for robust moment estimation \cite{kothari2017outlier}, robust regression\cite{DBLP:conf/colt/KlivansKM18}, robustly learning mixtures of spherical \cite{KothariSteinhardt17,hopkins2018mixture}  and arbitrary Gaussians \cite{bakshi2020outlier,diakonikolas2020robustly}, heavy-tailed estimation \cite{hopkins2018sub,cherapanamjerialgo2020} and list-decodable variants of these problems \cite{karmalkar2019list, raghavendra2020regression, raghavendra2020list, bakshi2020list, cherapanamjeri2020list}. 

\paragraph{Concurrent Work.} 
We note that a statistical estimator achieving rate proportional to  $\epsilon^{1-1/k}$ can be obtained from combining ideas in \cite{zhu2019generalized} and \cite{zhu2020robust}\footnote{We thank Banghua Zhu, Jiantao Jiao, and Jacob Steinhardt for communicating their observation to us.}. However, this approach remains computationally intractable. 
Finally, Cherapanamjeri et al. \cite{cherapanamjerialgo2020} consider the special case of $k=4$ and obtain nearly linear sample complexity and running time. However, their running time and rate incurs a condition number dependence. Further, their rate scales proportional to $\epsilon^{1/2}$, even when the noise is indpendent of the covariates (as opposed to $\epsilon^{3/4}$).

We emphasize that the bottleneck in all prior and concurrent work remains algorithmically exploiting the independence of the noise and covariates, which we achieve via the \textit{negatively correlated moments} condition (Definition \ref{def:negative_moments}).

\section{Technical Overview}

In this section, we provide an overview of our approach, the new algorithmic ideas we introduce and the corresponding technical challenges. At a high level, we build on several recent works that study Sum-of-Squares relaxations for solving algorithmic problems arising in robust statistics 
Following the \textit{proofs-to-algorithms} paradigm arising from the aforementioned works, we show that given two distributions over regression instances that are close in \textit{total variation distance} (definition \ref{def:tv}), \textit{any} hyperplane minimizing the least-squares loss on one distribution must be close (in $\ell_2$ distance) to any other hyperplane minimizing the loss on the second distribution. 

This information-theoretic statement immediately yields a robust estimator achieving optimal rate, albeit given access to unbounded computation. 
To see this, let $\calD_1$ be the uniform distribution over $n$ i.i.d samples from the true distribution, and $\calD_2$ be the uniform distribution over $n$ corrupted samples from $\calR_{\calD}(\epsilon, \Sigma^*, \Theta^*)$, denoted by $\calD_2$. It is easy to check that the total variation distance between $\calD_1$ and $\calD_2$ is at most $\epsilon$. Therefore, the two hyperplanes must be close in $\ell_2$ norm.
In order to make this strategy algorithmic, we show that we can distilled a set of polynomial constraints from the information theoretic proof and can efficiently optimize over them using the Sum-of-Squares framework. We note that we crucially require several new constraints, including the \textit{gradient} condition and the \textit{negatively-correlated moments} condition  (see Section \ref{para:ncm}), which did not appear in any prior works.
We describe each step in more detail subsequently.

\subsection{Total Variation Closeness implies Hyperplane Closeness}
\label{subsec:tv_to_regressor}
Consider two distributions $\calD_1$ and $\calD_2$ over $\R^d \times \R$ such that the total variation distance between $\calD_1$ and $\calD_2$ is $\epsilon$ and the marginals for both distributions over $\R^d$ are $(c_k, k)$-hypercontractive and have covariance $\Sigma$. Ignoring computational and sample complexity concerns, we can obtain the optimal hyperplanes corresponding to each distribution. Note, these hyperplanes need not be unique and are simply charecterized as minimzers of the least-squares loss : for $i \in \{1,2\}$, 
\[
\Theta_i = \arg\min_{\Theta} \expecf{x,y\sim \calD_i}{ \Paren{y - x^{\top} \Theta }^2}
\]

Our central contribution is to obtain an information theoretic proof that the optimal hyperplanes are indeed close in scaled $\ell_2$ norm, i.e. 
\[
\Norm{\Sigma^{1/2} \Paren{\Theta_1 - \Theta_2}}_2 \leq \bigO{  \epsilon^{1-1/k}} \Paren{\expecf{x,y\sim \calD_1}{ \Paren{y - x^{\top} \Theta_1 }^2}^{1/2} + \expecf{x,y\sim \calD_2}{ \Paren{y - x^{\top} \Theta_2 }^2}^{1/2}} 
\]
We refer the reader to Theorem \ref{thm:robust_identitifiability_independent} for a precise statement. 
Further, we show that the $\epsilon^{1-1/k}$ dependence can be achieved even when the noise is not completely independent of the covariates but satisfies an analytic condition which we refer to as \textit{negatively correlated moments} (see Definition \ref{def:negative_moments}). We provide an outline of the proof as it illustrates where the techniques we introduced in this work. 

\paragraph{Coupling and Decoupling.} 
We begin by considering a maximal coupling, $\calG$, between distributions $\calD_1$ and $\calD_2$ such that they disagree on at most an $\epsilon$-measure support ($\epsilon$-fraction of the points for a discrete distribution). Let $(x,y)\sim\calD_1 $ and $(x',y')\sim \calD_2$. Then, observe for any vector $v$,
\begin{equation}
\label{eqn:1}
    \begin{split}
    \Iprod{v, \Sigma (\Theta_{1} - \Theta_{2})  } &=\Iprod{v, \expecf{\calG}{x x^\top} (\Theta_{1} - \Theta_{2})  } \\
    & =  \expecf{\calG}{\Iprod{v, x \Paren{ x^\top \Theta_{1}  - y}  }} + \expecf{\calG}{\Iprod{v, x \Paren{  y - x^\top \Theta_{2}}   }} 
\end{split}
\end{equation}
While the first term in Equation \eqref{eqn:1} depends completely on $\calD_1$, the second term requires using the properties of the maximal coupling. Since $1 = 1_{(x,y) = (x',y')} + 1_{(x,y) \neq (x',y')}$, we can rewrite the second term in Equation \eqref{eqn:1} as follows: 
\begin{equation}
\label{eqn:2}
\begin{split}
    \expecf{\calG}{\Iprod{v, x \Paren{  y - x^\top \Theta_{2}}   }}  & = \expecf{\calG}{\Iprod{v, x' \Paren{  y' - (x')^\top \Theta_{2}}   } 1_{(x,y) = (x',y')}} \\
    & \hspace{0.2in}+ \expecf{\calG}{\Iprod{v, x \Paren{  y - x^\top \Theta_{2}}   }  1_{(x,y) \neq (x',y')} } 
\end{split}
\end{equation}
With a bit of effort, we can combine Equations \eqref{eqn:1} and \eqref{eqn:2}, and upper bound them as follows (see Theorem \ref{thm:robust_identitifiability_independent} for details):

\begin{equation}
\label{eqn:3}
    \begin{split}
    \Iprod{v, \Sigma (\Theta_{1} - \Theta_{2})  } \leq \bigO{1}  \Bigg(&  \underbrace{\expecf{\calG}{\Iprod{v, x \Paren{ x^\top \Theta_{1}  - y}  }} }_{(\textrm{i})} + \underbrace{\expecf{\calG}{\Iprod{v, x' \Paren{ (x')^\top \Theta_{2}  - y'}  }} }_{(\textrm{ii})} \\
    & + \expecf{\calG}{\Iprod{v, x \Paren{  y - x^\top \Theta_{1}}   }  1_{(x,y) \neq (x',y')} } \\
    & + \expecf{\calG}{\Iprod{v, x' \Paren{  y' - (x')^\top \Theta_{2}}   }  1_{(x,y) \neq (x',y')} }    \Bigg)   
\end{split}
\end{equation}
Observe, since we have a maximal coupling, the last two terms appearing in Equation \eqref{eqn:3} are non-zero only on an $\epsilon$-measure support.  To bound them, we decouple the indicator using H\"{o}lder's inequality,

\begin{equation}
\begin{split}
\expecf{\calG }{ \Iprod{v, x (y - x^{\top}\Theta_1)} 1_{(x,y) \neq (x',y')} } & \leq \expecf{}{1_{(x,y) \neq (x',y')} }^{\frac{k-1}{k}}  \expecf{}{\Iprod{v, x}^k \Paren{y - x^{\top} \Theta_1}^k }^{\frac{1}{k}} \\
& \leq \epsilon^{1-1/k} \cdot \underbrace{\expecf{}{\Iprod{v, x}^k \Paren{y - x^{\top} \Theta_1}^k }^{\frac{1}{k}}}_{(\textrm{iii})}
\end{split}
\end{equation}
where we used the maximality of the coupling $\calG$ to bound $\expecf{}{1_{(x,y) \neq (x',y') }} \leq \epsilon$.  The above analysis can be repeated verbatim for the second term in \eqref{eqn:3} as well. Going forward, we focus on bounding terms (i), (ii) and (iii). 

\paragraph{Gradient Conditions.}
\label{para:gradient_condition}
To bound terms (i) and (ii) in Equation \eqref{eqn:3}, we crucially rely on \textit{gradient information} provided by the least-squares objective.
Concretely, a key observation in our information-theoretic proof is that the candidate hyperplanes are locally optimal: given least-squares loss, for $i\in\{1,2\}$ for all vectors $v$,
\[
\Iprod{\nabla  \expecf{x,y\sim\calD_i}{ \Paren{y - x^\top \Theta_i}^2 } , v} =  \expecf{x,y\sim\calD_i}{\Iprod{v, xx^{\top}\Theta_i - xy } } = 0
\]
where $\Theta_1$ and $\Theta_2$ are the optimal hyperplanes for $\calD_1$ and $\calD_2$ respectively.
 Therefore, both (i) and (ii) are identically $0$. It remains to bound (iii).



\paragraph{Independence and Negatively Correlated Moments.}
\label{para:ncm}
We observe that term (iii) can be interpreted as the $k$-th order correlation between the distribution of the covariates projected along $v$ and the distribution of the noise in the linear model. Here, we observe that if the linear model satisfies the \textit{negatively correlated moments} condition (Definition \ref{def:negative_moments}), we can decouple the expectation and bound each term independently:
\begin{equation}
\label{eqn:using_ncm}
\expecf{}{\Iprod{v, x}^k \Paren{y - x^{\top} \Theta_1}^k }^{1/k} \leq \expecf{}{\Iprod{v, x}^k }^{1/k} \expecf{}{\Paren{y - x^{\top} \Theta_1}^k }^{1/k}
\end{equation}
Observe, when the underlying linear model has independent noise, Equation \eqref{eqn:using_ncm} follows for any $k$. We thus crucially exploit the structure of the noise and require a considerably weaker notion than independence. 
Further, if the \textit{negatively correlated moments} property is not satisfied, we can use Cauchy-Schwarz to decouple the expectation in Equation  \eqref{eqn:using_ncm} and incur a $\epsilon^{1-2/k}$ dependence (see Corollary \ref{cor:dependent_noise}). Conceptually, we emphasize that the \textit{negatively correlated moments} condition may be of independent interest to design estimators that exploit independence in various statistics problems.

\paragraph{Hypercontractivity.} 
To bound the RHS in Equation \eqref{eqn:using_ncm}, we use our central distributional assumption of hypercontractive $k$-th moments (Definition \ref{def:hyp}) of the covariates :
\[
\expecf{}{\Iprod{v, x}^k}^{1/k} \leq \sqrt{c_k} \expecf{}{\Iprod{v,x}^2}^{1/2} = \sqrt{c_k} \Iprod{ v , \Sigma v }^{1/2}
\]
We can bound the noise similarly, by assuming that the noise is hypercontractive and this considerably simplifies our statements. However, hypercontractivity of the noise is not a necessary assumption and prior work indeed incurs a term proportional to the $k$-th moment of the noise. Assuming boundedness of the regression vectors, Klivans, Kothari and Meka \cite{DBLP:conf/colt/KlivansKM18} obtained a uniform upper bound on $k$-th moment of the noise by truncating large samples. We note that the same holds for our estimators and we refer the reader to Section 5.2.3 in their paper. 
Finally, substituting $v = \Theta_1 - \Theta_2$ and rearranging, completes the information-theoretic proof. 

We note that our approach already differs from prior work  \cite{DBLP:conf/colt/KlivansKM18, prasad2018robust, zhu2019generalized}
and to our knowledge, we obtain the first information theoretic proof that being $\epsilon$-close in TV distance implies that the optimal hyperplanes are $\bigO{\epsilon^{1-1/k}}$ close in $\ell_2$ distance.
Next, we use this proof as motivation to construct a robust estimator. We do this by explicitly enforcing the gradient condition, negatively correlated moments, and hypercontractivity as constraints in the description of the robust estimator. In contrast, prior work only uses hypercontractivity or the gradient information in the analysis of their robust estimators. We describe these details below.  



\subsection{Proofs to Inefficient Algorithms: Distilling Constraints}
Given an $\epsilon$-corrupted sample generated by Model \ref{model:robust_regression}, a natural approach to design a robust (albeit inefficient) estimator is to search over all subsets of size $(1-\epsilon)n$, minimize the least squares objective on each one and output the hyperplane that achieves the minimal least-squares objective. Klivans, Kothari and Meka \cite{DBLP:conf/colt/KlivansKM18} implicitly analyze this estimator and show that it obtains rate $\epsilon^{1/2 - 1/k}$, which is sub-optimal. Instead, we search over all subsets of size $(1-\epsilon)n$ and in addition to minimizing the least-squares objective, we check if the uniform distribution over the samples is hypercontractive, the corresponding hyperplane satisfies the gradient condition and that the distribution over the covariates and noise satisfies the negatively correlated moments condition. Then, picking the hyperplane that achieves minimal least-squares cost and satisfies the aforementioned constraints obtains the optimal rate.

Since we work in the strong contamination model, where the adversary is allowed to both add and delete samples, there may not be any i.i.d. subset of $(1-\epsilon)n$ samples and thus enforcing constraints directly on the samples does not suffice. 
Instead, we create variables $2n$ variables denoted by $\{ (x'_1,y'_1), \ldots (x'_n,y'_n) \}$ that serve as a proxy for the uncorrupted samples.  We can now enforce constraints on the variables with impunity since a there exists a feasible assignment, namely the uncorrupted samples. With the goal of obtaining an efficiently computable estimator, we restrict to enforcing only polynomial constraints, instead of arbitrary ones.

\paragraph{Intersection Constraints.}
The discrete analogue of the coupling argument is to ensure high intersection between the variables of our polynomial program and the uncorrupted samples. 
We know that at least a $(1-\epsilon)$-fraction of the samples we observe agree with the uncorrupted samples. To this end, we create indicator variables $w_i$, for $i \in [n]$ such that :
\begin{equation*}
  \left \{
    \begin{aligned}
      &&
      \textstyle\sum_{i\in[n]} w_i
      &= (1-\epsilon) n\\
      &\forall i\in [n].
      & w_i^2
      & = w_i \\
      &\forall i\in [n] & w_i(x'_i - x_i) &=0 \\
      &\forall i\in [n] & w_i(y'_i - y_i) &=0 \\
    \end{aligned}
  \right \}
\end{equation*}
The intersection constraints ensure that our polynomial system variables agree with the observed samples on $(1-\epsilon)n$ points. We note that such constraints are now standard in the literature, and indeed are the only constraints explicitly enforced by \cite{DBLP:conf/colt/KlivansKM18}.

\paragraph{Independence as a Polynomial Inequality.}
The central challenge in obtaining optimal rates for robust regression is to leverage the independence of the noise and covariates. 
Since independence is a property of the marginals of $\calD$, it is not immediately clear how to leverage it while designing a robust estimator. 

However, recall that we do not require independence in full generality and use \textit{negatively correlated moments} as a proxy for independence. 
Ideally, we would want to enforce the polynomial inequality corresponding to \textit{negatively correlated moments} directly on the variables of our polynomial program as follows: 
\begin{equation*}
  \left \{
    \begin{aligned}
    & \forall r \leq k/2,
      & \frac{1}{n} \sum_{i \in [n]} \Paren{  v^\top x'_i  \Paren{ y'_i - (x'_i)^{\top} \Theta} }^{2r}  &\leq \bigO{1}  \Paren{\frac{1}{n} \sum_{i \in [n]}  ( v^{\top}x'_i)^{2r} } \Paren{\frac{1}{n} \sum_{i \in [n]} \Paren{y'_i - (x'_i)^{\top} \Theta}^{2r} }\\
    \end{aligned}
  \right \}
\end{equation*}
where $\Theta$ is a variable corresponding to the true hyperplane. To demonstrate feasibility of this constraint, we would require a finite sample analysis, showing that uncorrupted samples from a hypercontractive distribution satisfy the above inequality. Observe, when $r=k/2$, the LHS is a degree-$k$ polynomial and our distribution may be too heavy-tailed to achieve any concentration. 

Instead, we observe that since hypercontractivity is preserved under sampling, we can relax our polynomial constraint by applying hypercontractivity to the terms in the RHS above :
\begin{equation*}
  \left \{
    \begin{aligned}
    & \forall r \leq k/2,
      & \frac{1}{n} \sum_{i \in [n]} \Paren{  v^\top x'_i  \Paren{ y'_i - (x'_i)^{\top} \Theta} }^{2r}  &\leq \bigO{1}  \Paren{\frac{1}{n} \sum_{i \in [n]}  ( v^{\top}x'_i)^{2}}^{r} \Paren{\frac{1}{n} \sum_{i \in [n]} \Paren{y'_i - (x'_i)^{\top} \Theta}^{2} }^r\\
    \end{aligned}
  \right \}
\end{equation*}
In Lemma \ref{lem:soundness} we show that the above inequality is feasible for the uncorrupted samples. In particular, given at least, $d^{\Omega(k)}$  i.i.d. samples from $\calD$, the above inequality holds on the samples with high probability.

Perhaps surprisingly, the dependence on $\epsilon$ achieved when $\calD$ has \textit{negatively correlated moments} matches the information theoretically optimal rate for independent noise. 
We thus expect the notion of \textit{negatively correlated moments} to lead to new estimators for problems where independence of random variables requires to be formulated as a polynomial inequality.

\paragraph{Hypercontractivity Constraints.}
Since hypercontractivity is a already a polynomial identity relating the $k$-th moment to the variance of a distribution, it can easily enforced as a constraint. Conveniently, if the underlying distribution $\calD$ is hypercontractive, then the uniform distribution over a sufficiently large sample is also  sampling also hypercontractive (see Lemma 5.7 in \cite{DBLP:conf/colt/KlivansKM18}). Therefore, the following constraints are feasible:
\begin{equation*}
  \left \{
    \begin{aligned}
      & \forall r \leq k/2
      & \frac{1}{n}\sum_{i \in [n]}  {\langle x'_i , v\rangle^{2r}} 
      & \leq  \left( \frac{c_r}{n} \sum_{i \in [n]}  \langle x'_i, v \rangle^2 \right)^{r}\\
      & \forall r \leq k/2
      & \frac{1}{n}\sum_{i \in [n]}  \Paren{ y'_i - \Iprod{\Theta,x'_i }}^{2r}
      & \leq  \left( \frac{\eta_r }{n} \sum_{i \in [n]}  \Paren{ y'_i - \Iprod{\Theta,x'_i}}^{2} \right)^{r}\\
    \end{aligned}
  \right \}
\end{equation*}
We note that Klivans, Kothari and Meka \cite{DBLP:conf/colt/KlivansKM18} use hypercontractivity of the uncorrupted samples in their analysis but do not explicitly enforce this as a constraint. Enforcing hypercontractivity explicitly on the samples was used by Kothari and Steurer~\cite{kothari2017outlier} in the context of robust moment estimation and Kothari and Steinhardt~\cite{KothariSteinhardt17} in the context of robustly clustering a mixture of spherical Gaussians.

\paragraph{Gradient Constraints.}
Finally, it is crucial in our analysis to enforce that the minimizer we are searching for, $\Theta$, has gradient $0$. For the least-squares loss, the gradient has a simple analytic form : for all $v \in \R^d$,  
\begin{equation*}
  \left \{
    \begin{aligned}
        &&\Iprod{v,\frac{1}{n} \sum_{i \in [n]}   x'_i \Paren{\Iprod{x'_i, \Theta} - y'_i } }^k &=0\\
    \end{aligned}
  \right \}
\end{equation*}
While such optimality conditions are often used in the analysis of estimators (as done in \cite{prasad2018robust}), we emphasize that we hardcode the gradient condition into the description of our robust estimator. 
To the best of our knowledge, no estimator for robust/high-dimensional statistics includes explicit optimality constraints as a part of a polynomial system. 

Solving the least-squares objective on the samples subject to the  polynomial system described by the aforementioned constraints results in an estimator for robust regression that achieves optimal rate. Recall, this follows immidiately from our robust certifiability proof.
However, optimizing polynomial systems can be NP-Hard in the worse case, and thus we briefly describe how to avoid this computational intractability next. 

\subsection{Efficient Algorithms via Sum-of-Squares}
We use the sum-of-squares method to make the aforementioned polynomial system efficiently computable and provide an outline of this approach (see Section \ref{sec:prelims} for a formal treatment of  sum-of-squares proofs). Instead of directly solving the polynomial program, let us instead consider finding a distribution, $\mu$, over feasible solutions $w, x',y'$ and $\Theta$ that minimizes $$\expecf{w,x',y',\Theta \sim \mu}{ \frac{1}{n} \sum_{i \in [n]} \Paren{w_i y'_i - \Iprod{w_i x'_i, \Theta} }^2 }$$ and satisfies the constraints above. Then, it follows from our information-theoretic proof (Theorem \ref{thm:robust_identitifiability_independent}) that 
\begin{equation}
\label{eqn:not_sos_inequality}
\expecf{\mu}{ \Norm{\Sigma^{1/2}\Paren{ \Theta^* - \Theta} }_2} \leq \bigO{\epsilon^{1-1/k}} \err_{\calD}\Paren{ \Theta^* }^{1/2}
\end{equation}
where $\Theta^*$ is the optimal hyperplane. 

We now face two challenges: finding a distribution over solutions is at least as hard as the original problem and we no longer recover a unique hyperplane.
The latter is easy to address by observing that the hyperplane obtained by averaging over the distribution, $\mu$, suffices:
\[
{ \Norm{\Sigma^{1/2}\Paren{ \Theta^* - \expecf{\mu}{\Theta}} }_2} \leq \expecf{\mu}{ \Norm{\Sigma^{1/2}\Paren{ \Theta^* - \Theta} }_2}
\]
where the inequality follows from convexity of the loss.

Following prior works, it is now natural to instead consider searching for a "pseudo-distribution", $\zeta$, over feasible solutions. A pseudo-distribution is an object similar to a real distribution, but relaxed to allow negative mass on its support (see Subsection \ref{subsec:sos_prelims} for a formal treatment). Crucially, a pseudo-distribution over the polynomial program can be computed efficiently by formulating it as a large SDP. To see why this helps, note any polynomial inequality that can be derived using "sum-of-squares" proofs from a set of polynomial constraints using a \textit{low-degree sum-of-squares proof} remains valid if we replace distributions by "pseudo-distribution". 

For instance, if Equation \ref{eqn:not_sos_inequality} were a polynomial inequality in $w,x',y'$ and $\Theta$, obtained by applying simple transformations that admit sum-of-squares proofs, we could replace $\mu$ by $\zeta$, and obtain an efficient estimator. 
However, Equation \ref{eqn:not_sos_inequality} is not a polynomial inequality and the proof outlined in Subsection \ref{subsec:tv_to_regressor} is not a low-degree sum-of-squares proof. Therefore, a central technical contribution of our work is to formulate the right polynomial identity bounding the distance between $\Theta^*$ and $\Theta$ in terms of the least-squares error incurred by by $\Theta^*$, and deriving this bound from the polynomial constraints using a low-degree sum-of-squares proof. We do this in Section \ref{sec:efficient_estimator}.

\subsection{Distribution Families}

We note that our statistical estimator applies to all distributions, $\calD$, that are $(c_k, k)$-hypercontractive and the rate is completely determined by whether $\calD$ has \textit{negatively correlated moments}. In particular, for the important special case of heavy-tailed regression with independent noise, we obtain rate proportional to $\epsilon^{1-1/k}$ for parameter recovery. 

However, similar to prior work on hypercontractive distributions, our efficient estimators apply to a more restrictive class, i.e. \textit{certifiably} hypercontractive distributions (Definition \ref{def:certifiable_hypercontractivity}). Intuitively, this condition captures the criteria that information about degree-$k$ moment upper bounds is "algorithmically accessible". 
Certifiably hypercontractive distributions are a broad class and include affine transformations of isotropic distributions satisfying Poincar\'e inequalities and all strongly log-concave distributions. For a detailed discussion of distributions satisfying Poincar\'e inequalities and their closure properties, we refer the reader to \cite{KothariSteinhardt17, kothari2017outlier}. 

Surprisingly, while we enforce a constraint corresponding to \textit{negatively correlated moments}, we do not require a certifiable variant of this condition. Therefore, our efficient estimators hold for regression instances where the true distribution satisfies this condition, including the special case where the noise is independent from the covariates. Finally, our estimators unify various noise models and imply that even in the agnostic setting, the rate degrades only when the noise is positively correlated with the covariates.

\section{Preliminaries}
\label{sec:prelims}
Throughout this paper, for a vector $v$, we use $\norm{v}_2$ to denote the Euclidean norm of $v$. For a $n \times m$ matrix $M$, we use $\norm{M}_2 = \max_{\norm{x}_2=1} \norm{Mx}_2$ to denote the spectral norm of $M$ and $\norm{M}_F = \sqrt{\sum_{i,j} M_{i,j}^2}$ to denote the Frobenius norm of $M$. For symmetric matrices we use $\succeq$ to denote the PSD/Loewner ordering over eigenvalues of $M$. Recall, the definition of total variation distance between probability measures: 

\begin{definition}[Total Variation Distance]
\label{def:tv}
The TV distance between distributions with PDFs $p,q$ is defined as $\frac{1}{2}\int_{-\infty}^{\infty}|p(x)-q(x)|dx$.
\end{definition}

Given a distribution $\calD$ over $\R^d \times \R$, we consider the least squares error of a vector $\Theta$ w.r.t. $\calD$ to be  $\err_{\calD}(\Theta) = \expecf{x,y \sim \calD}{ \Paren{y - \Iprod{x, \Theta}}^2}$. The linear regression problem minimizes the error over all $\Theta$. The minimizer, $\Theta_{\calD}$ of the aformentioned error satisfies the following "gradient condition" : for all $v \in \R^d$, 

\[
\expecf{x, y \sim \calD}{ \Iprod{ v, x x^{\top} \Theta_\calD - xy } } = 0
\]

\begin{fact}[Convergence of Empirical Moments, implicit in Lemma 5.5 \cite{kothari2017outlier} ]
\label{fact:convergnce_of_empirical_moments}
Let $\calD$ be a $(c_k,k)$-hypercontractive distribution with covariance $\Sigma$ and let $\calX = \{x_1, \ldots x_n \}$ be $n = \Omega( (d\log(d)/\delta)^{k/2})$  i.i.d. samples from $\calD$. Then, with probability at least $1-\delta$, 
\begin{equation*}
    (1-0.1)\Sigma \preceq \frac{1}{n} \sum_i^n x_i x_i^{\top} \preceq (1+0.1) \Sigma
\end{equation*}
\end{fact}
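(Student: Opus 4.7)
The plan is to reduce to the isotropic case and then apply a trace-moment concentration argument. Let $y_i := \Sigma^{-1/2}(x_i - \expecf{}{x})$; since hypercontractivity is a statement about $\Iprod{v, x - \expecf{}{x}}$ for arbitrary $v$, it is preserved under invertible linear transformations, so the $y_i$ remain $(c_k,k)$-hypercontractive and have identity covariance. Conjugating the desired Loewner bound by $\Sigma^{1/2}$, it therefore suffices to show that $\Norm{\hat\Sigma_y - I}_2 \leq 0.1$ with probability at least $1-\delta$, where $\hat\Sigma_y = \frac{1}{n}\sum_i y_i y_i^\top$.

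The central tool is the trace-moment bound: for any symmetric matrix $M$ and any even integer $p$, $\Norm{M}_2^p \leq \text{tr}(M^p)$. Setting $M = \hat\Sigma_y - I$ and $p$ equal to the largest even integer at most $k/2$, I would expand
\[
\expecf{}{\text{tr}\Paren{(\hat\Sigma_y - I)^p}} = \frac{1}{n^p} \sum_{i_1,\ldots,i_p \in [n]} \expecf{}{\text{tr}\Paren{\prod_{j=1}^{p} (y_{i_j} y_{i_j}^\top - I)}},
\]
and observe that independence of the $y_i$'s kills every multi-index with a singleton block, since $\expecf{}{y_i y_i^\top - I} = 0$. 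The surviving terms are indexed by matchings in which every block has size at least two; each such term evaluates to a product of $\Iprod{v,y}$-moments of degree at most $k$, which are in turn controlled by hypercontractivity applied to suitable unit vectors.

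Combining the combinatorial count of surviving matchings (of order roughly $p^{O(p)} d^{p/2}$) with the per-term hypercontractivity bound yields an estimate of the shape $\expecf{}{\text{tr}((\hat\Sigma_y - I)^p)} \leq \Paren{C_{c_k} \cdot d \log d / n}^{p/2}$. Markov's inequality then gives $\Pr\Bracks{\Norm{\hat\Sigma_y - I}_2 > 0.1} \leq 10^p \cdot \Paren{C_{c_k} \cdot d \log d/n}^{p/2}$, which is at most $\delta$ as soon as $n \geq \Omega((d\log d/\delta)^{k/2})$, establishing the claim after undoing the whitening.

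The main obstacle is the combinatorial bookkeeping in the trace expansion: one must track the matching structure carefully so that hypercontractivity is applied block-by-block in a way that keeps the ambient-dimension dependence at $d^{p/2}$ rather than $d^p$. A naive $\epsilon$-net on the sphere combined with scalar Rosenthal applied to $\Iprod{v, y_i}^2 - 1$ would give only an exponential-in-$d$ sample complexity (since the sphere net has size $e^{\Omega(d)}$ while the $p$-th moment of $\Iprod{v, y_i}^2 -1$ is only $O_k(1)$), and therefore does not suffice here; the matrix-level trace-moment method, as in Lemma 5.5 of \cite{kothari2017outlier}, is essential.
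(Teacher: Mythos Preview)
The paper does not give its own proof of this fact; it is quoted from Lemma~5.5 of \cite{kothari2017outlier} and used as a black box. The only related computation carried out in the paper is the proof of Lemma~\ref{lem:hypercontractive_lowner_sampling}, which controls $\|\Sigma^{-1/2}\hat\Sigma\Sigma^{-1/2}-I\|_F$ by an entry-wise Chebyshev argument plus a union bound over the $d^2$ entries. That proof only uses fourth moments and yields $n\gtrsim d^4/\delta$, which does \emph{not} by itself recover the sample complexity in Fact~\ref{fact:convergnce_of_empirical_moments}; so your instinct that a genuine matrix-level argument is needed is correct.

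Your trace-moment route is a valid and standard way to obtain the stronger statement, and it is in the spirit of the cited Kothari--Steurer argument. Two small corrections to the sketch. First, a $\log d$ factor does not arise from a pure trace-moment expansion; the expected trace comes out in the form $d\cdot (C_{c_k}\, d/n)^{p/2}$ (an extra factor of $d$ in front, not a $\log d$ inside), and the $\log d$ appearing in the stated sample bound is simply slack in the quoted result. Second, the phrase ``hypercontractivity applied block-by-block'' hides the key constraint: a block of size $s$ in the index partition forces moments of a single $y_i$ of total degree $2s$, so your choice $p\le k/2$ is exactly what keeps every block within the available $k$-th moment budget --- this should be stated explicitly rather than ``degree at most $k$''. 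With those adjustments the Markov step gives $n\gtrsim d^{1+4/k}/\delta^{4/k}$, which sits comfortably inside the (much looser) $\Omega\bigl((d\log d/\delta)^{k/2}\bigr)$ claimed in the Fact. Your final paragraph correctly identifies why a naive net-plus-scalar-Rosenthal argument fails.
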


\begin{fact}[TV Closeness to Covariance Closeness, Lemma 2.2 \cite{kothari2017outlier}]
\label{fact:hypercontractive_covariance}
Let $\calD_1, \calD_2$ be $(c_k,k)$-hypercontractive distributions over $\R^d$ such that $\Norm{\calD - \calD'}_{\tv} \leq \epsilon$, where $0<\epsilon < \bigO{(1/c_k)^{\frac{k}{k-1}}}$. Let $\Sigma_1, \Sigma_2$ be the corresponding covariance matrices. Then, for $\delta \leq \bigO{c_k~\epsilon^{1-1/k}} < 1$,
\begin{equation*}
    (1-\delta)\Sigma_2 \preceq \Sigma_1 \preceq (1+\delta) \Sigma_2
\end{equation*}
\end{fact}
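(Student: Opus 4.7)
The plan is to reduce the Loewner sandwich to a scalar upper bound on $|v^\top(\Sigma_1-\Sigma_2)v|$ for every unit vector $v$, and to obtain that bound by combining a maximal coupling with hypercontractivity. Fix $v$ and let $\calG$ be a maximal coupling of $(\calD_1,\calD_2)$, so that $\Pr_{(x,x')\sim\calG}[x\neq x']\leq\epsilon$. Since the event $x=x'$ contributes zero to the difference of quadratic forms,
\begin{equation*}
v^\top(\Sigma_1-\Sigma_2)v \;=\; \expecf{\calG}{\Paren{\Iprod{v,x}^2-\Iprod{v,x'}^2}\,\mathbb{I}_{x\neq x'}}.
\end{equation*}

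Next I would decouple the disagreement indicator from the squared projections via H\"older's inequality with conjugate exponents $(k/2,\,k/(k-2))$. Splitting the integrand and bounding each piece,
\begin{equation*}
\expecf{\calG}{\Iprod{v,x}^2\,\mathbb{I}_{x\neq x'}} \;\leq\; \expecf{\calD_1}{\Iprod{v,x}^k}^{2/k}\cdot\Pr_{\calG}[x\neq x']^{1-2/k},
\end{equation*}
and symmetrically for $x'$ under $\calD_2$. Invoking $(c_k,k)$-hypercontractivity at $r=k/2$ converts each moment on the right into $(c_k\,v^\top\Sigma_i v)^{k/2}$, which after simplification yields the scalar estimate
\begin{equation*}
\big|v^\top(\Sigma_1-\Sigma_2)v\big| \;\leq\; \bigO{c_k\,\epsilon^{1-2/k}}\cdot\Paren{v^\top\Sigma_1 v+v^\top\Sigma_2 v}.
\end{equation*}

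To finish, I would rearrange this into the PSD sandwich. For $\delta$ on the order of $c_k\,\epsilon^{1-2/k}$, which is guaranteed to be $o(1)$ by the hypothesis on $\epsilon$, the inequality gives $(1-\bigO{\delta})v^\top\Sigma_2 v \leq v^\top\Sigma_1 v \leq (1+\bigO{\delta})v^\top\Sigma_2 v$ (the upper bound follows from the lower bound after passing through $\tfrac{1}{1-\delta}\leq 1+2\delta$ for small $\delta$). Since $v$ was an arbitrary unit vector, the stated Loewner ordering $(1-\delta)\Sigma_2 \preceq \Sigma_1 \preceq (1+\delta)\Sigma_2$ follows.

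The main obstacle is sharpening the naive exponent $\epsilon^{1-2/k}$ produced by the routine H\"older split to the $\epsilon^{1-1/k}$ advertised in the statement. This is achieved by working directly with the signed density difference $p_1-p_2$ rather than with $\calG$ and using the fact that its positive and negative parts each have mass exactly $\epsilon$, together with a centering step that replaces $\Iprod{v,x}^2$ by its mean-zero version $\Iprod{v,x}^2 - v^\top \Sigma_i v$ before applying H\"older. Mean-zero integration against a signed measure of total variation $2\epsilon$ gains an extra $\epsilon^{1/k}$ factor from a tail estimate on the centered random variable, which is precisely what upgrades $\epsilon^{1-2/k}$ to $\epsilon^{1-1/k}$. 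Once this refinement is in place, the rest of the argument is standard bookkeeping with H\"older exponents and absorbing constants into $c_k$.
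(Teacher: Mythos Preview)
The paper does not prove this statement; it is cited as a fact from \cite{kothari2017outlier}. The closest thing the paper itself proves is the SoS analogue in Lemma~\ref{lem:robust_sample_covariance}, and that argument matches your first three paragraphs essentially line for line: couple, restrict to the disagreement event, apply H\"older with the pairing $(k/2,\,k/(k-2))$, then hypercontractivity, arriving at
\[
\big|v^\top(\Sigma_1-\Sigma_2)v\big| \;\leq\; \bigO{c_k\,\epsilon^{1-2/k}}\Paren{v^\top\Sigma_1 v+v^\top\Sigma_2 v}.
\]
So on the part you actually carry out, you and the paper are doing the same thing.

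Your last paragraph, however, is a genuine gap. The centering step is vacuous: since $\int(p_1-p_2)=0$, subtracting any constant $c$ from $\Iprod{v,x}^2$ leaves $\int(\Iprod{v,x}^2-c)(p_1-p_2)\,dx$ unchanged and equal to $v^\top(\Sigma_1-\Sigma_2)v$. There is no mechanism by which ``mean-zero integration against a signed measure of total variation $2\epsilon$'' buys an extra $\epsilon^{1/k}$; H\"older against the indicator of a set of mass $\epsilon$ is already the sharp way to exploit that mass, and the remaining factor is a genuine $k$-th moment that hypercontractivity reduces to a second moment with no further $\epsilon$ savings. Indeed, the $\epsilon^{1-2/k}$ rate is tight for this question: take $\calD_1$ standard Gaussian on $\R$ and form $\calD_2$ by moving $\epsilon$ mass to a point at distance $\Theta(\epsilon^{-1/k})$; both distributions are $(O(1),k)$-hypercontractive, yet $\Sigma_2-\Sigma_1=\Theta(\epsilon^{1-2/k})$. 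So your first three paragraphs already deliver the correct bound, and no sharpening to $\epsilon^{1-1/k}$ is possible. The exponent $1-1/k$ in the stated fact appears to be a typo for $1-2/k$ (consistently, the paper's own derivation in Lemma~\ref{lem:robust_sample_covariance} produces $\epsilon^{k-2}$ on the $k$-th power, i.e., $\epsilon^{1-2/k}$). Drop the final paragraph.
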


\begin{lemma}[Löwner Ordering for Hypercontractive Samples]
\label{lem:hypercontractive_lowner_sampling}
Let $\calD$ be a $(c_k, k)$-hypercontractive distribution with covariance $\Sigma$ and and let $\calU$ be the uniform distribution over $n$ samples. Then, with probability $1-\delta$, 
\[
\Norm{ \Sigma^{-1/2}\hat\Sigma\Sigma^{-1/2} - I }_F \leq \frac{C_4d^2}{\sqrt{n}\sqrt{\delta}},
\]
where $\hat\Sigma = \frac{1}{n}\sum_{i\in [n]} x_i x_i^{\top}$. 
\end{lemma}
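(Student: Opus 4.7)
The plan is to reduce to a centered, isotropic random vector and then apply Chebyshev to a direct entrywise variance computation. Let $z_i := \Sigma^{-1/2} x_i$, so that $z_1, \ldots, z_n$ are i.i.d.\ with $\E[z_i z_i^\top] = I$, and set
\[
M := \Sigma^{-1/2} \hat\Sigma \Sigma^{-1/2} = \frac{1}{n} \sum_{i=1}^n z_i z_i^\top.
\]
Then $\E[M] = I$ and the statement reduces to controlling $\|M - I\|_F$. (If $\calD$ is not centered, one works with $x_i - \E[x]$ as is standard in this setting, since the definition of $(c_k,k)$-hypercontractivity is phrased for $x - \E[x]$; this does not affect the argument.)

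The second step is an entrywise second-moment computation. Writing $M_{jk} = \frac{1}{n}\sum_i z_{i,j} z_{i,k}$, linearity of expectation gives
\[
\E\bigl[\|M - I\|_F^2\bigr] \;=\; \sum_{j,k \in [d]} \mathrm{Var}(M_{jk}) \;=\; \sum_{j,k\in[d]} \frac{1}{n}\,\mathrm{Var}(z_{1,j} z_{1,k}) \;\le\; \frac{1}{n}\sum_{j,k\in[d]} \E\bigl[z_{1,j}^2 z_{1,k}^2\bigr].
\]
To bound each summand, apply Cauchy--Schwarz to get $\E[z_{1,j}^2 z_{1,k}^2] \le (\E[z_{1,j}^4] \E[z_{1,k}^4])^{1/2}$ and then invoke $(c_k,k)$-hypercontractivity with $k \ge 4$ and $r = 2$, applied to the direction $v = \Sigma^{-1/2} e_j$. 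This yields
\[
\E[z_{1,j}^4] \;=\; \E\bigl[\langle \Sigma^{-1/2} e_j,\, x\rangle^4\bigr] \;\le\; C_4 \bigl(\E[\langle \Sigma^{-1/2} e_j,\, x\rangle^2]\bigr)^2 \;=\; C_4,
\]
since $\langle \Sigma^{-1/2} e_j, x\rangle$ has unit variance. Consequently $\E[z_{1,j}^2 z_{1,k}^2] \le C_4$ for all $j, k$, so $\E[\|M - I\|_F^2] \le C_4 d^2 / n$.

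The final step is Markov's inequality applied to the nonnegative random variable $\|M - I\|_F^2$: with probability at least $1 - \delta$,
\[
\|M - I\|_F^2 \;\le\; \frac{C_4\, d^2}{n\,\delta},
\]
and taking square roots gives the stated bound $\|\Sigma^{-1/2}\hat\Sigma\Sigma^{-1/2} - I\|_F \le C_4 d^2/(\sqrt{n}\sqrt{\delta})$ (with room to spare, in fact $\sqrt{C_4}\, d/\sqrt{n\delta}$). There is no substantive obstacle here: the only nontrivial ingredient is the fourth-moment bound, which is an immediate consequence of the hypercontractivity hypothesis with $r = 2$ applied in the whitened basis; everything else is a second-moment computation and Markov. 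If a tighter high-probability bound were desired one could replace Markov by a matrix concentration inequality, but for the stated polynomial-in-$d$ Frobenius bound the elementary Chebyshev approach suffices.
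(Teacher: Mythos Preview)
Your proof is correct and follows essentially the same approach as the paper: whiten by setting $z_i = \Sigma^{-1/2}x_i$, bound the entrywise fourth moments $\E[z_{1,j}^4] \le C_4$ via hypercontractivity applied to $v = \Sigma^{-1/2}e_j$, and then apply a Chebyshev/Markov-type concentration step. The only minor difference is in how the concentration is packaged: the paper applies Chebyshev to each entry of $M-I$ individually and then takes a union bound over the $d^2$ entries, whereas you compute $\E[\|M-I\|_F^2]$ directly and apply Markov once to the scalar $\|M-I\|_F^2$. Your version is a bit cleaner (no union bound) and, as you note, actually yields the sharper bound $\sqrt{C_4}\,d/\sqrt{n\delta}$ rather than $C_4 d^2/\sqrt{n\delta}$; the paper also uses AM--GM instead of Cauchy--Schwarz to bound $\E[z_{1,j}^2 z_{1,k}^2]$, but that is an immaterial difference.
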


Next, we define the technical conditions required for efficient estimators. Formally, 

\begin{definition}[Certifiable Hypercontractivity]
\label{def:certifiable_hypercontractivity}
A distribution $\calD$ on $\R^d$ is $(c_k, k)$-certifiably hypercontractive if for all $r\leq k/2$, there exists a degree $\bigO{k}$ sum-of-squares proof (defined below) of the following inequality in the variable $v$
\begin{equation*}
    \expecf{x\sim \calD }{ \Iprod{x,v}^{2r} } \leq  \expecf{x\sim\calD}{ c_r \Iprod{x,v}^2 }^{r}
\end{equation*}
such that $c_r \leq c_k$.
\end{definition}

Next, we note that if a distribution $\calD$ is certifiably hypercontractive, the uniform distribution over $n$ i.i.d. samples from $\calD$ is also certifiably hypercontractive. 

\begin{fact}[Sampling Preserves Certifiable Hypercontractivity, Lemma 5.5 \cite{kothari2017outlier} ]
\label{fact:sampling_preserves_hypercontract}
Let $\calD$ be a $(c_k, k)$-certifiably hypercontractive distribution on $\R^d$. Let $\calX$ be a set of $n = \Omega\Paren{ \Paren{d\log(d/\delta)}^{k/2}/\gamma^2}$ i.i.d. samples from $\calD$. Then, with probability $1-\delta$, the uniform distribution over $\calX$ is $(c_k + \gamma, k)$-certifiably hypercontractive. 
\end{fact}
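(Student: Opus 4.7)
The plan is to reduce the claim to matrix concentration on the moment tensors of $\calD$, then compose the resulting spectral bounds with the population certifiable hypercontractivity in an SoS-preserving way. For each $r \le k/2$, define the flattened $d^r \times d^r$ moment matrices
\[
M_r^{*} = \expecf{x \sim \calD}{(x^{\otimes r})(x^{\otimes r})^{\top}}, \qquad \hat{M}_r = \frac{1}{n}\sum_{i \in [n]}(x_i^{\otimes r})(x_i^{\otimes r})^{\top},
\]
so that $\frac{1}{n}\sum_i \Iprod{x_i, v}^{2r} = \Iprod{v^{\otimes r}, \hat M_r v^{\otimes r}}$. The $(c_k, k)$-certifiable hypercontractivity of $\calD$ can be packaged as the PSD inequality $M_r^{*} \preceq c_r^r \Sigma^{\otimes r}$ on the symmetric subspace, where $\Sigma = M_1^{*}$; any such PSD inequality on $d^r \times d^r$ matrices yields a degree-$2r$ SoS certificate in $v$ via Cholesky factorization.

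The first step is to obtain two matrix concentration bounds by applying matrix Bernstein to the mean-zero matrices $(x_i^{\otimes r})(x_i^{\otimes r})^{\top} - M_r^{*}$ and $x_i x_i^{\top} - \Sigma$: with probability at least $1-\delta$,
\[
(1-\eta_r)\, M_r^{*} \preceq \hat M_r \preceq (1+\eta_r)\, M_r^{*}, \qquad (1-\eta_1)\, \Sigma \preceq \hat\Sigma \preceq (1+\eta_1)\, \Sigma,
\]
with $\eta_1, \eta_r = \bigO{\gamma/(c_k k)}$. Both are PSD inequalities, so they lift automatically to degree-$2r$ SoS inequalities in $v$. The second step is to tensorize the lower bound on $\hat\Sigma$ via the telescoping identity
\[
\hat\Sigma^{\otimes r} - (1-\eta_1)^{r} \Sigma^{\otimes r} = \sum_{j=0}^{r-1} \hat\Sigma^{\otimes j} \otimes \big(\hat\Sigma - (1-\eta_1)\Sigma\big) \otimes \big((1-\eta_1)\Sigma\big)^{\otimes (r-1-j)} \succeq 0,
\]
which is equivalent to the degree-$2r$ SoS inequality $\Iprod{v,\hat\Sigma v}^{r} \ge (1-\eta_1)^{r} \Iprod{v, \Sigma v}^{r}$. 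Composing the three certificates yields
\[
\frac{1}{n} \sum_{i \in [n]} \Iprod{x_i, v}^{2r} \leq \frac{(1+\eta_r)\,c_r^r}{(1-\eta_1)^{r}}\, \Iprod{v, \hat\Sigma v}^{r}
\]
with a combined degree-$\bigO{k}$ SoS proof, and the choice $\eta_r, r \eta_1 = \bigO{\gamma/c_k}$ forces the prefactor down to $(c_k + \gamma)^r$, matching Definition~\ref{def:certifiable_hypercontractivity}.

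The main obstacle is the matrix concentration at $r = k/2$. A direct matrix Bernstein bound requires both a variance proxy and an almost-sure bound on $\Norm{(x^{\otimes r})(x^{\otimes r})^{\top}}_{2} = \Norm{x}_2^{2r}$, the latter a degree-$k$ quantity that hypercontractivity only controls in expectation. The standard remedy is to truncate: restrict to the event that every sample satisfies $\Norm{x_i}_2^{k} \le \mathrm{poly}(d\log(d/\delta))$, which holds with probability $1-\delta/2$ by Markov applied to the $k$-th moment, and then apply matrix Bernstein to the truncated matrices on the symmetric subspace of dimension $\bigO{d^{k/2}}$. The polynomial-in-$\log$ truncation bound is precisely what contributes the additional $(\log(d/\delta))^{k/2}$ factor in the sample complexity above the ideal $d^{k/2}/\gamma^2$.
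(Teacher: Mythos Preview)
The paper does not prove this statement; it records it as a Fact and cites Lemma~5.5 of \cite{kothari2017outlier} for the proof, so there is no in-paper argument to compare against.

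Your outline is the standard route and is essentially correct: concentrate the flattened $2r$-th moment matrices $\hat M_r$ around $M_r^{*}$, concentrate $\hat\Sigma$ around $\Sigma$, and splice these spectral bounds with the population SoS certificate. Two small points deserve tightening. First, matrix Bernstein yields an additive operator-norm bound $\Norm{\hat M_r - M_r^{*}}_{2} \le \eta$, not a multiplicative one; the cleanest way to convert to the multiplicative form you need is to first reduce to $\Sigma = I$ via the affine-invariance fact (Fact~\ref{fact:certifiable_hypercontractive_under_affine_transformations}), so that $\Norm{v}^{2r} = \Iprod{v,\Sigma v}^{r}$ and an additive $\eta \Norm{v}^{2r}$ slack is already of the right shape. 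Second, your packaging of certifiable hypercontractivity as a PSD inequality on the symmetric subspace is exactly right for the present definition: since both sides of the inequality are homogeneous of degree $2r$ in $v$, any SoS proof is a sum of squares of homogeneous degree-$r$ polynomials, which is equivalent to the symmetrized matrix $c_r^r\,\Sigma^{\otimes r} - M_r^{*}$ being PSD when tested against $v^{\otimes r}$; the Cholesky factor then supplies the degree-$2r$ certificate. With these two adjustments your truncation-plus-Bernstein argument goes through and recovers the stated $n = \Omega\bigl((d\log(d/\delta))^{k/2}/\gamma^2\bigr)$ sample complexity.
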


We also note that certifiably hypercontractivity is preserved under Affine transformations of the distribution. 

\begin{fact}[Certifiable Hypercontractivity under Affine Transformations, Lemma 5.1, 5.2 \cite{kothari2017outlier}]
\label{fact:certifiable_hypercontractive_under_affine_transformations}
Let $x \in \R^d$ be a random variable drawn from a $(c_k, k)$-certifiably hypercontractive distribution. Then, for matrix $A$ and vector $b$, the distribution over the random variable $Ax + b$ is also $(c_k, k)$-certifiably hypercontractive. 
\end{fact}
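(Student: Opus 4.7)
The plan is to reduce the statement to the observation that sum-of-squares proofs are closed under polynomial substitution of variables. Let $y = Ax + b$, and write $\mu_y = A\E[x] + b$ for its mean. Since the definition in question concerns centered moments, I would first observe that $y - \mu_y = A(x - \E[x])$, so the translation $b$ drops out immediately and we may assume without loss of generality that $b = 0$ and that $x$ is centered.

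Next, for any vector $v \in \R^d$, I would use the identity $\Iprod{Ax, v} = \Iprod{x, A^\top v}$ to rewrite
\begin{equation*}
\expecf{y}{\Iprod{y, v}^{2r}} \;=\; \expecf{x}{\Iprod{x, A^\top v}^{2r}}, \qquad \expecf{y}{\Iprod{y, v}^{2}} \;=\; \expecf{x}{\Iprod{x, A^\top v}^{2}}.
\end{equation*}
Thus the desired inequality in the formal variable $v$, namely
\begin{equation*}
\expecf{y}{\Iprod{y, v}^{2r}} \;\leq\; \Paren{c_r \expecf{y}{\Iprod{y, v}^{2}}}^{r},
\end{equation*}
is obtained from the certifiable hypercontractivity inequality for $x$ in the formal variable $w$ by substituting $w \mapsto A^\top v$.

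The crux is that sum-of-squares proofs are preserved under polynomial (in particular, linear) substitution of the formal variables: if $p(w) = \sum_i q_i(w)^2 + \sum_j r_j(w) g_j(w)$ is a degree-$d$ SoS derivation from axioms $\{g_j \geq 0\}$, then replacing $w$ by $A^\top v$ gives $p(A^\top v) = \sum_i q_i(A^\top v)^2 + \sum_j r_j(A^\top v) g_j(A^\top v)$, which is a degree-$d$ SoS derivation from the substituted axioms. Here the hypercontractivity axiom system contains no auxiliary constraints on $w$, so the substitution yields a degree-$O(k)$ SoS proof in $v$ of exactly the inequality we want for $y = Ax$. The degree and the constant $c_r$ are unchanged, which matches the statement of the fact.

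I do not anticipate a serious technical obstacle: the only point that deserves care is the formal bookkeeping that substitution preserves both the SoS structure and the degree, which is a standard closure property of the SoS proof system (see, e.g., the treatment in \cite{kothari2017outlier}). The centering step and the identity $\Iprod{Ax, v} = \Iprod{x, A^\top v}$ are elementary, so the entire argument amounts to invoking substitution closure after the initial reduction.
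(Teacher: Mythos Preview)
Your proposal is correct and is precisely the standard argument. Note, however, that the paper does not actually give its own proof of this fact: it is stated as a preliminary with a citation to Lemmas~5.1--5.2 of \cite{kothari2017outlier}. Your argument---reducing the translation away via centering, then handling the linear map by the substitution $w \mapsto A^\top v$ and invoking the Substitution Rule for SoS proofs (which the paper records explicitly in its preliminaries)---is exactly the route taken in that reference, so there is nothing further to compare.
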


Next, we formally define the condition on the moments and noise that we require to obtain efficient algorithms. We note that for technical reasons it is not simply a polynomial identity encoding Definition \ref{def:negative_moments}.

\begin{definition}[Certifiable Negatively Correlated Moments]
\label{def:certifiable_ncm}
A distribution $\calD$ on $\R^d \times \R$ has $\bigO{1}$-certifiable negatively correlated moments if for all $r \leq k/2$ there exists a degree $\bigO{k}$ sum-of-squares proof of the following inequality 
\begin{equation*}
    \expecf{x,y\sim\calD}{ \Paren{  v^\top x  \Paren{ y - x^{\top} \Theta} }^{2r} } \leq \bigO{\lambda_r^r }  \Paren{ \expecf{}{  ( v^{\top}x)^{2}}^{r}}  \Paren{ \expecf{}{ \Paren{y - x^{\top} \Theta}^{2} }^{r} }
\end{equation*}
for a fixed vector $\Theta$. 
\end{definition}

\subsection{SoS Background.}
\label{subsec:sos_prelims}

In this subsection, we provide the necessary background for the sum-of-squares proof system. We follow the exposition as it appears in lecture notes by Barak~\cite{barak2016proofs}, the Appendix of Ma, Shi and Steurer~\cite{ma2016polynomial}, and the preliminary sections of several recent works~\cite{kothari2017outlier,karmalkar2019list,klivans2018efficient, bakshi2020list, bakshi2020outlier}.   

\paragraph{Pseudo-Distributions.}
We can represent a discrete probability distribution over $\R^n$ by its probability mass function $D\from \R^n \to \R$ such that $D \geq 0$ and $\sum_{x \in \mathrm{supp}(D)} D(x) = 1$.
Similarly, we can describe a pseudo-distribution by its mass function by relaxing the non-negativity constraint, while still passing certain low-degree non-negativity tests.

\begin{definition}[Pseudo-distribution]
A level-$\ell$ pseudo-distribution is a finitely-supported function $D:\R^n \rightarrow \R$ such that $\sum_{x} D(x) = 1$ and $\sum_{x} D(x) f(x)^2 \geq 0$ for every polynomial $f$ of degree at most $\ell/2$, where the summation is over all $x$ in the support of $D$.
\end{definition}
Next, we define the notion of pseudo-expectation.
\begin{definition}[Pseudo-expectation]
The pseudo-expectation of a function $f$ on $\R^d$ with respect to a pseudo-distribution $D$, denoted by $\pE_{D(x)}[f(x)]$,  is defined as
\begin{equation}
  \pE_{D(x)}[f(x)] = \sum_{x} D(x) f(x) \,
\end{equation}
\end{definition}
We use the notation $\pE_{D(x)}[(1,x_1, x_2,\ldots, x_n)^{\otimes \ell}]$ to denote the degree-$\ell$ moment tensor of the pseudo-distribution $D$.
In particular, each entry in the moment tensor corresponds to the pseudo-expectation of a monomial of degree at most $\ell$ in $x$. 
Crucially, there's an efficient separation oracle for moment tensors of pseudo-distributions. 

\begin{fact}[\cite{MR939596-Shor87,parrilo2000structured,MR1748764-Nesterov00,MR1846160-Lasserre01}]
 \label{fact:sos-separation-efficient}
  For any $n,\ell \in \N$, the following set has a $n^{O(\ell)}$-time weak separation oracle (in the sense of \cite{MR625550-Grotschel81}):
  \begin{equation}
    \Set{ \pE_{D(x)} (1,x_1, x_2, \ldots, x_n)^{\otimes \ell } \mid \text{ degree-$\ell$ pseudo-distribution $D$ over $\R^n$}}\, 
  \end{equation}
\end{fact}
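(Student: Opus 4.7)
The plan is to identify the set of moment tensors of degree-$\ell$ pseudo-distributions with a spectrahedron (the intersection of an affine subspace with the PSD cone) in $n^{O(\ell)}$ dimensions, and then invoke the standard fact that spectrahedra admit a weak separation oracle computable in time polynomial in their dimension via the ellipsoid method.

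First, I would set up the correspondence between pseudo-distributions and linear functionals. Given a candidate moment tensor $T$ of degree $\ell$, define a linear map $L_T$ on polynomials of degree $\leq \ell$ by reading off the coefficients of $T$ in the monomial basis, so that $L_T(x^\alpha) = T_\alpha$ for every multi-index $\alpha$ with $|\alpha| \leq \ell$. The key claim is that $T$ arises as $\pE_{D(x)}(1,x_1,\ldots,x_n)^{\otimes \ell}$ for some degree-$\ell$ pseudo-distribution $D$ if and only if $L_T(1) = 1$ and the moment matrix $M(T)$, indexed by monomials of degree $\leq \ell/2$ with entries $M(T)_{\alpha,\beta} = L_T(x^{\alpha+\beta})$, is positive semidefinite. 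The ``only if'' direction is immediate: for any polynomial $f$ of degree $\leq \ell/2$ with coefficient vector $c$, one has $c^\top M(T) c = \pE_D[f(x)^2] \geq 0$ by the pseudo-distribution axiom, and the normalization $\sum_x D(x) = 1$ translates to $L_T(1) = 1$. The ``if'' direction is the standard observation that any linear functional on degree-$\ell$ polynomials satisfying $L(1)=1$ and $L(f^2) \geq 0$ can be written as $\sum_x D(x) f(x)$ for some finitely-supported signed measure $D$, by writing it as a combination of point evaluations (e.g., one can take $D$ supported on a generic set of $\binom{n+\ell}{\ell}$ points and solve a linear system; support size equals the dimension of the polynomial space).

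Given this characterization, the set in question is exactly
\[
\Set{T \in \R^{N} \,:\, L_T(1) = 1,\ M(T) \succeq 0},
\]
where $N = \binom{n+\ell}{\ell} = n^{O(\ell)}$ is the dimension of the moment tensor. This is a spectrahedron of dimension $n^{O(\ell)}$ whose defining PSD constraint lives in a matrix of side length $\binom{n+\ell/2}{\ell/2} = n^{O(\ell)}$. A weak separation oracle for such a set is obtained in the textbook way: given a candidate $T$, one first checks the linear constraint $L_T(1)=1$ (trivial), and then computes the minimum eigenvalue and a corresponding unit eigenvector $c$ of $M(T)$ using an approximate eigensolver in time polynomial in the matrix size; if the eigenvalue is sufficiently negative one returns the separating hyperplane $\langle c c^\top, M(\cdot)\rangle \geq 0$ rewritten as a linear functional in $T$, and otherwise one certifies approximate membership. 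This fits the weak separation framework of Gr\"otschel--Lov\'asz--Schrijver exactly, giving the claimed $n^{O(\ell)}$ running time.

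The only delicate point, and the one I would spend the most care on, is the ``if'' direction of the moment correspondence: one must verify that every linear functional whose moment matrix is PSD really is representable as a finitely-supported (possibly signed) pseudo-distribution, so that passing between the functional-analytic and mass-function definitions is lossless. This can be handled by a direct dimension-counting argument, showing that point evaluations at $N$ generic points span all linear functionals on degree-$\ell$ polynomials, so any $L_T$ is a finite linear combination of evaluations at those points, yielding $D$. With that lemma in hand, the rest of the argument is a direct application of SDP separation technology, and no further obstacle arises.
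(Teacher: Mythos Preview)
The paper does not prove this statement at all; it is recorded as a \emph{Fact} with citations to the foundational SoS/SDP literature (Shor, Parrilo, Nesterov, Lasserre) and invoked as a black box. Your sketch is essentially the standard argument underlying those references --- identifying degree-$\ell$ pseudo-moments with a spectrahedron via the PSD moment-matrix characterization and then applying polynomial-time eigenvalue-based separation --- and is correct. There is simply nothing in the paper to compare against.
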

This fact, together with the equivalence of weak separation and optimization \cite{MR625550-Grotschel81} forms the basis of the sum-of-squares algorithm, as it allows us to efficiently approximately optimize over pseudo-distributions. 

Given a system of polynomial constraints, denoted by $ \calA$, we say that it is \emph{explicitly bounded} if it contains a constraint of the form $\{ \|x\|^2 \leq M\}$. Then, the following fact follows from  Fact \ref{fact:sos-separation-efficient} and \cite{MR625550-Grotschel81}:

\begin{fact}[Efficient Optimization over Pseudo-distributions]
There exists an $(n+ m)^{O(\ell)} $-time algorithm that, given any explicitly bounded and satisfiable system\footnote{Here, we assume that the bit complexity of the constraints in $ \calA$ is $(n+m)^{O(1)}$.} $ \calA$ of $m$ polynomial constraints in $n$ variables, outputs a level-$\ell$ pseudo-distribution that satisfies $ \calA$ approximately. \label{fact:eff-pseudo-distribution}
\end{fact}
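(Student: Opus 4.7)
The plan is to reduce the task of finding a pseudo-distribution satisfying $\calA$ to a convex feasibility problem over moment tensors, then solve it via the ellipsoid method instantiated with the separation oracle supplied by Fact~\ref{fact:sos-separation-efficient}. Concretely, a level-$\ell$ pseudo-distribution $D$ over $\R^n$ is completely characterized by its degree-$\ell$ moment tensor $\pE_D[(1,x_1,\ldots,x_n)^{\otimes \ell}]$, which lives in $\R^{n^{O(\ell)}}$. The set $K$ of valid moment tensors (those arising from some level-$\ell$ pseudo-distribution) is convex, and Fact~\ref{fact:sos-separation-efficient} already furnishes a weak separation oracle for $K$ running in time $n^{O(\ell)}$.

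Next, I would encode approximate satisfaction of each polynomial constraint $p_i(x) \geq 0$ in $\calA$ as a system of linear constraints on the moment tensor. For an inequality constraint of degree $d_i$, the standard SoS encoding demands that $\pE_D[p_i(x)\, q(x)^2] \geq 0$ for every polynomial $q$ of degree at most $(\ell - d_i)/2$; this is equivalent to the positive semidefiniteness of a matrix whose entries are linear in the moment tensor, and so admits a weak separation oracle via an eigenvector computation. Equality constraints are handled analogously. Since $\calA$ has $m$ constraints and degrees are bounded by $\ell$, the total description of the feasibility program has size $(n+m)^{O(\ell)}$ and a separation oracle of the same complexity, obtained by invoking whichever of the oracles (for $K$ or for the constraints) the current candidate violates.

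The explicit boundedness hypothesis $\{\|x\|^2 \leq M\} \in \calA$ plays an essential role here: it caps every moment of $D$ in absolute value by $M^{O(\ell)}$, so the feasible set lies in a Euclidean ball of radius $M^{O(\ell)}$ in moment-tensor space. This supplies the ``outer radius'' required to instantiate the ellipsoid method, and combined with the footnoted bit-complexity assumption on $\calA$, it ensures that all intermediate numbers have polynomially many bits. Invoking the separation-to-optimization equivalence of Gr\"{o}tschel, Lov\'{a}sz and Schrijver~\cite{MR625550-Grotschel81}, the ellipsoid method then converges in $(n+m)^{O(\ell)}$ iterations to a point in moment-tensor space that is within exponentially small error of the feasible region.

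The main obstacle is the standard one: the ellipsoid method with a weak separation oracle returns only an approximately feasible point, not an exactly feasible one, and we must argue that this approximate feasibility can be upgraded to a genuine level-$\ell$ pseudo-distribution approximately satisfying $\calA$. I would address this in the usual way, by relaxing each SoS positivity constraint by an exponentially small additive slack (so that any truly satisfiable instance becomes strictly feasible with a small ``inner radius'') and then absorbing this slack into the ``approximately satisfies'' guarantee in the statement. Bit-complexity control then propagates through the standard analysis, yielding an overall running time of $(n+m)^{O(\ell)}$ and a level-$\ell$ pseudo-distribution whose moment tensor approximately satisfies every constraint of $\calA$.
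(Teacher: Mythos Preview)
Your proposal is correct and matches the paper's approach: the paper does not give a self-contained proof of this fact but instead states it as a consequence of Fact~\ref{fact:sos-separation-efficient} combined with the Gr\"{o}tschel--Lov\'{a}sz--Schrijver separation-to-optimization equivalence~\cite{MR625550-Grotschel81}, which is precisely the reduction you spell out. Your additional care with the explicit-boundedness hypothesis and the approximate-feasibility slack is the standard way to make that citation rigorous, so nothing more is needed.
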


We now define basic facts for pseudo-distributions, which extend facts that hold for standard probability distributions, which can be found in several prior works listed above.

\begin{fact}[Cauchy-Schwarz for Pseudo-distributions]
Let $f,g$ be polynomials of degree at most $d$ in indeterminate $x \in \R^d$. Then, for any degree d pseudo-distribution $\tzeta$,
$\pE_{\tzeta}[fg] \leq \sqrt{\pE_{\tzeta}[f^2]} \sqrt{\pE_{\tzeta}[g^2]}$.
 \label{fact:pseudo-expectation-cauchy-schwarz}
\end{fact}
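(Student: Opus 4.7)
The plan is to mimic the classical proof of the Cauchy--Schwarz inequality, leveraging the defining property of pseudo-distributions: pseudo-expectations of squares of sufficiently low-degree polynomials are non-negative. The key move is to introduce an auxiliary one-parameter family of polynomials and optimize over the parameter.

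Concretely, I would first define the polynomial $h_\lambda(x) = f(x) + \lambda g(x)$ for a scalar $\lambda \in \R$. Since $f$ and $g$ each have degree at most $d$, so does $h_\lambda$, which means $h_\lambda^2$ is the square of a polynomial whose degree is at most half the degree of $\tzeta$ (under the paper's convention on the relationship between the pseudo-distribution's degree and the polynomials whose squares it certifies). By the defining non-negativity axiom of pseudo-distributions, we therefore have
\begin{equation*}
0 \leq \pE_{\tzeta}[h_\lambda^2] = \pE_{\tzeta}[f^2] + 2\lambda \, \pE_{\tzeta}[fg] + \lambda^2 \, \pE_{\tzeta}[g^2]
\end{equation*}
for every $\lambda \in \R$, using linearity of pseudo-expectation (which follows directly from its definition as a linear functional).

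Next, I would extract the inequality by optimizing over $\lambda$. In the generic case where $\pE_{\tzeta}[g^2] > 0$, substituting the minimizing choice $\lambda = -\pE_{\tzeta}[fg]/\pE_{\tzeta}[g^2]$ into the quadratic yields
\begin{equation*}
\pE_{\tzeta}[f^2] - \frac{(\pE_{\tzeta}[fg])^2}{\pE_{\tzeta}[g^2]} \geq 0,
\end{equation*}
and rearranging gives $(\pE_{\tzeta}[fg])^2 \leq \pE_{\tzeta}[f^2] \, \pE_{\tzeta}[g^2]$, from which the stated bound follows by taking square roots. The degenerate case $\pE_{\tzeta}[g^2] = 0$ must be handled separately: if $\pE_{\tzeta}[fg] \neq 0$, then sending $\lambda \to \pm\infty$ (with sign opposite to $\pE_{\tzeta}[fg]$) makes the linear-in-$\lambda$ term drive the expression negative, contradicting the non-negativity of $\pE_{\tzeta}[h_\lambda^2]$; hence $\pE_{\tzeta}[fg] = 0$ and the inequality holds trivially.

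The only real obstacle is careful degree bookkeeping: one must confirm that $h_\lambda^2$ is indeed among the squares certified non-negative by $\tzeta$, which requires the pseudo-distribution's degree to be at least $2d$. Under the paper's convention (where ``degree $d$ pseudo-distribution'' is used compatibly with polynomials of degree at most $d$), this is immediate; otherwise one should read the hypothesis as a degree-$2d$ pseudo-distribution. Apart from this minor technicality, the proof is a direct transcription of the classical argument, and linearity together with the squares-axiom is all that is needed.
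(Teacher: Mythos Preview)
Your proof is correct and is the standard argument for Cauchy--Schwarz over pseudo-distributions. Note, however, that the paper does not actually prove this statement: it is listed as a basic fact ``which can be found in several prior works,'' so there is no paper proof to compare against. Your observation about degree bookkeeping is apt---as stated, the fact should really require a degree-$2d$ pseudo-distribution so that $\pE_{\tzeta}[fg]$, $\pE_{\tzeta}[f^2]$, and $\pE_{\tzeta}[g^2]$ are all well-defined and $\pE_{\tzeta}[(f+\lambda g)^2] \geq 0$ is guaranteed by the axiom; this is a common abuse of notation in the literature.
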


\begin{fact}[Hölder's Inequality for Pseudo-Distributions] \label{fact:pseudo-expectation-holder}
Let $f,g$ be polynomials of degree at most $d$ in indeterminate $x \in \R^d$. 
Fix $t \in \N$. Then, for any degree $dt$ pseudo-distribution $\tzeta$,
$\pE_{\tzeta}[f^{t-1}g] \leq \paren{\pE_{\tzeta}[f^t]}^{\frac{t-1}{t}} \paren{\pE_{\tzeta}[g^t]}^{1/t}$. In particular, for all even integers $k $,
$\pE_{\tzeta}[f]^k \leq\pE_{\tzeta}[f^k]$.
\end{fact}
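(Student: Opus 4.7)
The plan is to derive H\"older's inequality for pseudo-distributions from an explicit sum-of-squares certificate of the scalar Young inequality, followed by a one-parameter optimization. I focus on even $t$, where the argument is cleanest; general $t \in \N$ reduces to this case by monotonicity of $r \mapsto \pE_{\tzeta}[f^r]^{1/r}$ (itself a consequence of the even-$t$ bound) or by embedding into a power-of-two sized product.

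The first step is to exhibit an explicit SoS certificate for scalar Young's inequality: for even $t \geq 2$, direct coefficient-matching yields
\[
(t-1) X^t + Y^t - t X^{t-1} Y \;=\; (X - Y)^2 \cdot Q_t(X, Y),
\]
with $Q_t(X,Y) = \sum_{j=0}^{t-2} (t-1-j)\, X^{t-2-j} Y^j$. The left-hand side is non-negative on $\R^2$ by weighted AM--GM applied to $t-1$ copies of $X^t$ and one copy of $Y^t$ (both non-negative since $t$ is even), so $Q_t$ is a non-negative bivariate form of even degree $t-2$; by Hilbert's classical theorem for non-negative bivariate forms, $Q_t$ is a sum of squares of polynomials of degree $(t-2)/2$, giving a degree-$t$ SoS certificate for the original expression.

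Second, substituting $X \mapsto \alpha f$ and $Y \mapsto \alpha^{-(t-1)} g$ for any free parameter $\alpha > 0$ rescales this SoS identity to a polynomial SoS inequality in $x$ of degree at most $dt$,
\[
t f^{t-1} g \;\leq\; (t-1)\alpha^t f^t + \alpha^{-t(t-1)} g^t.
\]
Taking pseudo-expectation with respect to $\tzeta$ (valid since $\tzeta$ has degree at least $dt$) yields
\[
t\cdot\pE_{\tzeta}[f^{t-1} g] \;\leq\; (t-1)\alpha^t \pE_{\tzeta}[f^t] + \alpha^{-t(t-1)} \pE_{\tzeta}[g^t],
\]
and minimizing the right-hand side over $\alpha > 0$ at $\alpha = (\pE_{\tzeta}[g^t]/\pE_{\tzeta}[f^t])^{1/t^2}$ gives the main bound $\pE_{\tzeta}[f^{t-1} g] \leq \pE_{\tzeta}[f^t]^{(t-1)/t} \pE_{\tzeta}[g^t]^{1/t}$. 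Degenerate cases where a moment vanishes are handled separately via Cauchy--Schwarz (Fact \ref{fact:pseudo-expectation-cauchy-schwarz}).

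The ``in particular'' consequence $\pE_{\tzeta}[f]^k \leq \pE_{\tzeta}[f^k]$ for even $k$ follows by applying the main inequality with the constant polynomial $1$ in place of the first argument and $\pm f$ in place of the second, yielding $|\pE_{\tzeta}[f]| \leq \pE_{\tzeta}[f^k]^{1/k}$, which raises to the $k$-th power to give the claim. The principal technical obstacle is supplying a uniformly degree-bounded SoS certificate for $Q_t$: while Hilbert's theorem guarantees existence of a sum-of-squares representation, an explicit inductive construction that is valid for all even $t$ and fits within the $dt$ degree budget of the pseudo-distribution requires careful bookkeeping, which I expect to handle via a recursion between $Q_t$ and $Q_{t-2}$ derived from the factorization above.
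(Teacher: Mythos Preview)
The paper does not supply a proof of this fact; it is listed in the preliminaries (Section~\ref{subsec:sos_prelims}) as a standard background result on pseudo-distributions, alongside Cauchy--Schwarz. So there is no paper proof to compare against, and the relevant question is whether your argument stands on its own.

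For even $t$ your route is sound. The factorization $(t-1)X^t + Y^t - tX^{t-1}Y = (X-Y)^2 Q_t(X,Y)$ with $Q_t(X,Y)=\sum_{j=0}^{t-2}(t-1-j)X^{t-2-j}Y^j$ is correct (direct coefficient check), $Q_t$ is a non-negative bivariate form of even degree $t-2$ and hence SoS by Hilbert's 1888 theorem, and multiplying by the square $(X-Y)^2$ keeps the certificate SoS of total degree $t$. After the substitution $X\mapsto \alpha f$, $Y\mapsto \alpha^{-(t-1)}g$ the certificate has degree $dt$ in $x$, exactly the degree of $\tzeta$, so the pseudo-expectation step and the optimization over $\alpha$ go through as you wrote. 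The degenerate case $\pE_{\tzeta}[f^t]=0$ is handled by one application of Cauchy--Schwarz, writing $f^{t-1}g = f^{t/2}\cdot f^{t/2-1}g$. Your derivation of the ``in particular'' clause is also fine.

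Two small comments. First, your hedging at the end (``I expect to handle via a recursion between $Q_t$ and $Q_{t-2}$'') is unnecessary: Hilbert's theorem already pins the SoS certificate for $Q_t$ at degree exactly $t-2$, because any SoS decomposition of a degree-$(t-2)$ bivariate form uses forms of degree $(t-2)/2$; no additional bookkeeping is needed to fit inside the $dt$ budget. Second, your reduction of odd $t$ to even $t$ via ``monotonicity of $r\mapsto \pE_{\tzeta}[f^r]^{1/r}$'' is circular as stated (that monotonicity is essentially what you are proving), and for odd $t$ the quantities $\pE_{\tzeta}[f^t]$, $\pE_{\tzeta}[g^t]$ need not be non-negative, so the fractional powers in the statement are ill-defined without further hypotheses. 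In practice the paper only invokes this fact with even exponents, so this is not a defect in your coverage of what is actually used.
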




\paragraph{Sum-of-squares proofs.}

Let $f_1, f_2, \ldots, f_r$ and $g$ be multivariate polynomials in $x$.
A \emph{sum-of-squares proof} that the constraints $\{f_1 \geq 0, \ldots, f_m \geq 0\}$ imply the constraint $\{g \geq 0\}$ consists of  polynomials $(p_S)_{S \subseteq [m]}$ such that
\begin{equation}
g = \sum_{S \subseteq [m]} p^2_S \cdot \Pi_{i \in S} f_i
\end{equation}
We say that this proof has \emph{degree $\ell$} if for every set $S \subseteq [m]$, the polynomial $p^2_S \Pi_{i \in S} f_i$ has degree at most $\ell$.
If there is a degree $\ell$ SoS proof that $\{f_i \geq 0 \mid i \leq r\}$ implies $\{g \geq 0\}$, we write:
\begin{equation}
  \{f_i \geq 0 \mid i \leq r\} \sststile{\ell}{}\{g \geq 0\}
\end{equation}
For all polynomials $f,g\colon\R^n \to \R$ and for all functions $F\colon \R^n \to \R^m$, $G\colon \R^n \to \R^k$, $H\colon \R^{p} \to \R^n$ such that each of the coordinates of the outputs are polynomials of the inputs, we have the following inference rules.

The first one derives new inequalities by addition/multiplication:
\begin{equation} \label{eq:sos-addition-multiplication-rule}
\frac{ \calA \sststile{\ell}{} \{f \geq 0, g \geq 0 \} } { \calA \sststile{\ell}{} \{f + g \geq 0\}}, \frac{ \calA \sststile{\ell}{} \{f \geq 0\},  \calA \sststile{\ell'}{} \{g \geq 0\}} { \calA \sststile{\ell+\ell'}{} \{f \cdot g \geq 0\}} \tag{Addition/Multiplication Rule} 
\end{equation}
The next one derives new inequalities by transitivity: 
\begin{equation} \label{eq:sos-transitivity}
\frac{ \calA \sststile{\ell}{}  \calB,  \calB \sststile{\ell'}{} C}{ \calA \sststile{\ell \cdot \ell'}{} C}   \tag{Transitivity Rule} 
\end{equation}
Finally, the last rule derives new inequalities via substitution:
\begin{equation} \label{eq:sos-substitution}
\frac{\{F \geq 0\} \sststile{\ell}{} \{G \geq 0\}}{\{F(H) \geq 0\} \sststile{\ell \cdot \deg(H)} {} \{G(H) \geq 0\}}\tag{Substitution Rule} 
\end{equation}

Low-degree sum-of-squares proofs are sound and complete if we take low-level pseudo-distributions as models.
Concretely, sum-of-squares proofs allow us to deduce properties of pseudo-distributions that satisfy some constraints.
\begin{fact}[Soundness]
  \label{fact:sos-soundness}
  If $D \sdtstile{r}{}  \calA$ for a level-$\ell$ pseudo-distribution $D$ and there exists a sum-of-squares proof $ \calA \sststile{r'}{}  \calB$, then $D \sdtstile{r\cdot r'+r'}{}  \calB$.
\end{fact}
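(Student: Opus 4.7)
The plan is to unwind the definitions and verify that linearity and positivity of the pseudo-expectation operator carry non-negativity through every step of the derivation. Concretely, a SoS proof $\calA \sststile{r'}{} \calB$ provides, for each constraint $\{g \geq 0\} \in \calB$, an explicit identity
\[
g \;=\; \sum_{S \subseteq \calA} p_S^{2} \cdot \prod_{i \in S} f_i,
\]
where every summand $p_S^{2} \prod_{i \in S} f_i$ is a polynomial of total degree at most $r'$ and the $f_i$ are the constraint polynomials of $\calA$. The goal is to show that, when paired with an arbitrary SoS test polynomial $q^2$ of suitably bounded degree, this identity is preserved by $\pE_D[\cdot]$ with a non-negative value on both sides.

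First I would fix a constraint $\{g \geq 0\}$ in $\calB$ and an arbitrary test polynomial $q$ whose degree is constrained by $2\deg(q)+\deg(g) \leq r \cdot r' + r'$. Multiplying the SoS decomposition through by $q^2$ yields
\[
q^2 g \;=\; \sum_{S \subseteq \calA} (q \cdot p_S)^{2} \cdot \prod_{i \in S} f_i,
\]
so by linearity of the pseudo-expectation,
\[
\pE_D\bigl[q^2 g\bigr] \;=\; \sum_{S \subseteq \calA} \pE_D\Bigl[(q p_S)^2 \, \prod_{i \in S} f_i\Bigr].
\]
Each summand on the right is of the canonical form "SoS polynomial times a subset-product of constraints of $\calA$." By the hypothesis $D \sdtstile{r}{} \calA$ these summands are non-negative provided each has total degree at most $r$, and a direct computation shows $\deg\bigl((q p_S)^2 \prod_{i \in S} f_i\bigr) \leq 2\deg(q) + r' \leq r \cdot r' + r' - \deg(g) + r' \leq r$, which is precisely the budget motivating the stated exponent. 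Summing non-negative terms then gives $\pE_D[q^2 g] \geq 0$, which is the definition of $D \sdtstile{r \cdot r' + r'}{} \calB$.

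The cleanest way to organize the degree bookkeeping is to prove soundness separately for each inference rule (Addition/Multiplication, Transitivity, Substitution) and then compose. The addition and multiplication rules follow immediately from linearity of $\pE_D$ and the fact that the product of two polynomials of "SoS-times-subset-product" form is again of that form after regrouping squares; transitivity and substitution then follow by chaining these. The only real obstacle is purely bookkeeping, namely matching the degree of the test polynomial $q$ on $\calB$ with the degree allowed under the hypothesis $D \sdtstile{r}{} \calA$, and accounting for the factor $r \cdot r'$ that arises because a constraint of $\calB$ may itself be derived by combining several constraints of $\calA$ whose degree budgets multiply when nested inside the Transitivity/Substitution rules.
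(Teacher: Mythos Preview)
The paper does not prove this statement; it is recorded as a standard background ``Fact'' from the sum-of-squares literature and is used as a black box, so there is no paper proof to compare against.

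Your overall strategy is the standard one and is correct in outline: take the SoS certificate $g=\sum_S p_S^2\prod_{i\in S}f_i$, multiply through by a test square $q^2$, apply linearity of $\pE_D$, and invoke the hypothesis $D\sdtstile{r}{}\calA$ on each summand $(qp_S)^2\prod_{i\in S}f_i$. That is exactly how soundness is established.

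However, your displayed degree chain
\[
\deg\bigl((q p_S)^2 \textstyle\prod_{i \in S} f_i\bigr)\;\leq\; 2\deg(q)+r'\;\leq\; r\cdot r'+r'-\deg(g)+r'\;\leq\; r
\]
is not right. From $2\deg(q)+\deg(g)\le r\cdot r'+r'$ you get $2\deg(q)+r'\le r\cdot r'+2r'-\deg(g)$, which is generically far larger than $r$, not smaller; so the final ``$\le r$'' does not follow and in fact points the wrong direction. What you actually need is that each summand has degree at most the level $\ell$ of the pseudo-distribution (so that $\pE_D$ is defined and the hypothesis applies), and then the subscript on the conclusion records the largest test-degree for which this succeeds. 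In other words, the bookkeeping should read: the hypothesis guarantees non-negativity whenever $\deg\bigl((qp_S)^2\prod_{i\in S}f_i\bigr)\le \ell$; since that degree is at most $2\deg(q)+r'$, the conclusion holds for all $q$ with $2\deg(q)\le \ell-r'$, and the parameter $r\cdot r'+r'$ in the statement is the paper's convention for this threshold (accounting for the degree-$r$ constraints in $\calA$ being nested inside the degree-$r'$ proof). Fixing this arithmetic, your argument goes through; the suggestion to reprove soundness rule-by-rule is unnecessary once the direct certificate argument is done correctly.
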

If the pseudo-distribution $D$ satisfies $ \calA$ only approximately, soundness continues to hold if we require an upper bound on the bit-complexity of the sum-of-squares $ \calA \sststile{r'}{} B$  (number of bits required to write down the proof). In our applications, the bit complexity of all sum of squares proofs will be $n^{O(\ell)}$ (assuming that all numbers in the input have bit complexity $n^{O(1)}$). This bound suffices in order to argue about pseudo-distributions that satisfy polynomial constraints approximately.

The following fact shows that every property of low-level pseudo-distributions can be derived by low-degree sum-of-squares proofs.
\begin{fact}[Completeness]
  \label{fact:sos-completeness}
  Suppose $d \geq r' \geq r$ and $ \calA$ is a collection of polynomial constraints with degree at most $r$, and $ \calA \vdash \{ \sum_{i = 1}^n x_i^2 \leq B\}$ for some finite $B$.

  Let $\{g \geq 0 \}$ be a polynomial constraint.
  If every degree-$d$ pseudo-distribution that satisfies $D \sdtstile{r}{}  \calA$ also satisfies $D \sdtstile{r'}{} \{g \geq 0 \}$, then for every $\epsilon > 0$, there is a sum-of-squares proof $ \calA \sststile{d}{} \{g \geq - \epsilon \}$.
\end{fact}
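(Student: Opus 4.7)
}
The plan is to establish this as a duality statement between two convex sets: the set of valid degree-$d$ pseudo-moment tensors satisfying $\calA$, and the cone of degree-$d$ sum-of-squares proofs from $\calA$. First I would fix the ambient vector space $V_d$ of polynomials in $x_1,\dots,x_n$ of degree at most $d$, and identify degree-$d$ pseudo-distributions with their moment tensors, which live in the dual space $V_d^*$. Define the convex set
\[
K \;=\; \Set{ L \in V_d^* \;\Big|\; L(1)=1,\; L(p^2)\geq 0 \text{ for } \deg(p)\leq d/2,\; L \text{ satisfies } \calA }.
\]
The hypothesis of the fact is precisely that $L(g) \geq 0$ for every $L \in K$. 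Dually, define the cone $C \subseteq V_d$ of polynomials admitting a degree-$d$ SoS proof from $\calA$, i.e.\ polynomials of the form $\sum_{S\subseteq[m]} p_S^2 \prod_{i\in S} f_i$ with the total degree of each term at most $d$. The goal is to show $g + \epsilon \in C$ for every $\epsilon > 0$.

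Next I would verify that $K$ is a compact convex set. Convexity is immediate from linearity of pseudo-expectation, and compactness is exactly where the hypothesis $\calA \vdash \{\sum_i x_i^2 \leq B\}$ is essential: it forces a uniform bound on every moment $|L(x^\alpha)|$ with $|\alpha| \leq d$ via repeated Cauchy--Schwarz (Fact \ref{fact:pseudo-expectation-cauchy-schwarz}) applied to the pseudo-distribution $L$, because any monomial $x^\alpha$ can be bounded in $L$-expectation by powers of $L(\sum_i x_i^2) \leq B$. Thus $K$ is a bounded closed subset of a finite-dimensional space, hence compact. Symmetrically, $C$ is a closed convex cone in $V_d$: it is the image of a spectrahedron (the PSD constraints on the Gram matrices of the $p_S$) under a continuous linear map with bounded preimages, so $C$ is closed.

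The main step is to invoke convex duality. Consider the linear functional $\ell_g : L \mapsto L(g)$ on $V_d^*$. By hypothesis $\ell_g \geq 0$ on $K$, so for any $\epsilon > 0$ the affine hyperplane $\{L : L(g) = -\epsilon\}$ is disjoint from $K$. By the separating hyperplane theorem applied in the finite-dimensional space $V_d^*$, together with the fact that $K$ is the intersection of the pseudo-moment cone with the affine slice $L(1) = 1$ and the linear constraints from $\calA$, strong SDP duality (which holds because the feasible region is compact and nonempty, ensuring Slater-type regularity after a slight perturbation) gives a representation certifying $g + \epsilon \in C$. Concretely, the dual of the SDP ``minimize $L(g)$ over $L\in K$'' is precisely the SoS certificate program of finding $\sigma_0,\{\sigma_S\}$ with $g + \epsilon = \sigma_0 + \sum_S \sigma_S \prod_{i\in S} f_i$ and $\sigma_S$ sums of squares of appropriate degree; zero duality gap then yields the desired degree-$d$ SoS proof $\calA \sststile{d}{} \{g \geq -\epsilon\}$.

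The main obstacle I anticipate is justifying strong duality cleanly. Weak duality is automatic, but to conclude that the dual optimum is attained (or at least approached, which is why $\epsilon$ appears in the conclusion rather than $0$) one must rule out a duality gap. Compactness of $K$ from the boundedness constraint is what saves us: it ensures the primal optimum is attained and finite, and standard SDP theory then gives no duality gap up to an arbitrarily small $\epsilon$, which is exactly the slack built into the statement. A secondary subtlety is bookkeeping the degree: each product $\sigma_S \prod_{i\in S} f_i$ in the certificate must have degree at most $d$, and this is encoded by truncating the dual SDP to degree-$d$ pseudo-moments from the outset, matching the level-$d$ pseudo-distribution assumption in the hypothesis.
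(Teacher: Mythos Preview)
The paper does not supply a proof of this statement at all: it is recorded in the preliminaries as a known ``Fact'' about the sum-of-squares proof system, with no argument given. So there is no paper proof to compare against.

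Your plan is the standard route to this completeness statement: identify degree-$d$ pseudo-distributions with moment functionals, identify degree-$d$ SoS certificates from $\calA$ with a convex cone $C\subseteq V_d$, and use conic/SDP duality together with compactness of $K$ (coming from the Archimedean constraint $\sum_i x_i^2\le B$) to separate. That is exactly how this result is proved in the references the paper cites for the SoS framework (Lasserre, Parrilo, and the survey material). One small caution: your assertion that $C$ is closed is not innocuous and is in fact not needed. The clean way to finish is to note that the dual cone of the (closed) pseudo-moment cone is $\overline{C}$ by biduality, so the hypothesis gives $g\in\overline{C}$; then the Archimedean condition ensures $1$ lies in the interior of $\overline{C}$ relative to the affine hull, which is what lets you push $g$ by any $\epsilon>0$ into $C$ itself. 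That is precisely the role of the $\epsilon$ slack in the conclusion, as you correctly anticipated.
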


\paragraph{Basic Sum-of-Squares Proofs.} Next, we use the following basic facts regarding sum-of-squares proofs. For further details, we refer the reader to a recent monograph~\cite{fleming2019semialgebraic}.

\begin{fact}[Operator norm Bound]
\label{fact:operator_norm}
Let $A$ be a symmetric $d\times d$ matrix and $v$ be a vector in $\mathbb{R}^d$. Then,
\[
\sststile{2}{v} \Set{ v^{\top} A v \leq \|A\|_2\|v\|^2_2 }.
\]
\end{fact}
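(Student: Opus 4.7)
The plan is to reduce the claim to the standard fact that a positive semidefinite matrix admits an explicit sum-of-squares decomposition as a quadratic form. First I would set $B \coloneqq \|A\|_2 \cdot I - A$. Since $A$ is symmetric, so is $B$, and its eigenvalues are $\|A\|_2 - \lambda_i(A) \geq 0$, so $B \succeq 0$. Using the spectral theorem, write $A = U \Lambda U^\top$ and let $M \coloneqq (\|A\|_2 \cdot I - \Lambda)^{1/2} U^\top$, so that $B = M^\top M$. Crucially, $M$ depends only on $A$, not on the indeterminate $v$.

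Then I would exhibit the algebraic identity
\[
\|A\|_2 \cdot \|v\|_2^2 - v^\top A v \;=\; v^\top B v \;=\; v^\top M^\top M v \;=\; \sum_{i=1}^d (Mv)_i^2,
\]
which writes $\|A\|_2 \|v\|_2^2 - v^\top A v$ as a sum of squares of the linear forms $(Mv)_i$ in $v$. Each such form has degree $1$, so each square has degree $2$, yielding the required degree-$2$ sum-of-squares proof of $v^\top A v \leq \|A\|_2 \|v\|_2^2$ in the variable $v$.

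There is essentially no obstacle here beyond invoking the spectral theorem for symmetric matrices; the only conceptual point worth flagging is that the decomposition $B = M^\top M$ must be produced once and for all from $A$ before the SoS certificate is written down, so that the resulting expression is a polynomial in $v$ alone, with all monomials of degree exactly $2$. This matches the $\sststile{2}{v}$ quantifier in the statement.
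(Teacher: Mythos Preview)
Your proof is correct and is the standard argument: the paper itself does not prove this fact, instead listing it among ``basic facts regarding sum-of-squares proofs'' and deferring to the monograph~\cite{fleming2019semialgebraic}. The spectral decomposition you give, writing $\|A\|_2 I - A = M^\top M$ and hence $\|A\|_2\|v\|_2^2 - v^\top A v = \sum_i (Mv)_i^2$, is exactly the canonical degree-$2$ certificate one would find there.
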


\begin{fact}[SoS Almost Triangle Inequality] \label{fact:sos-almost-triangle}
Let $f_1, f_2, \ldots, f_r$ be indeterminates. Then,
\[
\sststile{2t}{f_1, f_2,\ldots,f_r} \Set{ \Paren{\sum_{i\leq r} f_i}^{2t} \leq r^{2t-1} \Paren{\sum_{i =1}^r f_i^{2t}}}.
\]
\end{fact}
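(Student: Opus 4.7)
The plan is to induct on $t$, using the degree-$2$ Lagrange identity as the base case and bootstrapping via one SoS ``power-mean'' step at each level. Write $S_k := \sum_{i=1}^{r} f_i^{k}$; the target reads $\sststile{2t}{f} \{ S_1^{2t} \leq r^{2t-1} S_{2t} \}$. The base case $t = 1$ is the Lagrange identity
\[
r\,S_2 - S_1^{2} \;=\; \tfrac{1}{2} \sum_{i \neq j} (f_i - f_j)^{2},
\]
which is a manifest degree-$2$ SoS certificate.

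For the inductive step, assume inductively that $\sststile{2t-2}{f}\{S_1^{2(t-1)} \leq r^{2t-3} S_{2t-2}\}$. Multiplying this inequality by the manifest square $S_1^{2}$ and then chaining with the base case $S_1^{2} \leq r S_2$ weighted by the SoS quantity $r^{2t-3} S_{2t-2} = r^{2t-3} \sum_i (f_i^{t-1})^{2}$ (applying the addition/multiplication inference rule), yields
\[
S_1^{2t} \;\leq\; r^{2t-2}\, S_2 \, S_{2t-2}.
\]
The remaining ingredient is the ``generalized Lagrange'' identity
\[
r\, S_{2t} - S_2 \, S_{2t-2} \;=\; \tfrac{1}{2} \sum_{i,j} (f_i^{2} - f_j^{2})^{2} \sum_{k=0}^{t-2} f_i^{2k} f_j^{2(t-2-k)},
\]
which follows by symmetrizing $\sum_{i,j} f_i^{2t-2}(f_i^{2} - f_j^{2})$ in $i \leftrightarrow j$ and factoring $f_i^{2t-2} - f_j^{2t-2} = (f_i^{2} - f_j^{2}) \sum_{k=0}^{t-2} f_i^{2k} f_j^{2(t-2-k)}$. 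Each summand on the right-hand side is the perfect square $\bigl((f_i^{2} - f_j^{2})\, f_i^{k} f_j^{t-2-k}\bigr)^{2}$, so this is a degree-$2t$ SoS certificate for $S_2 \, S_{2t-2} \leq r\, S_{2t}$. Chaining with the previous bound produces $S_1^{2t} \leq r^{2t-1} S_{2t}$ with total SoS degree $2t$.

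The main delicacy is the bookkeeping of the multiplication rule: at each step one must verify that the quantities being multiplied ($S_1^{2}$, $r^{2t-3} S_{2t-2}$, and the relevant moment-like sums) are already manifestly SoS, so that the degree-$2t$ budget is preserved rather than doubled. The conceptual crux is spotting the generalized Lagrange identity above; once the telescoping factorization $f_i^{2t-2} - f_j^{2t-2} = (f_i^{2} - f_j^{2}) \sum_{k} f_i^{2k} f_j^{2(t-2-k)}$ is in hand, the SoS decomposition of $r S_{2t} - S_2 S_{2t-2}$ reads off immediately from the squared form $\bigl((f_i^{2} - f_j^{2})f_i^{k}f_j^{t-2-k}\bigr)^{2}$.
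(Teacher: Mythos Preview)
The paper does not actually prove this fact; it is stated without proof under ``Basic Sum-of-Squares Proofs'' with a pointer to the monograph \cite{fleming2019semialgebraic}. So there is no paper proof to compare against.

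Your argument is correct. The base case is the standard Lagrange identity, and the inductive step chains three degree-$2t$ SoS certificates: (i) the inductive hypothesis multiplied by the square $S_1^2$, giving $r^{2t-3}S_1^2 S_{2t-2}-S_1^{2t}$ is SoS; (ii) the base case multiplied by the SoS quantity $r^{2t-3}S_{2t-2}=r^{2t-3}\sum_i (f_i^{t-1})^2$, giving $r^{2t-2}S_2 S_{2t-2}-r^{2t-3}S_1^2 S_{2t-2}$ is SoS; and (iii) your generalized Lagrange identity, which indeed verifies as
\[
r S_{2t}-S_2 S_{2t-2}=\tfrac{1}{2}\sum_{i,j}\bigl((f_i^2-f_j^2)f_i^{k}f_j^{t-2-k}\bigr)^2
\]
summed over $k=0,\dots,t-2$, each summand a square of a degree-$t$ polynomial. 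Summing (i)--(iii) with the constant weight $r^{2t-2}$ on (iii) gives the claim at degree $2t$. The degree bookkeeping you flag is genuine but works out exactly as you describe, since each multiplication is by a manifest SoS factor of complementary degree.
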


\begin{fact}[SoS Hölder's Inequality]\label{fact:sos-holder}
Let $w_1, \ldots w_n$ be indeterminates and let $f_1,\ldots f_n$ be polynomials of degree $m$ in vector valued variable $x$. 
Let $k$ be a power of 2.  
Then, 
\[
\Set{w_i^2 = w_i, \forall i\in[n] } \sststile{2km}{x,w} \Set{  \Paren{\frac{1}{n} \sum_{i = 1}^n w_i f_i}^{k} \leq \Paren{\frac{1}{n} \sum_{i = 1}^n w_i}^{k-1} \Paren{\frac{1}{n} \sum_{i = 1}^n f_i^k}}. 
\]
\end{fact}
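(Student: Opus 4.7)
The plan is to prove the inequality by a dyadic induction on $k$, where each step combines a booleanized SoS Cauchy--Schwarz with the fact that $w_i^2 = w_i$ collapses powers of $w_i$ back to a single factor, so that square roots are never required. Two immediate SoS consequences of the constraint will be used throughout: first, $w_i = w_i^2 \geq 0$, and second, $1 - w_i = (1-w_i)^2 \geq 0$.

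The base building block I will establish is the inequality
\[
\{w_i^2 = w_i\} \sststile{2m+2}{x,w} \Set{\Paren{\sum_i w_i g_i}^{2} \leq \Paren{\sum_i w_i}\Paren{\sum_i w_i g_i^{2}}},
\]
valid for any polynomials $g_i$ of degree at most $m$. I obtain it by rewriting $\sum_i w_i g_i = \sum_i w_i \cdot (w_i g_i)$ (using $w_i^2 = w_i$) and then applying the classical SoS Cauchy--Schwarz $\Paren{\sum_i a_i b_i}^2 \leq \Paren{\sum_i a_i^2}\Paren{\sum_i b_i^2}$, whose proof is $\sum_{i, j} (a_i b_j - a_j b_i)^2 \geq 0$, with $a_i = w_i$ and $b_i = w_i g_i$. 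Booleanity collapses $\sum_i w_i^2 = \sum_i w_i$ and $\sum_i (w_i g_i)^2 = \sum_i w_i g_i^2$, yielding the claimed bound.

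I then iterate. Writing $T_r := \sum_i w_i f_i^r$ and inducting on $k = 2^j$, I aim to prove $T_1^k \leq (\sum_i w_i)^{k-1} T_k$ as an SoS consequence of the constraints. Given the claim for $k' = k/2$ (which is even for $k \geq 4$, so $T_1^{k'}$ is a perfect square and squaring is legal), I square the inductive inequality to obtain $T_1^k \leq (\sum_i w_i)^{2(k'-1)} T_{k'}^2$, and then apply the base building block with $g_i = f_i^{k'}$ to obtain $T_{k'}^2 \leq (\sum_i w_i)\, T_k$. Multiplying this by the SoS polynomial $(\sum_i w_i)^{2(k'-1)}$ and chaining the two bounds produces $T_1^k \leq (\sum_i w_i)^{k-1} T_k$. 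Finally, to pass from $T_k = \sum_i w_i f_i^k$ to $\sum_i f_i^k$, I note that because $k$ is even, $(1-w_i) f_i^k = \Paren{(1-w_i)\, f_i^{k/2}}^2$, which yields $\sum_i f_i^k - \sum_i w_i f_i^k \geq 0$ as an SoS identity. Dividing through by $n^k$ then produces the stated inequality.

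The main obstacle is careful degree bookkeeping. Each dyadic squaring step contributes to the certificate degree a term matching the degree of the squared inequality, and the base application at scale $s$ (using $g_i = f_i^{2^s}$) contributes degree $2^{s+1} m + 2$; tracking these across the $\log_2 k$ levels shows inductively that the degree satisfies $D_k \leq km + k$, which sits within the advertised $2km$ bound whenever $m \geq 1$ (the $m = 0$ case reducing to Hölder for scalars). A secondary delicate point is that each intermediate quantity being squared must itself be visibly SoS, but this is guaranteed by booleanity together with $f_i^{k'}$ being a perfect square for $k' \geq 2$ even, so no ad-hoc sign hypotheses are ever needed.
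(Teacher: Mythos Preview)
The paper does not prove this statement; it is recorded in the preliminaries as a background fact (alongside the other ``Basic Sum-of-Squares Proofs'') and is used as a black box in Lemmas~\ref{lem:key_sos_lemma} and~\ref{lem:robust_sample_covariance}. So there is no paper proof to compare against.

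Your argument is correct and is the standard way this inequality is established. The key ideas---rewriting $\sum_i w_i g_i$ as $\sum_i w_i\cdot(w_i g_i)$ via booleanity, applying the SoS Cauchy--Schwarz identity $\tfrac{1}{2}\sum_{i,j}(a_ib_j-a_jb_i)^2\geq 0$, collapsing $w_i^2$ back to $w_i$, and then doubling via squaring---are exactly right, and the final replacement of $\sum_i w_i f_i^k$ by $\sum_i f_i^k$ using $(1-w_i)f_i^k=((1-w_i)f_i^{k/2})^2$ is clean. One small correction to your degree bookkeeping: the base certificate $(w_iw_j(g_i-g_j))^2$ for $g_i$ of degree $m$ has degree $2m+4$, not $2m+2$, because each $w$ contributes one to the total degree; propagating this through the induction still keeps you well within the stated $2km$ bound, so the conclusion is unaffected.
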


We also use the following fact that allows us to cancel powers of indeterminates within the sum-of-squares proof system.

\begin{fact}[Cancellation Within SoS, Lemma 9.4 \cite{bakshi2020outlier} ]
Let $a,C$ be indeterminates. Then, 
\[
\Set{a \geq 0 } \cup \Set{a^{2t} \leq C a^t } \sststile{2t}{a,C} \Set{a^{2t} \leq C^{2}}.
\]
\label{lem:cancellation-sos}
\end{fact}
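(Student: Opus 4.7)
The plan is to exhibit a degree-$2t$ sum-of-squares proof directly, by finding a single SoS identity that, combined with the hypothesis $a^{2t} \leq C a^t$, yields $a^{2t} \leq C^2$ after cancellation of the cross term.

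The key observation is the trivial SoS identity
\[
(a^t - C)^2 \geq 0,
\]
which expands to $a^{2t} + C^2 - 2 C a^t \geq 0$. This is a degree-$2t$ SoS expression in the indeterminates $a, C$. The point is that the expansion introduces exactly the cross term $2 C a^t$ that we know how to control using the hypothesis: the constraint $a^{2t} \leq C a^t$ is equivalent to $C a^t - a^{2t} \geq 0$, and multiplying this hypothesis polynomial by the nonnegative constant $2$ (via the addition/multiplication rule, Fact~\ref{eq:sos-addition-multiplication-rule}) gives $2 C a^t - 2 a^{2t} \geq 0$, again a degree-$2t$ derivation.

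The final step is to add the two derived inequalities. Applying Fact~\ref{eq:sos-addition-multiplication-rule} to sum $(a^{2t} + C^2 - 2 C a^t) + (2 C a^t - 2 a^{2t})$ gives
\[
C^2 - a^{2t} \geq 0,
\]
which is precisely the desired conclusion $a^{2t} \leq C^2$. Every step in this derivation is degree at most $2t$, so the overall proof degree is $2t$, matching the statement. Note the hypothesis $a \geq 0$ is not actually invoked in the derivation itself; it is included in the statement presumably to ensure that $C \geq 0$ is implicit (since $C a^t \geq a^{2t} \geq 0$ with $a^t \geq 0$ for even $t$) and thus $a^{2t} \leq C^2$ genuinely pins down $a^t \leq C$.

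There is no substantive obstacle here; the whole content of the lemma is spotting the right square $(a^t - C)^2$ to expand. The mild subtlety is that one might first be tempted to ``divide'' by $a^t$ inside the proof system to go from $a^{2t} \leq C a^t$ to $a^t \leq C$, which is not a valid SoS manipulation. The identity above circumvents division entirely by using $(a^t - C)^2 \geq 0$ as a degree-$2t$ SoS certificate that already contains $a^{2t}$ and $C^2$ with the correct signs, leaving only the cross term to be cancelled by the hypothesis.
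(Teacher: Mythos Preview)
Your argument is correct: the identity $(a^t - C)^2 \geq 0$ together with twice the hypothesis $Ca^t - a^{2t} \geq 0$ sums to $C^2 - a^{2t} \geq 0$, and every term appearing has degree at most $2t$, so this is a valid degree-$2t$ SoS derivation. Note that the paper does not actually give its own proof of this statement; it is recorded as a Fact with a citation to Lemma~9.4 of \cite{bakshi2020outlier}, so there is nothing in the present paper to compare against. Your proof is the standard one and matches what appears in the cited source.
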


\section{Robust Certifiability and Information Theoretic Estimators}

In this section, we provide an estimator that obtains the information theoretically optimal rate for robust regression. We note that we consider the setting where both the covariates and the noise are hypercontractive and the are independent of each other. This setting displays all the key ideas of our estimator. Further, our estimator extends to the remaining settings, such as bounded dependent noise, by simple modifications to the subsequent analysis.

\begin{theorem}[Robust Certifiability with Optimal Rate]
\label{thm:robust_identitifiability_independent}
Given $\epsilon >0$, let $\calD, \calD'$ be distributions over $\R^d \times \R$ such that the respective marginal distributions over $\R^d$, denoted by $\calD_\calX, \calD'_\calX$, are $(c_k, k)$-hypercontractive and $\Norm{ \calD - \calD' }_{\tv} \leq \epsilon$. Let $\calR_{\calD}(\epsilon, \Sigma_{\calD}, \Theta_{\calD})$ and $\calR_{\calD'}(\epsilon, \Sigma_{\calD'}, \Theta_{\calD'})$ be the corresponding instances of robust regression such that $\calD, \calD'$ have negatively correlated moments. Further, for $(x,y)\sim\calD, \calD'$, let the marginal distribution over $y - \Iprod{x, \expecf{}{x x^{\top}}^{-1} \expecf{}{xy}} $ be $(\eta_k, k)$-hypercontractive 
Then, 
\[ 
\Norm{\Sigma_{\calD}^{1/2}(\Theta_{\calD}-\Theta_{\calD'})}_{2} \leq \bigO{ \sqrt{c_k~\eta_k} ~\epsilon^{1-1/k}} \Paren{ \err_{\calD}(\Theta_\calD)^{1/2}  + \err_{\calD'}(\Theta_{\calD'})^{1/2} }
\]
Further, 
\[
\err_{\calD}(\Theta_{\calD'}) \leq \Paren{ 1+\bigO{c_k~\eta_k ~\epsilon^{2-2/k}} } \err_{\calD}(\Theta_\calD)  + \bigO{c_k~\eta_k ~\epsilon^{2-2/k}} \err_{\calD'}(\Theta_{\calD'})
\]
\end{theorem}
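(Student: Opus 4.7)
Set $v = \Theta_{\calD} - \Theta_{\calD'}$ and denote $\Delta := \Norm{\Sigma_{\calD}^{1/2}(\Theta_{\calD} - \Theta_{\calD'})}_2$, so that $\Delta^{2} = \Iprod{v,\Sigma_{\calD} v}$. Let $\calG$ be a maximal coupling of $\calD$ and $\calD'$, so $\expecf{\calG}{1_{(x,y)\neq(x',y')}} \leq \epsilon$. I expand $\Sigma_{\calD}(\Theta_{\calD} - \Theta_{\calD'}) = \expecf{\calG}{x\,x^{\top}(\Theta_{\calD} - \Theta_{\calD'})}$ and, by adding and subtracting $y$ inside the inner product, split it into $\expecf{\calG}{x(x^{\top}\Theta_{\calD} - y)}$ and $\expecf{\calG}{x(y - x^{\top}\Theta_{\calD'})}$. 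The first summand vanishes by the first-order optimality (gradient) condition for $\Theta_{\calD}$ under $\calD$. For the second I insert $1 = 1_{(x,y)=(x',y')} + 1_{(x,y)\neq(x',y')}$, replace $(x,y)$ by $(x',y')$ on the equality event, and then add and subtract $\expecf{\calG}{\Iprod{v,x'}(y' - (x')^{\top}\Theta_{\calD'})\,1_{(x,y)\neq(x',y')}}$ so that the diagonal part reassembles into the full $\calD'$-expectation; the latter in turn vanishes by the gradient condition for $\Theta_{\calD'}$ under $\calD'$. After these cancellations I am left with exactly two terms, both supported only on the disagreement event,
\begin{equation*}
\Delta^{2} \;=\; \expecf{\calG}{\Iprod{v,x}(y - x^{\top}\Theta_{\calD'})\,1_{(x,y)\neq(x',y')}} \;-\; \expecf{\calG}{\Iprod{v,x'}(y' - (x')^{\top}\Theta_{\calD'})\,1_{(x,y)\neq(x',y')}}.
\end{equation*}

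\textbf{Bounding the two error-supported terms.} For the term built from $(x',y')$ the residual $y' - (x')^{\top}\Theta_{\calD'}$ is the true noise under $\calD'$: I apply H\"older's inequality with exponents $((k-1)/k,\,1/k)$ to extract $\epsilon^{1-1/k}$, invoke the negatively correlated moments condition of Definition~\ref{def:negative_moments} at $r=k$ to factor $\expecf{\calD'}{\Iprod{v,x'}^{k}(y'-(x')^{\top}\Theta_{\calD'})^{k}}$ as a product of marginal $k$-th moments, and finally use $(c_k,k)$-hypercontractivity of the covariates together with $(\eta_k,k)$-hypercontractivity of the noise to convert each $k$-th moment into its corresponding variance. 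This yields a bound of order $\bigO{\sqrt{c_k\eta_k}}\,\epsilon^{1-1/k}\sqrt{\Iprod{v,\Sigma_{\calD'} v}}\,\err_{\calD'}(\Theta_{\calD'})^{1/2}$; Fact~\ref{fact:hypercontractive_covariance} then replaces $\Iprod{v,\Sigma_{\calD'} v}$ by a constant multiple of $\Delta^{2}$. For the term built from $(x,y)$ the residual $y - x^{\top}\Theta_{\calD'}$ is not natural under $\calD$, so I decompose it as $(y - x^{\top}\Theta_{\calD}) + x^{\top} v$. The first piece has the correct residual for $\calD$ and is handled identically, contributing $\bigO{\sqrt{c_k\eta_k}}\,\epsilon^{1-1/k}\,\Delta\,\err_{\calD}(\Theta_{\calD})^{1/2}$. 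The second piece is $\expecf{\calG}{\Iprod{v,x}^{2}\,1_{(x,y)\neq(x',y')}}$, which I bound by H\"older with exponents $((k-2)/k,\,2/k)$ together with hypercontractivity of the covariates, obtaining $\bigO{c_k}\,\epsilon^{1-2/k}\,\Delta^{2}$.

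\textbf{Absorbing the cross term and concluding the parameter bound.} Assembling the three contributions yields an inequality of the shape
\begin{equation*}
\Delta^{2} \;\leq\; \bigO{\sqrt{c_k\eta_k}}\,\epsilon^{1-1/k}\,\Delta\,\Paren{\err_{\calD}(\Theta_{\calD})^{1/2} + \err_{\calD'}(\Theta_{\calD'})^{1/2}} \;+\; \bigO{c_k}\,\epsilon^{1-2/k}\,\Delta^{2}.
\end{equation*}
The main obstacle here is that the trailing term is quadratic in $\Delta$, so it cannot be split off by an ordinary Young-type inequality. I resolve this by imposing the mild smallness hypothesis $\epsilon \leq \bigO{c_k^{-k/(k-2)}}$ (consistent with the regime already required by Fact~\ref{fact:hypercontractive_covariance}), under which the coefficient $c_k\epsilon^{1-2/k}$ is at most $1/2$ and can be absorbed onto the left-hand side. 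Dividing through by $\Delta$ then produces the first stated inequality.

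\textbf{From parameter bound to excess risk.} For the second claim I expand $y - x^{\top}\Theta_{\calD'} = (y - x^{\top}\Theta_{\calD}) + x^{\top} v$ inside $\err_{\calD}(\Theta_{\calD'}) = \expecf{\calD}{(y - x^{\top}\Theta_{\calD'})^{2}}$; the cross term vanishes by the gradient condition for $\Theta_{\calD}$ under $\calD$, yielding the Pythagorean identity $\err_{\calD}(\Theta_{\calD'}) = \err_{\calD}(\Theta_{\calD}) + \Delta^{2}$. Squaring the parameter bound just derived and applying $(a+b)^{2}\leq 2a^{2} + 2b^{2}$ then delivers the stated excess-risk inequality.
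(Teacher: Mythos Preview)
Your proposal is correct and follows essentially the same route as the paper's proof: the maximal coupling, the two gradient cancellations, the indicator decomposition leaving two disagreement-supported terms, H\"older with exponents $(\tfrac{k-1}{k},\tfrac{1}{k})$ combined with the negatively correlated moments and hypercontractivity for the noise terms, H\"older with exponents $(\tfrac{k-2}{k},\tfrac{2}{k})$ plus hypercontractivity for the $\Iprod{v,x}^2$ cross term, absorption of the $\bigO{c_k}\epsilon^{1-2/k}\Delta^2$ contribution under the smallness assumption on $\epsilon$, and the Pythagorean identity for the excess risk. The only cosmetic difference is that you fix $v=\Theta_{\calD}-\Theta_{\calD'}$ from the outset, whereas the paper keeps $v$ arbitrary until the final substitution.
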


\begin{proof}
Consider a maximal coupling of $\calD, \calD'$ over $(x,y) \times (x',y')$, denoted by $\calG$, such that the marginal of $\calG$ $(x,y)$ is $\calD$, the marginal on $(x', y')$ is $\calD'$ and $\mathbb{P}_{\calG}[ \indic{(x,y) = (x',y')}] = 1-\epsilon$. Then, for all $v$, 
\begin{equation}
\label{eqn:main_bound}
\begin{split}
    \Iprod{v, \Sigma_{\calD} (\Theta_{\calD} - \Theta_{\calD'})  } & = \expecf{\calG}{\Iprod{v, x x^{\top} (\Theta_{\calD} - \Theta_{\calD'}) + xy - xy  }} \\
    & =  \expecf{\calG}{\Iprod{v, x \Paren{ \Iprod{x, \Theta_{\calD}}  - y}  }} + \expecf{\calG}{\Iprod{v, x \Paren{  y - \Iprod{x, \Theta_{\calD'}} }  }} 
\end{split}
\end{equation}

Since $\Theta_\calD$ is the minimizer for the least squares loss, we have the following gradient condition : for all $v \in \mathbb{R}^d$,
\begin{equation}
    \label{eqn:grad}
    \expecf{(x,y)\sim \calD}{ \Iprod{v,  ( \Iprod{x, \Theta_{\calD}} - y) x } } = 0
\end{equation}
Since $\calG$ is a coupling, using the gradient condition \eqref{eqn:grad} and using that $1 = \indic{(x,y) = (x',y')}$ $+ \indic{(x,y) \neq (x',y')}$, we can rewrite equation \eqref{eqn:main_bound} as

\begin{equation}
\label{eqn:main_bound_restated}
\begin{split}
    \Iprod{v, \Sigma_{\calD} (\Theta_{\calD} - \Theta_{\calD'})  }
    & =   \expecf{\calG}{\Iprod{v, x \Paren{  y - \Iprod{x, \Theta_{\calD'}}}} \indic{(x,y) = (x',y')} } \\
    & \hspace{0.15in} +  \expecf{\calG}{\Iprod{v, x \Paren{  y - \Iprod{x, \Theta_{\calD'}}}} \indic{(x,y) \neq (x',y')} }    \\
    & = \expecf{\calG}{\Iprod{v, x' \Paren{  y' -\Iprod{x', \Theta_{\calD'}}}} \indic{(x,y) = (x',y')} } \\
    & \hspace{0.15in} +  \expecf{\calG}{\Iprod{v, x \Paren{  y - \Iprod{x, \Theta_{\calD'}}}} \indic{(x,y) \neq (x',y')} }    \\
\end{split}
\end{equation}
Consider the first term in the last equality above. Using the gradient condition for $\Theta_{\calD'}$ along with Hölder's Inequality, we have

\begin{equation}
\label{eqn:hypercontactivity_d'}
\begin{split}
     \Big|\mathbb{E}_{\calG} \Big[ &\Iprod{v, x'  \Paren{  y' -\Iprod{x', \Theta_{\calD'}}}}  \indic{(x,y) = (x',y')}  \Big]\Big| \\
    & =  \Big|\expecf{\calD'}{\Iprod{v, x' \Paren{  y' -\Iprod{x', \Theta_{\calD'}}}}  }
     - \expecf{\calG}{\Iprod{v, x' \Paren{  y' -\Iprod{x', \Theta_{\calD'}}}} \indic{(x,y) \neq (x',y')} } \Big| \\
    & = \left|\expecf{\calG}{\Iprod{v, x' \Paren{  y' -\Iprod{x', \Theta_{\calD'}}}} \indic{(x,y) \neq (x',y')} } \right| \\ 
    & \leq \left|\expecf{\calG}{ \indic{(x,y) \neq (x',y')}^{k/(k-1)} }^{(k-1)/k} \right| \cdot \left|\expecf{\calD'}{ \Iprod{v, x' \Paren{  y' -\Iprod{x', \Theta_{\calD'}}}}^k  }^{1/k} \right| \\ 
\end{split}
\end{equation}
Observe, since $\calG$ is a maximal coupling $\expecf{\calG}{ \indic{(x,y) \neq (x',y')}}^{(k-1)/k} \leq \epsilon^{1-1/k}$. Further, since $\calD'$ has negatively correlated moments,  
\[
\expecf{\calD'}{ \Iprod{v, x'}^k \cdot  \Paren{  y' -\Iprod{x', \Theta_{\calD'}}}^k  }  =\expecf{\calD'}{ \Iprod{v, x'}^k } \expecf{\calD'}{  \Paren{  y' -\Iprod{x', \Theta_{\calD'}}}^k  }
\]
By hypercontractivity of the covariates and the noise, we have 

\[
\expecf{\calD'}{ \Iprod{v, x'}^k }^{1/k} \expecf{\calD'}{  \Paren{  y' -\Iprod{x', \Theta_{\calD'}}}^k  }^{1/k} \leq \bigO{ \sqrt{c_k ~ \eta_k} } \Paren{ v^{\top}\Sigma_{\calD'}v}^{1/2} \expecf{x',y'\sim\calD'}{\Paren{y' - \Iprod{x', \Theta_{\calD'}} }^2 }^{1/2}
\]
Therefore, we can restate \eqref{eqn:hypercontactivity_d'} as follows 
\begin{equation}
\label{eqn:hypercontractive_bound_d'_noise}
\begin{split}
\Big| \expecf{\calG}{\Iprod{v, x'  \Paren{  y' -\Iprod{x', \Theta_{\calD'}}}}  \indic{(x,y) = (x',y')}  }\Big| & \leq \bigO{ \sqrt{c_k ~ \eta_k} ~\epsilon^{\frac{k-1}{k}} } \Paren{ v^{\top}\Sigma_{\calD'}v}^{\frac{1}{2}} \\
& \hspace{0.15in} \expecf{x',y'\sim\calD'}{\Paren{y' - \Iprod{x', \Theta_{\calD'}} }^2 }^{\frac{1}{2}}
\end{split}
\end{equation}
It remains to bound the second term in the last equality of equation \eqref{eqn:main_bound_restated}, and we proceed as follows : 

\begin{equation}
\label{eqn:hypercontactivity_d}
\begin{split}
    \expecf{\calG}{\Iprod{v, x \Paren{  y - \Iprod{x, \Theta_{\calD'}}  }} \indic{(x,y) \neq (x',y')} } 
    & =  \expecf{\calG}{\Iprod{v, x  x^{\top} \Paren{ \Theta_\calD - \Theta_{\calD'}}} \indic{(x,y) \neq (x',y')} } \\
    & \hspace{0.15in}+ \expecf{\calG}{\Iprod{v, x \Paren{  y - \Iprod{x, \Theta_{\calD}}}} \indic{(x,y) \neq (x',y')} }\\
\end{split}
\end{equation}
We bound the two terms above separately.  Observe, applying Hölder's Inequality to the first term, we have
\begin{equation}
\label{eqn:tricky_term}
    \begin{split}
         \expecf{\calG}{\Iprod{v, x  x^{\top} \Paren{ \Theta_\calD - \Theta_{\calD'}}} \indic{(x,y) \neq (x',y')} } & \leq  \expecf{\calG}{\indic{(x,y) \neq (x',y')} }^{\frac{k-2}{k}}  \expecf{\calG}{\Iprod{v, x  x^{\top} \Paren{ \Theta_\calD - \Theta_{\calD'}}}^{\frac{k}{2}}}^{\frac{2}{k}}\\
         & \leq  ~\epsilon^{\frac{k-2}{k}} \expecf{\calG}{\Iprod{v, x  x^{\top} \Paren{ \Theta_\calD - \Theta_{\calD'}}}^{\frac{k}{2}}}^{\frac{2}{k}} \\
    \end{split}
\end{equation}
To bound the second term in equation \ref{eqn:hypercontactivity_d}, we again use Hölder's Inequality followed $\calD$ having negatively correlated moments,

\begin{equation}
\label{eqn:hypercontract_holder}
    \begin{split}
        \expecf{\calG}{\Iprod{v, x \Paren{  y - \Iprod{x, \Theta_{\calD}}}} \indic{(x,y) \neq (x',y')} } & \leq \expecf{\calG}{ \indic{(x,y) \neq (x',y')} }^{\frac{k-1}{k}}\expecf{\calG}{\Iprod{v, x \Paren{  y - \Iprod{x, \Theta_{\calD}}}}^{k} }^{\frac{1}{k}} \\
        &\leq \epsilon^{\frac{k-1}{k}} \expecf{x\sim \calD}{\Iprod{v,x}^k }^{1/k} \expecf{x, y\sim \calD}{ \Paren{  y - \Iprod{x, \Theta_{\calD}}}^k }^{1/k}  \\
        & \leq  \epsilon^{\frac{k-1}{k}}~ \sqrt{c_k ~ \eta_k} \Paren{ v^{\top} \Sigma_{\calD} v}^{1/2} \expecf{x,y\sim\calD}{ (y -  \Iprod{x, \Theta_{\calD}})^2}^{1/2} 
    \end{split}
\end{equation}
where the last inequality follows from hypercontractivity of the covariates and noise. 
Substituting the upper bounds obtained in Equations \eqref{eqn:tricky_term} and \eqref{eqn:hypercontract_holder} back in to \eqref{eqn:hypercontactivity_d},
\begin{equation*}
\begin{split}
    \expecf{\calG}{\Iprod{v, x \Paren{  y - \Iprod{x, \Theta_{\calD'}}}} \indic{(x,y) \neq (x',y')} } & \leq\epsilon^{\frac{k-2}{k}} \expecf{\calG}{\Iprod{v, x  x^{\top} \Paren{ \Theta_\calD - \Theta_{\calD'}}}^{\frac{k}{2}}}^{\frac{2}{k}} \\
    & \hspace{0.15in} +   \epsilon^{\frac{k-1}{k}}~ \sqrt{c_k ~ \eta_k} \Paren{ v^{\top} \Sigma_{\calD} v}^{1/2} \expecf{x,y\sim\calD}{ (y -  \Iprod{x, \Theta_{\calD}})^2}^{1/2} 
\end{split}
\end{equation*}
Therefore, we can now upper bound both terms in Equation \eqref{eqn:main_bound_restated} as follows: 

\begin{equation}
\label{eqn:main_bound_rerestated}
\begin{split}
    \Iprod{v, \Sigma_{\calD} (\Theta_{\calD} - \Theta_{\calD'})  } & \leq \bigO{ c_k ~ \eta_k ~\epsilon^{\frac{k-1}{k}}} \Paren{ v^{\top}\Sigma_{\calD'}v}^{1/2} \expecf{x',y'\sim\calD'}{\Paren{y' - \Iprod{x', \Theta_{\calD'}} }^2 }^{1/2} \\
    & \hspace{0.15in} + \bigO{\epsilon^{\frac{k-2}{k}}} \expecf{\calG}{\Iprod{v, x  x^{\top} \Paren{ \Theta_\calD - \Theta_{\calD'}}}^{k/2}}^{2/k} \\
    & \hspace{0.15in} + \bigO{\epsilon^{\frac{k-1}{k}}~ \sqrt{c_k ~ \eta_k}} \Paren{ v^{\top} \Sigma_{\calD} v}^{1/2} \expecf{x,y\sim\calD}{ (y -  \Iprod{x, \Theta_{\calD}})^2}^{1/2} 
\end{split}
\end{equation}
Recall, since the marginals of $\calD$ and $\calD'$ on $\R^d$ are $(c_k, k)$-hypercontractive and $\Norm{\calD - \calD' }_{\tv} \leq \epsilon$, it follows from Fact \ref{fact:hypercontractive_covariance} that 
\begin{equation}
\label{eqn:cov_closeness_under_tv}
    \Paren{1- 0.1}\Sigma_{\calD'} \preceq \Sigma_{\calD} \preceq \Paren{1+0.1}\Sigma_{\calD'}
\end{equation}
when $\epsilon \leq \bigO{(1/c_k k)^{k/k-1}}$.
Now, consider the substitution $v = \Theta_\calD - \Theta_{\calD'}$. Observe, 
\begin{equation}
\label{eqn:hypercontractivity_along_regressor}
\begin{split}
\expecf{\calG}{\Iprod{v, x  x^{\top} \Paren{ \Theta_\calD - \Theta_{\calD'}}}^{k/2}}^{2/k} & = \expecf{\calD}{\Iprod{ x,   \Paren{ \Theta_\calD - \Theta_{\calD'}}}^{k}}^{2/k} \\
& \leq c_k  \Norm{\Sigma^{1/2}_{\calD} (\Theta_{\calD} - \Theta_{\calD'}) }^2_2
\end{split}
\end{equation}
Then, using the bounds in \eqref{eqn:cov_closeness_under_tv} and \eqref{eqn:hypercontractivity_along_regressor} along with $v = \Theta_{\calD} - \Theta_{\calD'}$ in 
Equation \ref{eqn:main_bound_rerestated}, we have
\begin{equation}
\label{eqn:main_bound_rererestated}
\begin{split}
    \Paren{1- \bigO{\epsilon^{\frac{k-2}{k}} c_k } } \Norm{\Sigma^{1/2}_{\calD} (\Theta_{\calD} - \Theta_{\calD'}) }^2_2 & \leq \bigO{ \sqrt{c_k ~ \eta_k} ~\epsilon^{\frac{k-1}{k}}} \Norm{\Sigma^{1/2}_{\calD} (\Theta_{\calD} - \Theta_{\calD'}) }_2  \\
    & \hspace{0.15in} \Paren{ \expecf{x',y'\sim\calD'}{\Paren{y' - \Iprod{x', \Theta_{\calD'}} }^2 }^{\frac{1}{2}} + \expecf{x,y\sim\calD}{ (y -  \Iprod{x, \Theta_{\calD}})^2}^{\frac{1}{2}} } \\
\end{split}
\end{equation}
Dividing out \eqref{eqn:main_bound_rererestated} by $\Paren{1- \bigO{\epsilon^{\frac{k-2}{k}} c_k } } \Norm{\Sigma^{1/2}_{\calD} (\Theta_{\calD} - \Theta_{\calD'}) }^2_2$ and observing that $\bigO{\epsilon^{\frac{k-2}{k}} c_k }$ is upper bounded by a fixed constant less than $1$ yields the parameter recovery bound.

Given the parameter recovery result above, we bound the least-squares loss between the two hyperplanes on $\calD$ as follows: 
\begin{equation}
    \begin{split}
        \big| \err_{\calD}(\Theta_{\calD}) - \err_{\calD}(\Theta_{\calD'}) \big| & = \Big| \expecf{(x,y)\sim \calD}{\Paren{ y - x^{\top}\Theta_{\calD} }^2   - {\Paren{ y - x^{\top}\Theta_{\calD'} + x^{\top}\Theta_{\calD} - x^{\top}\Theta_{\calD}   }^2 } }  \Big| \\
        & = \Big| \expecf{(x,y)\sim \calD}{ \Iprod{  x,(\Theta_{\calD} - \Theta_{\calD'}) }^2 + 2(y- x^{\top}\Theta_{\calD})x^{\top}(\Theta_{\calD} - \Theta_{\calD'})  } \Big|\\
        & \leq \bigO{ c_k~\eta_k~\epsilon^{2-2/k} }\Paren{ \expecf{x',y'\sim\calD'}{\Paren{y' - \Iprod{x', \Theta_{\calD'}} }^2 } + \expecf{x,y\sim\calD}{ (y -  \Iprod{x, \Theta_{\calD}})^2} } 
    \end{split}
\end{equation}
where the last inequality follows from observing $\expecf{}{\Iprod{ \Theta_{\calD} - \Theta_{\calD'}, x (y- x^{\top}\Theta_{\calD})} } = 0$ (gradient condition) and squaring the parameter recovery bound. 
\end{proof}

Next, we consider the setting where the noise is allowed to dependent arbitrarily on the covariates, which captures the well-studied agnostic model. With a slightly modification in our certifiability proof above (using Cauchy-Schwarz instead of independence), we obtain the optimal rate in this setting. We defer the details to Appendix \ref{sec:identifiability_dependent_noise}.   

\begin{corollary}[Robust Regression with Dependent Noise]
\label{cor:dependent_noise}
Let $\calD, \calD'$ be distributions over $\R^d \times \R$ and let $\calR_{\calD}(\epsilon, \Sigma_\calD, \Theta_\calD)$, $\calR_{\calD'}(\epsilon, \Sigma_{\calD'}, \Theta_{\calD'})$ be robust regression instances satisfying the hypothesis in Theorem \ref{thm:robust_identitifiability_independent} such that the negatively correlated moments condition is not satisfied. 
Then, 
\[ 
\Norm{\Sigma_{\calD}^{1/2}(\Theta_{\calD}-\Theta_{\calD'})}_{2} \leq \bigO{\sqrt{c_k ~ \eta_k} ~\epsilon^{1-2/k}} \Paren{ \err_{\calD}(\Theta_\calD)^{1/2}  + \err_{\calD'}(\Theta_{\calD'})^{1/2}}
\]
Further, 
\[
\err_{\calD}(\Theta_{\calD'}) \leq \Paren{ 1+\bigO{c_k~\eta_k ~\epsilon^{2-4/k}} } \err_{\calD}(\Theta_\calD)  +\bigO{c_k~\eta_k ~\epsilon^{2-4/k}}\err_{\calD'}(\Theta_{\calD'})
\]
\end{corollary}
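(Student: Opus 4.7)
The plan is to essentially replay the proof of Theorem~\ref{thm:robust_identitifiability_independent} verbatim, and to identify exactly the two steps that appeal to the negatively correlated moments hypothesis, replacing each of them by a Cauchy--Schwarz decoupling at the cost of a worse H\"older split on the indicator $\indic{(x,y)\neq(x',y')}$. Concretely, I would again couple $\calD$ and $\calD'$ maximally, expand $\Iprod{v,\Sigma_\calD(\Theta_\calD - \Theta_{\calD'})}$ as in Equation~\eqref{eqn:main_bound}, use the gradient condition for $\Theta_\calD$ to kill the first term, and use the gradient condition for $\Theta_{\calD'}$ together with $1 = \indic{=} + \indic{\neq}$ to rewrite everything in terms of expectations carried only by the disagreement set of $\calG$.

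The key modification is in the two places where the original argument writes $\expecf{\calG}{Z \cdot \indic{\neq}} \leq \epsilon^{(k-1)/k} \expecf{}{Z^k}^{1/k}$ with $Z = \Iprod{v,x}(y - \Iprod{x,\Theta})$ and then splits $\expecf{}{Z^k} = \expecf{}{\Iprod{v,x}^k(y-\Iprod{x,\Theta})^k}$ as a product using Definition~\ref{def:negative_moments}. Without negatively correlated moments, I would instead apply H\"older with a different exponent pair, namely
\[
\expecf{\calG}{Z \cdot \indic{\neq}} \leq \expecf{\calG}{\indic{\neq}}^{(k-2)/k} \cdot \expecf{}{Z^{k/2}}^{2/k} \leq \epsilon^{1-2/k} \cdot \expecf{}{\Iprod{v,x}^{k/2}(y-\Iprod{x,\Theta})^{k/2}}^{2/k}.
\]
Then Cauchy--Schwarz on the expectation gives $\expecf{}{\Iprod{v,x}^{k/2}(y-\Iprod{x,\Theta})^{k/2}} \leq \expecf{}{\Iprod{v,x}^k}^{1/2}\expecf{}{(y-\Iprod{x,\Theta})^k}^{1/2}$, after which the $(c_k,k)$-hypercontractivity of the covariates and the $(\eta_k,k)$-hypercontractivity of the noise yield the bound $\sqrt{c_k\eta_k}\,(v^\top \Sigma v)^{1/2}\,\err(\Theta)^{1/2}$, exactly as in the independent-noise case but now multiplied by $\epsilon^{1-2/k}$ instead of $\epsilon^{1-1/k}$. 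The analogous substitution handles both the ``$\calD'$-side'' term of Equation~\eqref{eqn:hypercontactivity_d'} and the ``$\calD$-side'' term of Equation~\eqref{eqn:hypercontract_holder}.

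The remaining cross term $\expecf{\calG}{\Iprod{v, xx^\top(\Theta_\calD - \Theta_{\calD'})}\indic{\neq}}$ appearing in Equation~\eqref{eqn:tricky_term} is already handled by the original argument via H\"older and hypercontractivity along the direction $v = \Theta_\calD - \Theta_{\calD'}$, so no change is needed there. Setting $v = \Theta_\calD - \Theta_{\calD'}$ and using Fact~\ref{fact:hypercontractive_covariance} to replace $\Sigma_{\calD'}$ by $\Sigma_\calD$ (valid for $\epsilon$ small relative to $c_k$), the combined inequality becomes
\[
\Paren{1 - \bigO{\epsilon^{(k-2)/k}c_k}} \Norm{\Sigma_\calD^{1/2}(\Theta_\calD - \Theta_{\calD'})}_2^2 \leq \bigO{\sqrt{c_k\eta_k}\,\epsilon^{1-2/k}} \Norm{\Sigma_\calD^{1/2}(\Theta_\calD - \Theta_{\calD'})}_2 \Paren{\err_\calD(\Theta_\calD)^{1/2} + \err_{\calD'}(\Theta_{\calD'})^{1/2}},
\]
and dividing through gives the claimed parameter-recovery bound of rate $\epsilon^{1-2/k}$.

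For the least-squares statement, I would repeat the same expansion at the end of the proof of Theorem~\ref{thm:robust_identitifiability_independent}: write $\err_\calD(\Theta_{\calD'}) - \err_\calD(\Theta_\calD) = \expecf{\calD}{\Iprod{x,\Theta_\calD - \Theta_{\calD'}}^2} + 2\,\expecf{\calD}{(y - \Iprod{x,\Theta_\calD})\Iprod{x,\Theta_\calD - \Theta_{\calD'}}}$, observe that the second term vanishes by the gradient condition for $\Theta_\calD$, bound the first term by $\Norm{\Sigma_\calD^{1/2}(\Theta_\calD - \Theta_{\calD'})}_2^2$, and square the parameter bound to pick up $\epsilon^{2-4/k}$. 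The only subtlety I expect is bookkeeping the H\"older exponents correctly so that the factor $\epsilon^{(k-2)/k}c_k$ in front of $\Norm{\Sigma_\calD^{1/2}(\Theta_\calD - \Theta_{\calD'})}_2^2$ remains bounded below $1$ for the admissible range of $\epsilon$, which it does under the same smallness assumption $\epsilon \leq \bigO{(1/c_k k)^{k/(k-1)}}$ used in the independent-noise proof.
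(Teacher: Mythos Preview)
Your proposal is correct and matches the paper's own proof (Appendix~\ref{sec:identifiability_dependent_noise}) essentially step for step: change the H\"older split on the indicator from exponents $((k-1)/k,1/k)$ to $((k-2)/k,2/k)$, then decouple $\expecf{}{\Iprod{v,x}^{k/2}(y-\Iprod{x,\Theta})^{k/2}}$ by Cauchy--Schwarz before applying hypercontractivity, leaving the cross term \eqref{eqn:tricky_term} and the final squaring argument untouched. The only slip is in your last sentence: the absorption step now needs $\epsilon^{(k-2)/k}c_k$ bounded away from $1$, so the admissible range tightens to $\epsilon \leq \bigO{(1/c_k)^{k/(k-2)}}$ rather than the $k/(k-1)$ exponent you quote.
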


\section{Robust Regression in Polynomial Time}
\label{sec:efficient_estimator}
In this section, we describe an algorithm to compute our robust estimator for linear regression efficiently. We consider a polynomial system that encodes our robust estimator. We then consider a sum-of-squares relaxation of this program and compute an approximately optimal solution for our relaxation. To analyze our algorithm, we consider the dual of the sum-of-squares relaxation and show that the sum-of-squares proof system caputures a variant of our robust identifiability proof.

We begin by recalling notation: let $\calD$ be a distribution over $\R^d \times \R$ such that it is $(\lambda_k, k)$-certifiably hypercontractive. 
Let $\calX = \{(x^*_1,y^*_1), (x^*_2, y^*_2) \ldots (x^*_n, y^*_n) \}$ denote $n$ uncorrupted i.i.d samples from $\calD$ and let $\calX_{\epsilon}=. \{(x_1,y_1), (x_2, y_2) \ldots (x_n, y_n) \}$ be an $\epsilon$-corruption of the samples $\calX$, drawn from a Robust Regression model,  $\calR_{\calD}(\epsilon, \Sigma^*, \Theta^*)$ (Model  \ref{model:robust_regression}). We consider a polynomial system in the variables $\calX' = \{(x'_1,y'_1), (x'_2, y'_2) \ldots (x'_n, y'_n) \}$ and  $w_1, w_2, \ldots w_n \in \{0,1\}^n$ as follows: 

\begin{equation*}
\calA_{\epsilon, \lambda_k}\colon
  \left \{
    \begin{aligned}
      &&
      \textstyle\sum_{i\in[n]} w_i
      &= (1-\epsilon) n\\
      &\forall i\in [n].
      & w_i^2
      & = w_i \\
      &\forall i\in [n] & w_i(x'_i - x_i) &=0 \\
      &\forall i\in [n] & w_i(y'_i - y_i) &=0 \\
      &&\Iprod{v,\frac{1}{n} \sum_{i \in [n]}   x'_i \Paren{\Iprod{x'_i, \Theta} - y_i } }^k &=0\\
      & \forall r \leq k/2
      & \frac{1}{n}\sum_{i \in [n]}  {\langle x'_i , v\rangle^{2r}} 
      & \leq  \left( \frac{\lambda_r}{n} \sum_{i \in [n]}  \langle x'_i, v \rangle^2 \right)^{r}\\
      & \forall r \leq k/2
      & \frac{1}{n}\sum_{i \in [n]}  \Paren{ y'_i - \Iprod{\Theta,x'_i }}^{2r}
      & \leq  \left( \frac{\lambda_r }{n} \sum_{i \in [n]}  \Paren{ y'_i - \Iprod{\Theta,x'_i}}^{2} \right)^{r}\\
      & \forall r \leq k/2
      & \expecf{}{\Paren{  v^\top x'_i  \Paren{ y'_i - (x'_i)^{\top} \Theta} }^{2r}}  &\leq \bigO{\lambda_r^{2r} }  \expecf{}{ \Iprod{v,x'_i}^{2}}^{r} \expecf{}{  \Paren{y'_i - \Iprod{ x'_i,  \Theta}}^{2}}^{r}\\
    \end{aligned}
  \right \}
\end{equation*}

We show that optimizing an appropriate convex function subject to the aforementioned constraint system results in an efficiently computable robust estimator for regression, achieving the information-theoretically optimal rate. Formally, 

\begin{theorem}[Robust Regression with Negatively Correlated Moments, Theorem \ref{thm:informal_negative_moment_polytime} restated]
\label{thm:optimal_efficient_robust_regression}
Given $k\in \mathbb{N}$, $\epsilon>0$ and $n \geq n_0$ samples $\calX_{\epsilon} = \{(x_1, y_1), \ldots (x_n, y_n)\}$ from  $\mathcal{R}_{\calD}(\epsilon,  \Sigma^*, \Theta^*)$, where $\calD$ is a $(\lambda_k,k)$-certifiably hypercontractive distribution over $\R^{d}\times \R$.
Further, $\calD$ has certifiable negatively correlated moments.
Then,  Algorithm \ref{algo:rounding-for-pseudo-distribution} runs in $n^{\mathcal{O}(k)}$ time and outputs an estimator $\pE_{\tzeta}[\Theta]$ such that when $n_0 = \Omega\left( (d\log(d))^{\Omega(k)}/\gamma^2 \right)$ with probability $1-1/\poly(d)$ (over the draw of the input), 
\[
    \Norm{ (\Sigma^*)^{1/2}\Paren{\Theta^* - \pE_{\tzeta}[\Theta]} }_2  \leq \bigO{\lambda_k ~\epsilon^{1-1/k}  + \lambda_k  \gamma } \err_{\calD}(\Theta^*)^{1/2}
\]
Further, 
\[
\err_{\calD}\left(\pE_{\tzeta}[\Theta]\right) \leq \Paren{1 + \bigO{\lambda^2_k ~\epsilon^{2-2/k} + \lambda^2_k ~\gamma^2 } }   \err_{\calD}(\Theta^*).
\]
\end{theorem}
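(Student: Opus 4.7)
The plan is to follow the proofs-to-algorithms template: compute a level-$O(k)$ pseudo-distribution $\tzeta$ that satisfies $\calA_{\epsilon,\lambda_k}$ and minimizes a suitable convex proxy for the least-squares loss, then output the first-moment estimator $\pE_\tzeta[\Theta]$. The correctness analysis reduces to (a) feasibility of the program on the planted assignment, and (b) a degree-$O(k)$ sum-of-squares proof that the identifiability argument of Theorem \ref{thm:robust_identitifiability_independent} holds for every feasible $w, x', y', \Theta$.

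For (a), I would use the planted assignment $w_i^* = \indic{(x_i, y_i) = (x_i^*, y_i^*)}$, $(x'_i, y'_i) = (x_i^*, y_i^*)$ on the uncorrupted coordinates (extended arbitrarily elsewhere), and $\Theta$ equal to the empirical least-squares minimizer on the clean set. Sampling preservation of certifiable hypercontractivity (Fact \ref{fact:sampling_preserves_hypercontract}) yields the two hypercontractivity axioms; certifiable negatively correlated moments of $\calD$ together with polynomial concentration of SoS dual certificates establishes the NCM axiom up to a $\gamma$ slack once $n \geq (d\log d)^{\Omega(k)}/\gamma^2$; and the empirical-gradient constraint is the normal equation for the planted $\Theta$. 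Fact \ref{fact:eff-pseudo-distribution} then produces $\tzeta$ in $n^{O(k)}$ time.

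The core technical step is to mirror the coupling proof of Theorem \ref{thm:robust_identitifiability_independent} inside SoS, using the product $w_i w_i^*$ as a discrete coupling (note $\sum_i w_i w_i^* \geq (1-2\epsilon)n$ and $w_i w_i^* \in \{0,1\}$ as a polynomial identity). Starting from
\begin{equation*}
\Iprod{v, \hat\Sigma(\Theta - \Theta^*)} = \tfrac{1}{n}\sum_i \Iprod{v, x'_i(\Iprod{x'_i,\Theta} - y'_i)} + \tfrac{1}{n}\sum_i \Iprod{v, x'_i(y'_i - \Iprod{x'_i,\Theta^*})},
\end{equation*}
the first summand vanishes by the explicit gradient axiom, and the second splits into an \emph{agreement} part (on which $w_i w_i^* = 1$ and $(x'_i, y'_i) = (x_i^*, y_i^*)$, matched against the planted normal equations and absorbed into the $\gamma$ slack) and a \emph{disagreement} part supported on $1 - w_i w_i^*$. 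The disagreement part is controlled by the SoS H\"older's inequality (Fact \ref{fact:sos-holder}) applied to $1 - w_i w_i^*$, producing the desired $\epsilon^{(k-1)/k}$ factor, followed by the certifiable NCM axiom to decouple the covariate and residual moments, and the two hypercontractivity axioms to bound each resulting factor. Substituting $v = \Theta - \Theta^*$ and invoking the almost-triangle inequality (Fact \ref{fact:sos-almost-triangle}) with the cancellation lemma (Fact \ref{lem:cancellation-sos}) yields
\begin{equation*}
\calA_{\epsilon,\lambda_k} \sststile{O(k)}{} \Set{ \Norm{(\Sigma^*)^{1/2}(\Theta - \Theta^*)}_2^k \leq \bigO{\lambda_k\Paren{\epsilon^{1-1/k} + \gamma}}^k \err_\calD(\Theta^*)^{k/2} }.
\end{equation*}

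Applying soundness (Fact \ref{fact:sos-soundness}), H\"older for pseudo-distributions (Fact \ref{fact:pseudo-expectation-holder}), and convexity of $v \mapsto \Norm{v}_2$ gives the stated parameter recovery bound for $\pE_\tzeta[\Theta]$; the excess least-squares bound then follows by expanding $\err_\calD(\pE_\tzeta[\Theta]) - \err_\calD(\Theta^*)$, killing the cross term via the population gradient condition for $\Theta^*$, and squaring the parameter bound, exactly as in Theorem \ref{thm:robust_identitifiability_independent}. The main obstacle is step (b): the information-theoretic proof uses H\"older on a genuine $\{0,1\}$-indicator, whereas inside SoS we only have the polynomial identity $(w_i w_i^*)^2 = w_i w_i^*$. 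Handling the degree-$k$ residual moment $(y'_i - \Iprod{x'_i,\Theta})^k$ without access to the population NCM property is precisely what forces the certifiable-NCM inequality to appear as an explicit axiom in $\calA_{\epsilon,\lambda_k}$, and the bookkeeping to show that Fact \ref{fact:sos-holder} composes cleanly with that axiom, with the cancellation lemma dividing out a common factor of $\Norm{(\Sigma^*)^{1/2}(\Theta - \Theta^*)}_2^k$, is the delicate part of the formal derivation.
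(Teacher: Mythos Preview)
Your high-level template is right, and you correctly identify the discrete coupling $w_iw_i^*$, SoS H\"older, the NCM axiom, and the cancellation lemma as the main ingredients. Two concrete gaps, however, would block the argument as written.

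\textbf{The displayed SoS inequality cannot hold.} The axioms in $\calA_{\epsilon,\lambda_k}$ say nothing about $\Theta^*$ or $\err_\calD(\Theta^*)$, and in particular place no upper bound on the program's own residual $\sigma = \tfrac1n\sum_i(y'_i-\Iprod{x'_i,\Theta})^2$. Any SoS bound you derive will therefore carry a $\sigma^{k/2}$ term on the right, exactly as in Lemma~\ref{lem:key_sos_lemma}. Converting $\sigma$ into $\err_\calD(\Theta^*)$ is \emph{not} an SoS step: it requires taking the pseudo-expectation and then invoking optimality of $\tzeta$ against the planted assignment (Lemma~\ref{lem:bounding_opt_and_gen_error}). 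You mention minimizing a convex proxy but never use it; this is precisely where it enters.

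\textbf{The decomposition does not type-check, and a covariance-transfer lemma is missing.} Your starting identity mixes a fixed matrix on the left with the program variables $x'_i$ on the right; it only holds if the left side is $\Sigma'=\tfrac1n\sum_i x'_i(x'_i)^\top$, which is itself an SoS variable. The paper instead anchors everything at $\hat\Sigma=\tfrac1n\sum_i x_i^*(x_i^*)^\top$ and compares $\Theta$ to the \emph{empirical} minimizer $\hat\Theta$ rather than to $\Theta^*$. This makes two gradient conditions vanish exactly (one is a data fact for $\hat\Theta$, one is the program axiom for $\Theta$), so nothing needs to be ``absorbed into the $\gamma$ slack'' on the agreement part. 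The residual step you are missing is Lemma~\ref{lem:robust_sample_covariance}, an SoS proof that $\Iprod{v,\Sigma' v}^{k/2}\le(1+O(\delta^{k/2}))\Iprod{v,\hat\Sigma v}^{k/2}$, which is what lets you move the bound from the program covariance back to $\hat\Sigma$. After that, $\hat\Theta$ versus $\Theta^*$ and $\hat\Sigma$ versus $\Sigma^*$ are handled outside SoS by ordinary concentration (Fact~\ref{fact:convergnce_of_empirical_moments} and a Chebyshev argument on the empirical gradient at $\Theta^*$), which is where the $\gamma$ term actually originates.
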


\paragraph{Efficient Estimator for Arbitrary Noise.}
We note that an argument similar to the one presented for Theorem \ref{thm:optimal_efficient_robust_regression} results in a polynomial time estimator when the regression instance does not have negatively correlated moments (definition \ref{def:negative_moments}), albeit at a slightly worse rate. Formally, 

\begin{corollary}[Robust Regression with Arbitrary Noise]
Consider the hypothesis of Theorem \ref{thm:optimal_efficient_robust_regression}, without the negatively correlated moments assumption. Then, there exists an algorithm that runs in time $n^{\mathcal{O}(k)}$ outputs an estimator $\tilde\Theta$
such that when $n_0 = (d\log(d))^{\Omega(k)}/\gamma^2$, with probability $1-1/\poly(d)$ (over the draw of the input),
\[
    \Norm{ (\Sigma^*)^{1/2}\Paren{\Theta^* - \tilde{\Theta}} }_2  \leq \bigO{\lambda_k ~\epsilon^{1-2/k} + c_2~\eta_2~\gamma} \err_{\calD}(\Theta^*)^{1/2}
\]
Further, 
\[
\err_{\calD}\left(\tilde{\Theta}\right) \leq \Paren{1 + \bigO{\lambda^2_k ~\epsilon^{2-4/k} + \lambda^2_2~\gamma} }   \err_{\calD}(\Theta^*)
\]
\end{corollary}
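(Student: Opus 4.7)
The strategy is to run essentially the same sum-of-squares algorithm as in Theorem~\ref{thm:optimal_efficient_robust_regression} on a weakened polynomial system $\calA'_{\epsilon,\lambda_k}$ obtained from $\calA_{\epsilon,\lambda_k}$ by deleting the certifiable--negatively-correlated-moments block. Feasibility of $\calA'_{\epsilon,\lambda_k}$ for the planted solution (with $w^*_i$ the indicator of the uncorrupted points) is immediate, since the remaining intersection, gradient, and certifiable hypercontractivity constraints are a strict subset of those used in the main theorem, and their satisfaction with high probability under $n \geq (d\log d)^{\Omega(k)}/\gamma^2$ samples follows from Fact~\ref{fact:sampling_preserves_hypercontract} and Fact~\ref{fact:convergnce_of_empirical_moments} exactly as before. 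Solving the degree-$\mathcal{O}(k)$ SoS relaxation that minimizes $\frac{1}{n}\sum_i w_i(y'_i-\Iprod{x'_i,\Theta})^2$ subject to $\calA'_{\epsilon,\lambda_k}$ yields a pseudo-distribution $\tzeta$ in time $n^{\mathcal{O}(k)}$, and the returned estimator is $\tilde\Theta = \pE_{\tzeta}[\Theta]$.

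The heart of the argument is a sum-of-squares analogue of the robust certifiability proof underlying Corollary~\ref{cor:dependent_noise}. Following the template of Theorem~\ref{thm:optimal_efficient_robust_regression}, I would start from the identity
\[
\Iprod{v,\widehat\Sigma(\Theta-\Theta^*)} = \tfrac{1}{n}\sum_i \Iprod{v, x'_i(\Iprod{x'_i,\Theta}-y'_i)} + \tfrac{1}{n}\sum_i \Iprod{v, x'_i(y'_i - \Iprod{x'_i,\Theta^*})}
\]
and use the gradient constraint in $\calA'_{\epsilon,\lambda_k}$ together with the intersection constraints $w_i(x'_i-x_i) = w_i(y'_i-y_i) = 0$ to cancel the matched-sample contributions and reduce to a discrepancy term supported on at most $\epsilon n$ indices. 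At the step where the proof of Theorem~\ref{thm:optimal_efficient_robust_regression} invokes certifiable NCM to decouple moments, I would instead invoke SoS H\"older (Fact~\ref{fact:sos-holder}) with exponent $k/2$ applied to the weights $1-w_i$ (which are SoS-idempotent by $w_i^2=w_i$), producing a factor of $\epsilon^{(k-2)/k}$ and leaving a residual pseudo-expectation of the form $\bigl(\tfrac{1}{n}\sum_i(\Iprod{v,x'_i}(y'_i-\Iprod{x'_i,\Theta}))^{k/2}\bigr)^{2/k}$. SoS Cauchy--Schwarz (Fact~\ref{fact:pseudo-expectation-cauchy-schwarz}) then decouples this further into $(\tfrac{1}{n}\sum_i\Iprod{v,x'_i}^k)^{1/k}(\tfrac{1}{n}\sum_i(y'_i-\Iprod{x'_i,\Theta})^k)^{1/k}$, each factor of which is bounded by the corresponding certifiable hypercontractivity constraint in $\calA'_{\epsilon,\lambda_k}$ against $(v^\top \widehat\Sigma v)^{1/2}$ and against the fitted residual variance respectively, yielding an overall SoS upper bound of $\bigO{\lambda_k\epsilon^{1-2/k}}(v^\top\widehat\Sigma v)^{1/2}\cdot \bigl(\tfrac{1}{n}\sum_i(y'_i-\Iprod{x'_i,\Theta})^2\bigr)^{1/2}$.

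The remaining steps proceed exactly as in Theorem~\ref{thm:optimal_efficient_robust_regression}: set $v = \Theta-\Theta^*$, use Fact~\ref{fact:convergnce_of_empirical_moments} to replace $\widehat\Sigma$ by $(1\pm\gamma)\Sigma^*$, bound the fitted residual variance by $\err_{\calD}(\Theta^*)$ up to lower-order terms using the optimality of $\Theta^*$, and then cancel a common factor of $\Norm{(\Sigma^*)^{1/2}(\Theta-\Theta^*)}_2$ via Fact~\ref{lem:cancellation-sos} to obtain the SoS inequality $\calA'_{\epsilon,\lambda_k}\sststile{\mathcal{O}(k)}{}\Norm{(\Sigma^*)^{1/2}(\Theta-\Theta^*)}_2\leq \bigO{\lambda_k\epsilon^{1-2/k}+c_2\eta_2\gamma}\cdot\err_{\calD}(\Theta^*)^{1/2}$. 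Taking $\pE_{\tzeta}[\cdot]$ and using convexity of the scaled norm (Jensen for pseudo-distributions) delivers the parameter-recovery statement for $\tilde\Theta=\pE_{\tzeta}[\Theta]$; the least-squares bound then follows by squaring and combining with the gradient condition on $\Theta^*$, as in the closing calculation of Theorem~\ref{thm:robust_identitifiability_independent}.

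The main obstacle I anticipate is arranging the SoS H\"older step to align with the exponent pair $(k/(k-2),k/2)$ rather than the pair $(k/(k-1),k)$ that appears verbatim in Fact~\ref{fact:sos-holder}. Applying Fact~\ref{fact:sos-holder} with $k$ replaced by $k/2$ on the idempotent indicators $1-w_i$ handles this cleanly when $k$ is divisible by four; for general even $k$, one can obtain the same conclusion by combining Fact~\ref{fact:sos-holder} at exponent $k$ with one application of SoS Cauchy--Schwarz on the bilinear factor, at the cost of only constants. Every other ingredient -- gradient cancellation, certifiable hypercontractivity, empirical covariance concentration, and pseudo-expectation rounding -- is identical to the corresponding step in Theorem~\ref{thm:optimal_efficient_robust_regression}.
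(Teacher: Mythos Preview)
Your proposal is correct and matches the paper's own argument essentially step for step: drop the negatively-correlated-moments constraint from the polynomial system, and in the key SoS identifiability lemma replace the NCM decoupling by SoS H\"older at exponent $k/2$ (yielding the $(2\epsilon)^{k-2}$ factor) followed by SoS Cauchy--Schwarz to split $\tfrac{1}{n}\sum_i \Iprod{v,x'_i}^{k/2}(y'_i-\Iprod{x'_i,\Theta})^{k/2}$ into the two $k$-th moment factors, after which certifiable hypercontractivity and the remainder of the proof of Theorem~\ref{thm:optimal_efficient_robust_regression} apply verbatim. Your explicit attention to the divisibility hypothesis in Fact~\ref{fact:sos-holder} (needing $k/2$ to be a power of two, or else patching with an extra Cauchy--Schwarz) is in fact more careful than the paper's appendix, which applies the fact without comment.
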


\begin{mdframed}
  \begin{algorithm}[Optimal Robust Regression in Polynomial Time]
    \label{algo:rounding-for-pseudo-distribution}\mbox{}
    \begin{description}
    \item[Input:] $n$ samples $\calX_{\epsilon}$ from the robust regression model $\mathcal{R}_{\calD}(\epsilon, \Theta^*, \Sigma^*)$.
    
    \item[Operation:]\mbox{}
    \begin{enumerate}
    \item Find a degree-$\bigO{k}$ pseudo-distribution $\tzeta$ satisfying $\calA_{\epsilon, \lambda_k}$ and minimizing $$\min_{w, x', y', \Theta} \pE_{\tzeta}\left[ \Paren{ \frac{1}{n} \sum_{i \in [n]} w_i\left( y'_i - \Iprod{\Theta,x'}\right)^2}^{k} \right]$$.
    \item Round the pseudo-distribution to obtain an estimator $ \pE_{\tzeta}[\Theta]$.
    \end{enumerate}
    \item[Output:] A vector $\pE_{\tzeta}[\Theta]$ such that the recovery guarantee in Theorem \ref{thm:optimal_efficient_robust_regression} is satisfied.
    \end{description}
  \end{algorithm}
\end{mdframed}

At a high level, we simply do not enforce the negatively correlated moments constraint in our polynomial system $\calA_{\epsilon,\lambda_k}$ and instead use the SoS Cauchy-Schwarz inequality in our key technical lemma (Lemma \ref{lem:key_sos_lemma}). For completeness, we provide the proof of the SoS lemma in Appendix \ref{sec:arbitrary_noise_polytime}. 

\subsection{Analysis}

We begin by observing that we can efficiently optimize the polynomial program above since it admits a compact representation. In particular, $\calA_{\epsilon,\lambda_k}$ can be represented as a system of $\poly(n^{k})$ constraints in $n^{O(k)}$ variables. We refer the reader to \cite{fleming2019semialgebraic} for a detailed overview on how to efficiently implement the aforementioned constraints.

\begin{lemma}[Soundness of the Constraint System]
\label{lem:soundness}
Given $n \geq n_0$ samples from $\calR_\calD(\epsilon, \Theta^*, \Sigma)$, with probability at least $1-1/\poly(d)$ over the draw of the samples, there exists an assignment for $w, x', y'$ and $\Theta$ such that $\calA_{\epsilon, \lambda_k}$ is feasible when $n_0 = \Paren{(d\log(d))^{\Omega(k)}}$. 
\end{lemma}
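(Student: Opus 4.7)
The plan is to exhibit an explicit real-valued feasible assignment for $\calA_{\epsilon,\lambda_k}$; by the completeness of SoS (Fact \ref{fact:sos-completeness}), the Dirac pseudo-distribution on such a point yields a degree-$O(k)$ pseudo-distribution satisfying the program. Let $S \subseteq [n]$ be any subset of size exactly $(1-\epsilon)n$ on which the adversary acted trivially, so that $(x_i, y_i) = (x^*_i, y^*_i)$ for every $i \in S$; such an $S$ exists by Model \ref{model:robust_regression}. The witness is then $x'_i = x^*_i$, $y'_i = y^*_i$ for every $i \in [n]$; $w_i = \mathbf{1}[i \in S]$; and $\Theta = \hat\Theta$, the empirical least-squares minimizer on the clean samples, namely $\hat\Theta = \bigl(\sum_i x^*_i (x^*_i)^\top\bigr)^{-1} \sum_i x^*_i y^*_i$.

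The constraints $\sum_i w_i = (1-\epsilon)n$ and $w_i^2 = w_i$ hold by construction, as do $w_i(x'_i - x_i) = 0$ and $w_i(y'_i - y_i) = 0$, since $w_i = 1$ forces $i \in S$ and hence $x_i = x^*_i = x'_i$ and $y_i = y^*_i = y'_i$. The gradient equality is precisely the first-order optimality condition for $\hat\Theta$ on the clean samples (reading the $y_i$ appearing in that constraint as the program variable $y'_i$), and is therefore exact rather than approximate. The covariate hypercontractivity constraint is a direct consequence of Fact \ref{fact:sampling_preserves_hypercontract}: for $n \geq (d\log d)^{\Omega(k)}/\gamma^2$, the empirical distribution over $\{x^*_i\}$ is $(\lambda_k + \gamma, k)$-certifiably hypercontractive with probability $1 - 1/\poly(d)$, producing the required degree-$O(k)$ SoS proof in the auxiliary variable $v$.

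The main obstacle is verifying the noise hypercontractivity and the certifiable negatively correlated moments constraint, since both involve the empirical residuals $\hat e_i := y^*_i - \langle \hat\Theta, x^*_i\rangle$ rather than the population residuals $e_i := y^*_i - \langle \Theta^*, x^*_i\rangle$ on which our $(\eta_k,k)$-hypercontractivity and NCM hypotheses are stated. To bridge this gap I would proceed in three steps. First, use a classical finite-sample bound to show $\Norm{\Sigma^{1/2}(\hat\Theta - \Theta^*)}_2 = O(\sqrt{d/n})$ with high probability, so that $\hat e_i = e_i + \langle x^*_i, \Theta^* - \hat\Theta\rangle$ differs from $e_i$ by an $o(1)$ perturbation in every empirical moment. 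Second, in the spirit of Fact \ref{fact:sampling_preserves_hypercontract} applied to the joint distribution over $(x^*, e)$, show that the empirical distributions of $e_i$ and of the products $v^\top x^*_i \cdot e_i$ inherit certifiable hypercontractivity and certifiable NCM from the population with only a $\gamma$ loss in the constants. Third, transfer the certifiable inequalities from $e_i$ to $\hat e_i$ via the SoS almost-triangle inequality (Fact \ref{fact:sos-almost-triangle}), using covariate hypercontractivity to bound the contributions of $\langle x^*_i, \Theta^* - \hat\Theta\rangle^{2r}$ inside the expansion. The slack left in the program constants $\lambda_r$ relative to the underlying population constants absorbs these lower-order terms uniformly for all $r \leq k/2$, completing the feasibility check.
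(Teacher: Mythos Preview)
Your witness differs from the paper's in the choice of $\Theta$: you take the empirical minimizer $\hat\Theta$ on the clean samples, while the paper takes the population optimum $\Theta^*$. Your choice is actually the correct one for the gradient constraint, which holds identically at $\hat\Theta$ but is only satisfied in expectation at $\Theta^*$---a point the paper's proof simply skips. The price you pay is that your residuals are $\hat e_i$ rather than the true noise $e_i$, so the noise-hypercontractivity and NCM constraints require the perturbation step 3 you sketch. For the NCM constraint itself, the paper takes a more concrete route than your step 2: rather than appealing to a generic ``sampling preserves certifiable NCM'' principle (which Fact~\ref{fact:sampling_preserves_hypercontract} does not directly supply), the paper first uses certifiable hypercontractivity of the \emph{joint} empirical law of $(x^*_i, e_i)$ to collapse the degree-$2r$ inequality down to the single quadratic form $v^\top\bigl(\tfrac{1}{n}\sum_i x^*_i (x^*_i)^\top e_i^2\bigr)v$, then uses L\"owner concentration (Lemma~\ref{lem:hypercontractive_lowner_sampling}) to pass to the population matrix, applies the population NCM hypothesis at second moments only, and passes back. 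Your step 2 would essentially need this degree-$2$ reduction underneath it, after which your step 3 (almost-triangle plus $\|\hat\Sigma^{1/2}(\hat\Theta-\Theta^*)\|_2=O(\sqrt{d/n})$) is routine. In short, both routes are sound at the sketch level; the paper's is more direct on the moment side, while yours is more careful on the gradient side.
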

\begin{proof}
Consider the following assignment: for all $i\in [n]$ the $w_i$'s indicate the set of uncorrupted points in $\calX_\epsilon$, i.e. $w_i =1$ if $(x_i,y_i)=(x^*_i, y^*_i)$, $x_i' = x_i$ and $y_i' = y_i$. Further, $\Theta = \Theta^*$, the true hyperplane. It is easy to see that the first four constraints (intersection constraints) are satisfied. 

We observe that the marginal distribution over the covariates and the noise are both $(\lambda_k, k)$-certifiably hypercontractive since they are Affine transformations of $\calD$ (Fact \ref{fact:certifiable_hypercontractive_under_affine_transformations}). 
Next, it follows from Fact \ref{fact:sampling_preserves_hypercontract}, that for $n_0 = \Omega\left( d\log(d)^{\mathcal{O}(k)}\right)$, the uniform distribution over the samples $x_i$, is $(2 ~\lambda_k , k)$-certifiably hypercontractive with probability at least $1-1/\poly(d)$. Similarly, the uniform distribution on $y_i - \Iprod{ x_i,\Theta^*}$ is $(2 ~\lambda_k , k)$-certifiably hypercontractive. 


It remains to show that sampling preserves certifiable negatively correlated moments. Recall, since the joint distribution is hypercontractive, by Fact \ref{fact:sampling_preserves_hypercontract} we know that there's a degree $\bigO{k}$ proof of
\begin{equation}
\label{eqn:ncm_sampling}
\begin{split}
    \frac{1}{n} \sum_{i \in [n]} \Iprod{ v, x_i}^k \Paren{y_i - \Iprod{x_i,\Theta^*} }^k & \leq \bigO{\lambda_k^k} \Paren{ \frac{1}{n} \sum_{i \in [n]} \Iprod{ v, x_i}^2 \Paren{y_i - \Iprod{x_i,\Theta^*} }^2 }^{k/2}  \\
    & = \bigO{\lambda_k^k} \Paren{ \frac{1}{n} \sum_{i \in [n]} v^{\top} x_i (x_i)^\top  \Paren{y_i - \Iprod{x_i,\Theta^*} }^2 v }^{k/2}
\end{split}
\end{equation}
It thus suffices to bound the Operator norm of $\frac{1}{n}\sum_{i\in[n]}x_i x_i^\top  \Paren{y_i - \Iprod{x_i,\Theta^*} }^2$. It follows from Lemma \ref{lem:hypercontractive_lowner_sampling} that with probability at least $1-1/\poly(d)$, 
\begin{equation}
\label{eqn:lowner_upperbound}
    \frac{1}{n}\sum_{i\in[n]}x_i x_i^\top  \Paren{y_i - \Iprod{x_i, \Theta^*} }^2 \preceq \bigO{1} \expecf{x,y \sim\calD}{ xx^\top \Paren{ y - \Iprod{x,\Theta^*}}^2}
\end{equation}
when $n \geq n_0$. Using that $\calD$ has negatively correlated moments,
\begin{equation}
\label{eqn:population_ncm}
    \expecf{x,y \sim\calD}{ xx^\top \Paren{ y - \Iprod{x,\Theta^*}}^2} \preceq \expecf{x\sim \calD}{xx^\top} \expecf{x,y\sim\calD}{ \Paren{y- \Iprod{x, \Theta^*}}^2 } 
\end{equation}
Using Lemma \ref{lem:hypercontractive_lowner_sampling} on $xx^\top$ and $\Paren{y- \Iprod{x, \Theta^*}}^2$, we can bound \eqref{eqn:population_ncm} as follows: 
\begin{equation}
\label{eqn:lowner_hypercontractive}
    \expecf{x\sim \calD}{xx^\top} \expecf{x,y\sim\calD}{ \Paren{y- \Iprod{x, \Theta^*}}^2 }  \preceq  \bigO{1} \expecf{}{x_ix_i^\top} \Paren{y_i- \Iprod{x_i, \Theta^*}}^2
\end{equation}
Combining Equations \eqref{eqn:lowner_upperbound}, \eqref{eqn:population_ncm}, and \eqref{eqn:lowner_hypercontractive}, and substituting in \eqref{eqn:ncm_sampling}, we have
\begin{equation*}
    \frac{1}{n} \sum_{i \in [n]} \Iprod{ v, x_i}^k \Paren{y_i - \Iprod{x_i,\Theta^*} }^k \leq \bigO{\lambda_k^k} \Paren{ \frac{1}{n}\sum_{i \in [n]} \Iprod{x_i,v}^2 }^{\frac{k}{2}} \Paren{\frac{1}{n}\sum_{i \in[n]} \Paren{y_i- \Iprod{x_i, \Theta^*}}^2 }^{\frac{k}{2}}
\end{equation*}
which concludes the proof. 
\end{proof}

Let $\hat\Sigma$ be the empirical covariance of the uncorrupted samples $\calX$ and let $\hat\Theta$ be an optimizer for the empirical loss.
Applying Theorem \ref{thm:robust_identitifiability_independent} with $\calD$ being the uniform distribution on the uncorrupted samples $\calX$ and $\calD'$ being the uniform distribution on $x_i'$, we get 
\begin{equation*}
    \Norm{\hat{\Sigma}^{1/2}\Paren{\Theta - \hat{\Theta}}  }_2 \leq \bigO{ \lambda_k~ \epsilon^{1-1/k}} \err_{\calD}(\Theta^*)^{1/2}
\end{equation*}

Observe, the aforementioned bound is not a polynomial identity and thus cannot be expressed in the SoS framework. Therefore, we provide a low-degree SoS proof of a slightly modified version of the inequality above, that is inspired by our information theoretic identifiability proof in Theorem \ref{thm:robust_identitifiability_independent}. 

\begin{lemma}[Robust Identifiability in SoS]
\label{lem:key_sos_lemma}
Consider the hypothesis of Theorem \ref{thm:optimal_efficient_robust_regression}.
Let $w, x',y'$ and $\Theta$ be feasible solutions for the polynomial constraint system $\calA$. Let $\hat{\Theta} = \arg\min_{\Theta} \frac{1}{n}\sum_{i\in [n]} (y^*_i - \Iprod{x^*_i, \Theta} )^2$ be the empirical loss minimizer on the uncorrupted samples and let $\hat\Sigma = \expecf{}{x_i^* (x_i^*)^{\top}}$ be the covariance of the uncorrupted samples. Then, 
\begin{equation*}
\begin{split}
    \calA\sststile{4k}{w,x',y',\Theta} \Biggl\{  \Norm{ \hat\Sigma^{1/2}\Paren{\hat\Theta - \Theta} }^{2k}_2  &\leq 2^{3k}(2\epsilon)^{k-1} \lambda_k^k ~\sigma^{k/2}  \Norm{ \expecf{}{ x_i' (x_i')^{\top}}^{1/2}\Paren{\hat\Theta - \Theta} }^{k}_2 \\
    & \hspace{0.15in} + 2^{3k} (2\epsilon)^{k-2}\lambda_k^{2k} \Norm{ \hat\Sigma^{1/2} \Paren{\hat\Theta -\Theta}}^{2k}_2 \\
    & \hspace{0.15in}+ 2^{3k} (2\epsilon)^{k-1} \lambda_k^k    \expecf{}{\Paren{ y_i^* - \Iprod{x_i^*,\hat\Theta }}^2 }^{k/2} \Norm{ \hat\Sigma^{1/2}\Paren{\hat\Theta - \Theta} }^{k}_2 \Biggr\}
\end{split}
\end{equation*}
\end{lemma}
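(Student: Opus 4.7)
The plan is to translate the coupling-based identifiability argument of Theorem \ref{thm:robust_identitifiability_independent} into the sum-of-squares proof system. The indicators $\indic{(x,y) \neq (x',y')}$ used there become the polynomial $1 - w_i w^*_i$, where $w^*_i \in \{0,1\}$ is the (fixed, input-dependent) indicator that the $i$th input sample is uncorrupted, so $w^*_i(x_i - x^*_i) = 0$ and $\frac{1}{n}\sum_i w^*_i \geq 1 - \epsilon$. Combined with the intersection constraints in $\calA$ this gives $w_i w^*_i(x'_i - x^*_i) = 0$ and $w_i w^*_i(y'_i - y^*_i) = 0$, while the Boolean identity $(1 - w_iw^*_i)^k = 1 - w_iw^*_i$ together with $\frac{1}{n}\sum_i(1 - w_iw^*_i) \leq 2\epsilon$ plays the role of the coupling-mass bound from the identifiability proof.

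First I would expand $\langle v, \hat\Sigma(\hat\Theta - \Theta)\rangle = \frac{1}{n}\sum_i \langle v, x^*_i(x^*_i)^\top(\hat\Theta - \Theta)\rangle$ and telescope through $y^*_i$ exactly as in \eqref{eqn:main_bound}: the piece $\frac{1}{n}\sum_i \langle v, x^*_i(\langle x^*_i, \hat\Theta\rangle - y^*_i)\rangle$ vanishes identically by optimality of $\hat\Theta$ on the uncorrupted samples (a scalar identity, so trivially SoS-provable), and into the remainder $\frac{1}{n}\sum_i \langle v, x^*_i(y^*_i - \langle x^*_i, \Theta\rangle)\rangle$ I would insert $1 = w_iw^*_i + (1 - w_iw^*_i)$. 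On the matched part the intersection constraints allow me to substitute primed for starred variables, and combining with the gradient constraint of $\calA$ applied to $\Theta$ cancels the matched sum. What remains are two ``unmatched'' tail sums, $\frac{1}{n}\sum_i(1-w_iw^*_i)\langle v, x^*_i(y^*_i - \langle x^*_i, \Theta\rangle)\rangle$ and $\frac{1}{n}\sum_i(1-w_iw^*_i)\langle v, x'_i(y'_i - \langle x'_i, \Theta\rangle)\rangle$, each reachable as a polynomial identity from $\calA$.

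Next I would apply SoS H\"older (Fact \ref{fact:sos-holder}) to each tail after splitting $y - \langle x, \Theta\rangle = (y - \langle x, \hat\Theta\rangle) + \langle x, \hat\Theta - \Theta\rangle$ and invoking the SoS almost-triangle inequality (Fact \ref{fact:sos-almost-triangle}). The piece containing $(y - \langle x, \hat\Theta\rangle)$ is treated with H\"older exponent $k-1$ on the indicator (yielding the factor $(2\epsilon)^{k-1}$) followed by the certifiable negatively-correlated-moments constraint and certifiable hypercontractivity of covariates and noise; applied to the primed tail this produces the first stated term, and applied to the starred tail it produces the third. The piece involving $\langle v, x\,x^\top(\hat\Theta - \Theta)\rangle$ is instead bounded using H\"older with exponent $k-2$ (yielding $(2\epsilon)^{k-2}$) followed by two applications of certifiable hypercontractivity, producing the self-decoupled term $(2\epsilon)^{k-2}\lambda_k^{2k}\|\hat\Sigma^{1/2}(\hat\Theta - \Theta)\|^{2k}_2$.

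The last step is to specialize $v = \hat\Theta - \Theta$ and to promote the resulting order-$k$ inequality to a bound on $\|\hat\Sigma^{1/2}(\hat\Theta - \Theta)\|^{2k}_2$, and this is where the main obstacle lies. The identifiability argument of Theorem \ref{thm:robust_identitifiability_independent} divides both sides by $\|\hat\Sigma^{1/2}(\hat\Theta - \Theta)\|_2$, which is not an SoS-legal operation. The workaround, standard in sum-of-squares statistics, is to stage the powers so that the derived inequality has shape $\|\hat\Sigma^{1/2}(\hat\Theta - \Theta)\|^{2k}_2 \leq (\text{RHS terms, each already carrying a matching power of } \|\hat\Sigma^{1/2}(\hat\Theta - \Theta)\|_2)$, and then to invoke Fact \ref{lem:cancellation-sos} (cancellation within SoS) to absorb any surplus factor. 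This forces every RHS summand to appear with $\|\hat\Sigma^{1/2}(\hat\Theta - \Theta)\|^{k}_2$ on the noise-type terms and $\|\cdot\|^{2k}_2$ on the self-decoupling term, which is exactly the shape of the three summands in the statement.
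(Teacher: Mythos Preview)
Your plan is essentially the paper's proof: the same auxiliary indicator $w_iw^*_i$ (the paper writes $w'_i$), the same mass bound $\frac{1}{n}\sum_i(1-w_iw^*_i)\leq 2\epsilon$, the gradient identity for $\hat\Theta$ on the starred samples, the gradient constraint in $\calA$ for $\Theta$ on the primed variables, and the same SoS H\"older and almost-triangle machinery.

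One step needs correcting. You propose to split \emph{both} tails through $\hat\Theta$, but this does not work on the primed tail: the negatively-correlated-moments constraint and the noise-hypercontractivity constraint in $\calA$ are stated only for the SoS variable $\Theta$, not for the scalar $\hat\Theta$, so after your split you would be left with $\expecf{}{\langle v, x'_i\rangle^k(y'_i - \langle x'_i, \hat\Theta\rangle)^k}$ and nothing in $\calA$ to bound it. The paper handles the primed tail \emph{without} splitting: apply SoS H\"older with exponent $k-1$ directly to $\expecf{}{(1-w_iw^*_i)\langle v,x'_i\rangle(y'_i - \langle x'_i,\Theta\rangle)}$, then invoke the NCM constraint for $\Theta$; this is exactly what produces the first stated term (note $\sigma = \expecf{}{(y'_i - \langle x'_i,\Theta\rangle)^2}$ involves $\Theta$, not $\hat\Theta$, which is inconsistent with your description). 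Only the starred tail is split via $\hat\Theta$: the $(y^*_i - \langle x^*_i,\hat\Theta\rangle)$ piece gives the third term (here NCM and hypercontractivity of the uncorrupted samples at $\hat\Theta$ are scalar facts rather than constraints from $\calA$), and the $\langle x^*_i,\hat\Theta - \Theta\rangle$ piece with H\"older exponent $k-2$ gives the second. Finally, the cancellation via Fact~\ref{lem:cancellation-sos} is carried out in the subsequent rounding lemma, not here; the present lemma deliberately stops at the mixed-power inequality.
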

\begin{proof}
Consider the empirical covariance of the uncorrupted set given by $\hat\Sigma= \expecf{}{x_i^* (x_i^*)^{\top}}$. Then, using the \ref{eq:sos-substitution}, along with SoS Almost Triangle Inequality (Fact \ref{fact:sos-almost-triangle}),
\begin{equation}
\label{eqn:main_sos_bound}
\begin{split}
    \sststile{2k}{\Theta} \Biggl\{ \Iprod{v, \hat\Sigma \Paren{\hat\Theta - \Theta } }^k & = \Iprod{v, \expecf{}{x^*_i (x_i^*)^{\top}\Paren{\hat\Theta - \Theta } + x_i^*y_i^* - x_i^*y_i^*  }  }^k \\
    &= \Iprod{v, \expecf{}{x^*_i \Paren{ \Iprod{x_i^*,\hat\Theta }  - y_i^* } } + \expecf{}{x^*_i \Paren{ y_i^* - \Iprod{x_i^*,\Theta }  }}  }^k \\
    & \leq 2^k \Iprod{v, \expecf{}{x^*_i \Paren{ \Iprod{x_i^*,\hat\Theta }  - y_i^* } } }^k + 2^k\Iprod{v, \expecf{}{x^*_i \Paren{ y_i^* - \Iprod{x_i^*,\Theta }  }}  }^k  \Biggr\}
\end{split}
\end{equation}
Observe, the first term in \eqref{eqn:main_sos_bound} only consists of constants of the proof system. Since $\hat\Theta$ is the minimizer of $\expecf{}{\Paren{ \Iprod{x^*_i, \Theta } - y_i^*  }^2 }$, the gradient condition on the samples (appearing in Equation  \eqref{eqn:grad} of the indentifiability proof) implies this term is $0$. Therefore, applying the \ref{eq:sos-substitution} it suffices to bound the second term. 

To this end, we introduce the following auxiliary variables : for all $i\in [n]$, let $w'_i= w_i$ iff the $i$-th sample is uncorrupted in $\calX_\epsilon$, i.e. $x_i = x_i^*$. Then, it is easy to see that $\sum_i w'_i \geq (1-2\epsilon)n$. Further, since $\calA \sststile{2}{w} \Set{ (1- w_i' w_i)^2 = (1-w_i'w_i)}$, 
\begin{equation}
\label{eqn:bound_uncorrupted_not_indicated}
    \calA \sststile{2}{w} \Set{\frac{1}{n}\sum_{i \in [n]} (1-w'_iw_i)^2 =  \frac{1}{n}\sum_{i \in [n]} (1-w'_iw_i) \leq 2\epsilon }
\end{equation}
The above equation bounds the uncorrupted points in $\calX_\epsilon$ that are not indicated by $w$. Then, using the \ref{eq:sos-substitution}, along with the SoS Almost Triangle Inequality (Fact \ref{fact:sos-almost-triangle}),
\begin{equation}
\label{eqn:split_terms}
\begin{split}
    \calA \sststile{2k}{\Theta, w'} \Biggl\{\Iprod{v, \expecf{}{x^*_i \Paren{ y_i^* - \Iprod{x_i^*,\Theta }  }}  }^k & = \Iprod{v, \expecf{}{x^*_i \Paren{ y_i^* - \Iprod{x_i^*,\Theta }(w'_i + 1 -w'_i)  }}  }^k  \\
    & =  \Iprod{v, \expecf{}{w'_ix^*_i \Paren{ y_i^* - \Iprod{x_i^*,\Theta }}} + \expecf{}{(1-w'_i)x^*_i \Paren{ y_i^* - \Iprod{x_i^*,\Theta }}}  }^k \\
    &\leq 2^k\Iprod{v, \expecf{}{w'_ix^*_i \Paren{ y_i^* - \Iprod{x_i^*,\Theta }}}}^k  \\
    & \hspace{0.15in} + 2^k \Iprod{v, \expecf{}{(1-w'_i)x^*_i \Paren{ y_i^* - \Iprod{x_i^*,\Theta }}}}^k \Biggr\}
\end{split}
\end{equation}
Consider the first term of the last inequality in \eqref{eqn:split_terms}. Observe, since $w'_i x_i^*= w_i w_i'  x'_i$ and similarly, $w'_i y_i^*= w_i w_i'  y'_i$, 
\[\calA \sststile{4}{\Theta, w'} \Set{\expecf{}{w'_ix^*_i \Paren{ y_i^* - \Iprod{x_i^*,\Theta }}} = \expecf{}{w'_i w_i x'_i \Paren{ y_i' - \Iprod{x_i',\Theta }}}} 
\] 
For the sake of brevity, the subsequent statements hold for relevant SoS variables and have degree $O(k)$ proofs.  Using the \ref{eq:sos-substitution}, 

\begin{equation}
\label{eqn:grad_on_sos_var}
\begin{split}
    \calA \sststile{}{} \Biggl\{ \Iprod{v, \expecf{}{w'_ix^*_i \Paren{ y_i^* - \Iprod{x_i^*,\Theta }}}}^k & = \Iprod{v, \expecf{}{w'_i w_i x'_i \Paren{ y_i' - \Iprod{x_i',\Theta }}}}^k \\
    & =\Iprod{v, \expecf{}{  x'_i \Paren{ y_i' - \Iprod{x_i',\Theta }}} + \expecf{}{(1-w_i'w_i)  x'_i \Paren{ y_i' - \Iprod{x_i',\Theta }}}}^k \\
    & \leq 2^k \Iprod{v, \expecf{}{  x'_i \Paren{ y_i' - \Iprod{x_i',\Theta }}} }^k \\
    & \hspace{0.15in} + 2^k \Iprod{v,  \expecf{}{(1-w_i'w_i)  x'_i \Paren{ y_i' - \Iprod{x_i',\Theta }}} }^k  \Biggr\}
\end{split}
\end{equation}
Observe, the first term in the last inequality above is identically $0$, since we enforce the gradient condition on the SoS variables $x',y'$ and $\Theta$. We can then rewrite the second term using linearity of expectation, followed by applying SoS Hölder's Inequality (Fact \ref{fact:sos-holder}) combined with $\calA \sststile{2}{w} \Set{(1-w'_iw_i)^{2} = 1- w_i'w_i}$ to get 

\begin{equation}
\label{eqn:independence}
\begin{split}
    \calA \sststile{}{} \Biggl\{ \Iprod{v,  \expecf{}{(1-w_i'w_i) x'_i \Paren{ y_i' - \Iprod{x_i',\Theta }}} }^k & = \expecf{}{\Iprod{v,  (1-w_i') w_i x'_i \Paren{ y_i' - \Iprod{x_i',\Theta }} } }^k \\ 
    &  = \expecf{}{ (1-w_i'w_i) \Iprod{ v,  x'_i}   \Paren{ y_i' - \Iprod{ x_i',\Theta }}  }^k \\
    & \leq \expecf{}{(1-w_i'w_i)}^{k-1}\expecf{}{\Iprod{ v,  x'_i}^k   \Paren{ y_i' - \Iprod{ x_i',\Theta }}^k }  \\
    & \leq (2\epsilon)^{k-1} \expecf{}{\Iprod{ v, x'_i}^k   \Paren{ y_i' - \Iprod{ x_i',\Theta }}^k } \Biggr\}
\end{split}
\end{equation}
where the last inequality follows from Equation \eqref{eqn:bound_uncorrupted_not_indicated}. Next, we use the certifiable negatively correlated moments constraint with the \ref{eq:sos-substitution},
\begin{equation}
\label{eqn:negative_dependence}
    \calA \sststile{}{} \Set{\expecf{}{\Iprod{ v, x'_i}^k   \Paren{ y_i' - \Iprod{ x_i',\Theta }}^k }  \leq \bigO{\lambda^k_k} \expecf{}{\Iprod{ v,  x'_i}^2}^{\frac{k}{2}} \expecf{}{\Paren{ y_i' - \Iprod{ x_i',\Theta }}^2 }^{\frac{k}{2}} }
\end{equation}
For brevity, let $\sigma=  \expecf{}{\Paren{ y_i' - \Iprod{ x_i',\Theta }}^2 }$.  Using the \ref{eq:sos-substitution}, plugging Equation \eqref{eqn:negative_dependence} back into \eqref{eqn:independence}, we get 
\begin{equation}
\label{eqn:updated_independence}
    \begin{split}
    \calA \sststile{}{} \Biggl\{ \Iprod{v,  \expecf{}{(1-w_i')  x'_i \Paren{ y_i' - \Iprod{x_i',\Theta }}} }^k \leq (2\epsilon)^{k-1} \lambda_k^k  ~\sigma^{k/2}  \Iprod{ v,\expecf{}{  x'_i (x'_i)^{\top}}v }^{k/2}  \Biggr\}
\end{split}
\end{equation}
Recall, we have now bounded the first term of the last inequality in \eqref{eqn:split_terms}. 
Therefore, it remains to bound the second term of the last inequality in \eqref{eqn:split_terms}. Using the \ref{eq:sos-substitution}, we have 
\begin{equation}
\label{eqn:almost_tri_second}
\begin{split}
    \calA \sststile{}{} \Biggl\{\Iprod{v, \expecf{}{(1-w'_i)x^*_i \Paren{ y_i^* - \Iprod{x_i^*,\Theta }}}}^k & = \Iprod{v, \expecf{}{(1-w'_i)x^*_i \Paren{ y_i^* - \Iprod{x_i^*,\Theta -\hat\Theta +\hat\Theta }}}}^k\\
    &\leq 2^k \Iprod{v, \expecf{}{(1-w'_i)x^*_i \Paren{ y_i^* - \Iprod{x_i^*,\hat\Theta }}}}^k \\
    & \hspace{0.15in} + 2^k\Iprod{v, \expecf{}{(1-w'_i)x^*_i \Paren{  \Iprod{x_i^*,\Theta-\hat\Theta }}}}^k \Biggr\} 
\end{split}
\end{equation}
We again handle each term separately. Observe, the first term when decoupled is a statement about the uncorrupted samples. Therefore, using the SoS Hölder's Inequality (Fact \ref{fact:sos-holder}), 
\begin{equation}
\label{eqn:holder_to_first}
\begin{split}
\calA \sststile{}{} \Biggl\{ \Iprod{v, \expecf{}{(1-w'_i)x^*_i \Paren{ y_i^* - \Iprod{x_i^*,\hat\Theta }}}}^k &= \expecf{}{(1-w'_i)\Iprod{v, x^*_i \Paren{ y_i^* - \Iprod{x_i^*,\hat\Theta }}}}^k \\
& \leq \expecf{}{(1-w'_i)}^{k-1} \expecf{}{\Iprod{v, x^*_i \Paren{ y_i^* - \Iprod{x_i^*,\hat\Theta }}}^k} \\
& \leq (2\epsilon)^{k-1} \expecf{}{\Iprod{v, x_i^*}^k\Paren{ y_i^* - \Iprod{x_i^*,\hat\Theta }}^k }  \Biggr\}
\end{split}
\end{equation}
Observe, the uncorrupted samples have negatively correlated moments, and thus 
\[
\expecf{}{\Iprod{v, x_i^*}^k\Paren{ y_i^* - \Iprod{x_i^*,\hat\Theta }}^k } \leq \bigO{\lambda^k_k} \expecf{}{\Iprod{v, x_i^*}^2 }^{k/2} \expecf{}{\Paren{ y_i^* - \Iprod{x_i^*,\hat\Theta }}^2 }^{k/2} 
\]
Then, by the \ref{eq:sos-substitution}, we can bound \eqref{eqn:holder_to_first} as follows:

\begin{equation}
\label{eqn:upper_bound_uncorrupted}
    \calA \sststile{}{} \Biggl\{ \Iprod{v, \expecf{}{(1-w'_i)x^*_i \Paren{ y_i^* - \Iprod{x_i^*,\hat\Theta }}}}^k 
 \leq (2\epsilon)^{k-1} \lambda_k^k    \expecf{}{\Paren{ y_i^* - \Iprod{x_i^*,\hat\Theta }}^2 }^{k/2} \Iprod{v, \hat\Sigma v }^{k/2} \Biggr\}
\end{equation}
In order to bound the second term in \eqref{eqn:almost_tri_second}, we use the SoS Hölder's Inequality,
\begin{equation}
\label{eqn:holder_2/k}
\begin{split}
    \calA \sststile{}{} \Biggl\{ \Iprod{v, \expecf{}{(1-w'_i)x^*_i \Paren{  \Iprod{x_i^*,\Theta-\hat\Theta }}}}^k  & =  \expecf{}{ (1-w_i')^{k-2} \Iprod{v, x_i^* \Paren{\Iprod{x_i^*, \Theta-\hat\Theta }} }} \\
    & \leq \expecf{}{1-w'_i}^{k-2} \expecf{}{\Paren{v^{\top}x^*_i (x^*_i)^{\top} (\Theta-\hat\Theta) }^{\frac{k}{2}} }^2 \\
    & \leq (2\epsilon)^{k-2} \expecf{}{\Paren{v^{\top}x^*_i (x^*_i)^{\top} (\Theta-\hat\Theta) }^{\frac{k}{2}} }^2 \Biggr\}
\end{split}
\end{equation}
Combining the bounds obtained in \eqref{eqn:upper_bound_uncorrupted} and \eqref{eqn:holder_2/k}, we can restate Equation \eqref{eqn:almost_tri_second} as follows 
\begin{equation}
\label{eqn:second_term_in15}
\begin{split}
    \calA \sststile{}{} \Biggl\{\Iprod{v, \expecf{}{(1-w'_i)x^*_i \Paren{ y_i^* - \Iprod{x_i^*,\Theta }}}}^k &\leq 2^k (2\epsilon)^{k-1} \lambda_k^k    \expecf{}{\Paren{ y_i^* - \Iprod{x_i^*,\hat\Theta }}^2 }^{k/2} \Iprod{v, \hat\Sigma v }^{k/2} \\
    & \hspace{0.15in} + 2^k(2\epsilon)^{k-2} \expecf{}{\Paren{v^{\top}x^*_i (x^*_i)^{\top} (\Theta-\hat\Theta) }^{k/2} }^2 \hspace{0.2in} \Biggr\} 
\end{split}
\end{equation}
Combining \eqref{eqn:second_term_in15} with \eqref{eqn:updated_independence}, we obtain an upper bound for the last inequality in Equation \eqref{eqn:split_terms}. Therefore, using the \ref{eq:sos-substitution}, we obtain 
\begin{equation}
\begin{split}
    \calA \sststile{}{} \Biggl\{\Iprod{v, \expecf{}{x^*_i \Paren{ y_i^* - \Iprod{x_i^*,\Theta }  }}  }^k 
    &\leq 2^k(2\epsilon)^{k-1} \lambda_k^k  ~\sigma^{k/2}  \Iprod{ v,\expecf{}{  x'_i (x'_i)^{\top}}v }^{k/2} \\
    & \hspace{0.15in} + 2^{2k} (2\epsilon)^{k-2} \expecf{}{\Paren{v^{\top}x^*_i (x^*_i)^{\top} (\Theta-\hat\Theta) }^{\frac{k}{2}} }^2 \\
    & \hspace{0.15in}+ 2^{2k} (2\epsilon)^{k-1} \lambda_k^k  \expecf{}{\Paren{ y_i^* - \Iprod{x_i^*,\hat\Theta }}^2 }^{k/2} \Iprod{v, \hat\Sigma v }^{k/2} \Biggr\}
\end{split}
\end{equation}
Recall, an upper bound on Equation \eqref{eqn:main_sos_bound} suffices to obtain an upper bound on $\Iprod{v, \hat\Sigma\Paren{\hat\Theta - \Theta} }$ as follows:

\begin{equation}
\label{eqn:final_bound_v}
\begin{split}
    \calA \sststile{}{} \Biggl\{ \Iprod{v, \hat\Sigma \Paren{\hat\Theta - \Theta } }^k  &\leq 2^{2k}(2\epsilon)^{k-1} \lambda_k^k  ~\sigma^{k/2}  \Iprod{ v,\expecf{}{  x'_i (x'_i)^{\top}}v }^{k/2} \\
    & \hspace{0.15in} + 2^{3k} (2\epsilon)^{k-2} \expecf{}{\Paren{v^{\top}x^*_i (x^*_i)^{\top} (\Theta-\hat\Theta) }^{\frac{k}{2}} }^2 \\
    & \hspace{0.15in}+ 2^{3k} (2\epsilon)^{k-1} \lambda_k^k   \expecf{}{\Paren{ y_i^* - \Iprod{x_i^*,\hat\Theta }}^2 }^{k/2} \Iprod{v, \hat\Sigma v }^{k/2} \Biggr\}
\end{split}
\end{equation}
Consider the substitution $v \mapsto \Paren{\hat\Theta - \Theta}$. Then, 
\begin{equation*}
    \begin{split}
        \Iprod{v, \hat\Sigma \Paren{\hat\Theta - \Theta } }^k &= \Norm{ \hat\Sigma^{1/2}\Paren{\hat\Theta - \Theta} }^{2k}_2 \\
        \Iprod{ v,\expecf{}{  x'_i (x'_i)^{\top}}v }^{k/2}&= \Norm{ \expecf{}{ x_i' (x_i')^{\top}}^{1/2}\Paren{\hat\Theta - \Theta} }^{k}_2 \\
        \expecf{}{\Paren{v^{\top}x^*_i (x^*_i)^{\top} (\Theta-\hat\Theta) }^{\frac{k}{2}} }^2& = \expecf{}{ \langle x_i^* , \hat\Theta-\Theta \rangle^{k}}^{2} \leq \lambda_k^{2k} \Norm{ \hat\Sigma^{1/2} \Paren{\hat\Theta -\Theta}}^{2k}_2 \\
        \Iprod{v, \hat\Sigma v}^{k/2} & = \Norm{ \hat\Sigma^{1/2}\Paren{\hat\Theta - \Theta} }^{k}_2
    \end{split}
\end{equation*}
Combining the above with \eqref{eqn:final_bound_v}, we conclude 
\begin{equation}
    \begin{split}
    \calA \sststile{}{} \Biggl\{  \Norm{ \hat\Sigma^{1/2}\Paren{\hat\Theta - \Theta} }^{2k}_2  &\leq 2^{3k}(2\epsilon)^{k-1} \lambda_k^k ~\sigma^{k/2}  \Norm{ \expecf{}{ x_i' (x_i')^{\top}}^{1/2}\Paren{\hat\Theta - \Theta} }^{k}_2 \\
    & \hspace{0.15in} + 2^{3k} (2\epsilon)^{k-2}\lambda_k^{2k} \Norm{ \hat\Sigma^{1/2} \Paren{\hat\Theta -\Theta}}^{2k}_2 \\
    & \hspace{0.15in}+ 2^{3k} (2\epsilon)^{k-1} \lambda_k^k    \expecf{}{\Paren{ y_i^* - \Iprod{x_i^*,\hat\Theta }}^2 }^{k/2} \Norm{ \hat\Sigma^{1/2}\Paren{\hat\Theta - \Theta} }^{k}_2 \Biggr\}
\end{split}
\end{equation}

\end{proof}

Next, we relate the covariance of the samples indicated by $w$ to the covariance on the uncorrupted points. Observe, a real world proof of this follows simply from Fact \ref{fact:hypercontractive_covariance}. 
\begin{lemma}[Bounding Sample Covariance]
\label{lem:robust_sample_covariance}
Consider the hypothesis of Theorem \ref{thm:optimal_efficient_robust_regression}. Let $w, x',y'$ and $\Theta$ be feasible solutions for the polynomial constraint system $\calA$. Then, for $\delta \leq \bigO{\lambda_k  \epsilon^{1-1/k}}<1$, 
\begin{equation*}
    \calA \sststile{2k}{w,x'} \Set{\Iprod{v,\expecf{}{ x'_i (x_i')^{\top}} v}^{k/2} \leq \Paren{1+\bigO{\delta^{k/2}}} \Iprod{v,\hat\Sigma v}^{k/2} } 
\end{equation*}
\end{lemma}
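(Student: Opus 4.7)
The goal is to prove an SoS analogue of Fact \ref{fact:hypercontractive_covariance}: the covariance of the SoS variables $x_i'$ is spectrally close to the empirical covariance $\hat\Sigma$ of the uncorrupted samples. My plan is to mirror the TV-coupling argument using the indicator variables $w_i$ as a proxy for the coupling.

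\paragraph{Step 1: Introduce the real-world indicator $w'$.} Set $w'_i=1$ iff $(x_i,y_i)=(x_i^*,y_i^*)$, so $w'_i$ are constants of the proof and $\sum_i w_i' \ge (1-\epsilon)n$. Together with the constraint $\sum_i w_i=(1-\epsilon)n$, this yields $\sststile{2}{w}\{\frac{1}{n}\sum_i(1-w_iw'_i)\le 2\epsilon\}$, and $\calA \sststile{4}{w}\{w_iw'_i x'_i = w_iw'_i x_i^*\}$ using the intersection constraint $w_i(x'_i-x_i)=0$.

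\paragraph{Step 2: Split $\Iprod{v,\Sigma' v}$.} Let $\Sigma' = \expecf{}{x'_i(x'_i)^\top}$, and write
\begin{equation*}
\Iprod{v,\Sigma' v} \;=\; \underbrace{\expecf{}{w_iw'_i\Iprod{v,x_i^*}^2}}_{A}\;+\;\underbrace{\expecf{}{(1-w_iw'_i)\Iprod{v,x'_i}^2}}_{B}.
\end{equation*}
Since $w_iw'_i\in\{0,1\}$, one has $\calA\sststile{2}{w}\{A\le \Iprod{v,\hat\Sigma v}\}$.

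\paragraph{Step 3: Bound $B$ via SoS Hölder and hypercontractivity.} Using Fact \ref{fact:sos-holder} (applied to the $0/1$-valued indicators $g_i=1-w_iw'_i$), together with the hypercontractivity constraint of $\calA$ at $r=k/2$,
\begin{equation*}
\calA\;\sststile{2k}{w,x',v}\; \Bigl\{ B^{k/2} \le (2\epsilon)^{k/2-1}\,\tfrac{1}{n}\textstyle\sum_i\Iprod{v,x'_i}^{k} \le (2\epsilon)^{k/2-1}\lambda_k^{k/2}\Iprod{v,\Sigma' v}^{k/2} \Bigr\}.
\end{equation*}

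\paragraph{Step 4: Cancel a power to get a linear-in-$\Iprod{v,\Sigma'v}$ bound on $B$.} Apply the SoS cancellation lemma (Lemma \ref{lem:cancellation-sos}) to the inequality from Step 3 to derive $\calA\sststile{2k}{}\{B\le \eta\,\Iprod{v,\Sigma' v}\}$ with $\eta=(2\epsilon)^{1-2/k}\lambda_k$. Combining with Step 2 and Step 3,
\begin{equation*}
\calA\;\sststile{2k}{}\;\Bigl\{(1-\eta)\Iprod{v,\Sigma'v}\le \Iprod{v,\hat\Sigma v}\Bigr\}.
\end{equation*}

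\paragraph{Step 5: Raise to the $k/2$-th power.} Multiply both sides by $\Iprod{v,\Sigma'v}^{k/2-1}$ and apply SoS Hölder in the split form $\Iprod{v,\hat\Sigma v}\cdot \Iprod{v,\Sigma'v}^{k/2-1}\le \tfrac{2}{k}\Iprod{v,\hat\Sigma v}^{k/2}+\tfrac{k-2}{k}\Iprod{v,\Sigma'v}^{k/2}$ (Young in SoS). Collecting the $\Iprod{v,\Sigma'v}^{k/2}$ terms and absorbing the $\eta$-error yields
\begin{equation*}
\calA\;\sststile{2k}{w,x',v}\;\Bigl\{\Iprod{v,\Sigma'v}^{k/2}\le (1+O(\delta^{k/2}))\Iprod{v,\hat\Sigma v}^{k/2}\Bigr\},
\end{equation*}
where $\delta=O(\lambda_k\epsilon^{1-1/k})$, matching the conclusion of the lemma.

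\paragraph{Main obstacle.} Steps 4 and 5 are the delicate part: a naive use of the SoS Almost Triangle Inequality (Fact \ref{fact:sos-almost-triangle}) on $(A+B)^{k/2}$ introduces a spurious $2^{k/2-1}$ factor on the \emph{leading} term $\Iprod{v,\hat\Sigma v}^{k/2}$ and destroys the desired $(1+o(1))$ multiplicative form. Avoiding this requires the interplay of (i) the cancellation lemma to convert the $k/2$-power bound on $B$ into a linear bound in $\Iprod{v,\Sigma'v}$, and (ii) a Young/Hölder-style SoS inequality to re-raise the scalar bound to the $k/2$-th power while keeping the constant on $\Iprod{v,\hat\Sigma v}^{k/2}$ equal to $1+o(1)$. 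Everything else—the decomposition, the intersection simplification $w_iw'_i x'_i=w_iw'_i x_i^*$, and the application of hypercontractivity—is routine by the same manipulations used in Lemma \ref{lem:key_sos_lemma}.
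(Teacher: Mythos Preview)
Your Steps 1--3 are fine and mirror the manipulations already used in the proof of Lemma~\ref{lem:key_sos_lemma}. The gap is in Step~4: Fact~\ref{lem:cancellation-sos} does \emph{not} let you pass from $B^{k/2}\le \eta^{k/2}\Iprod{v,\Sigma'v}^{k/2}$ to $B\le\eta\,\Iprod{v,\Sigma'v}$. That lemma handles the self-bounding form $a^{2t}\le C\,a^{t}$ (the same indeterminate on both sides in different powers) and concludes $a^{2t}\le C^{2}$; it does not extract a $k/2$-th root of an inequality between two \emph{different} SoS polynomials $B$ and $\Iprod{v,\Sigma'v}$. In general there is no low-degree SoS proof of $B\le cD$ from $\{B\ge 0,\ D\ge 0,\ B^{k/2}\le c^{k/2}D^{k/2}\}$, and you have not supplied one. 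Since Step~5 relies on the linear bound $(1-\eta)\Iprod{v,\Sigma'v}\le\Iprod{v,\hat\Sigma v}$ produced by Step~4, the argument stalls exactly at the point you flag as the main obstacle. (Even granting Step~4, your Step~5 would yield a multiplicative error $1+O(\lambda_k\,\epsilon^{1-2/k})$, not the $1+O(\delta^{k/2})$ stated in the lemma.)

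The paper avoids this root-taking issue by bounding the \emph{difference} rather than doing a one-sided split: it controls $\Iprod{v,(\Sigma'-\hat\Sigma)v}^{k}$ directly via SoS H\"older, the Almost Triangle Inequality, and certifiable hypercontractivity of both the $x'_i$ and the $x_i^*$, obtaining
\[
\calA \sststile{}{}\Bigl\{ \Iprod{v,(\Sigma'-\hat\Sigma)v}^{k}\ \le\ \epsilon^{\,k-2}\,\lambda_k^{2k}\,2^{k+2}\,\Iprod{v,(\Sigma'+\hat\Sigma)v}^{k}\Bigr\}.
\]
It then invokes an external SoS rearrangement lemma (Lemma~A.4 of Kothari--Steurer) that converts an inequality of the shape $(p-q)^{k}\le c\,(p+q)^{k}$ into $p^{k/2}\le (1+O(\cdots))\,q^{k/2}$ within SoS. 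Your decomposition $\Iprod{v,\Sigma'v}=A+B$ discards the symmetric $p\pm q$ structure that this lemma exploits; to complete your route you would need a comparable SoS tool, and Fact~\ref{lem:cancellation-sos} is not it.
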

\begin{proof}
Our proof closely follows Lemma 4.5 in \cite{kothari2017outlier}. 
For $i \in [n]$, let $z_i$ be an indicator variable such $z_i (x^*_i - x'_i) =0$. Observe, there exists an assignment to $z_i$ such that $\sum_{i\in [n]} z_i  = (1-\epsilon)n$, since at most $\epsilon n$ points were corrupted. Further, $z^2_i = z_i$ and $\frac{1}{n} z_i = \epsilon$. Then, using the \ref{eq:sos-substitution},
\begin{equation}
\label{eqn:cov_identifiability}
\begin{split}
    \calA \sststile{2k}{w,x'} \Biggl\{ \Iprod{v, \Paren{\expecf{}{ x'_i (x_i')^{\top}} -\hat\Sigma} v}^k & = \Iprod{v, \expecf{}{ (1 + z_i -z_i) \Paren{x'_i (x_i')^{\top} - x^*_i (x_i^*)^{\top}} } v}^k   \\ 
    & =  \expecf{}{ (1  -z_i) \Iprod{v, \Paren{x'_i (x_i')^{\top} - x^*_i (x_i^*)^{\top}}, v }}^{k} \\
    & \leq \epsilon^{k-2} \cdot  \expecf{}{ \Paren{\Iprod{v, x'_i}^2 - \Iprod{v, x^*_i}^2}^{k/2}}^2\\
    & \leq \epsilon^{k-2} \expecf{}{ 2^{k/2}\Iprod{v, x'_i}^k + 2^{k/2}\Iprod{v, x^*_i}^k }^2  \\
    & \leq \epsilon^{k-2} 2^k \Paren{ c^k_k~ \expecf{}{ \Iprod{v, x'_i}^2 }^{k/2} + \lambda_k^k ~\expecf{}{ \Iprod{v, x^*_i}^2 }^{k/2} }^2  \hspace{0.1in} \Biggr \}
\end{split}
\end{equation}
where the first inequality follows from applying the SoS Hölder's Inequality, the second follows from the SoS Almost Triangle Inequality and the third inequality follows from certifiable hypercontractivity of the SoS variables and the uncorrupted  samples. Using the SoS Almost Triangle Inequality again, we have 
\begin{equation}
\label{eqn:cov_almost_triangle}
   \calA \sststile{}{} \Set{\Paren{ c^k_k~ \expecf{}{ \Iprod{v, x'_i}^2 }^{k/2} + \lambda_k^k ~\expecf{}{ \Iprod{v, x^*_i}^2 }^{k/2} }^2 \leq \lambda_k^{2k} ~2^2~ \Paren{ \Iprod{v, \expecf{}{x'_i (x'_i)^\top  v}}^k + \Iprod{v, \hat\Sigma v}^k }    }  
\end{equation}
Combining Equations \ref{eqn:cov_identifiability}, \ref{eqn:cov_almost_triangle}, we obtain 
\begin{equation}
    \calA  \sststile{}{} \Biggl\{ \Iprod{v, \Paren{\expecf{}{ x'_i (x_i')^{\top}} -\hat\Sigma} v}^k  \leq \epsilon^{k-2} ~\lambda_k^{2k} ~2^{k+2} \Iprod{v, \Paren{\expecf{}{x'_i (x'_i)^\top }+ \hat\Sigma}  v}^k  \Biggr\}
\end{equation}
Using Lemma A.4 from \cite{kothari2017outlier}, rearranging and setting $k = k/2$ yields the claim. 
\end{proof}

\begin{lemma}[Rounding]
\label{lem:rounding}
Consider the hypothesis of Theorem \ref{thm:optimal_efficient_robust_regression}. Let $\hat{\Theta} = \arg\min_{\Theta} \frac{1}{n}\sum_{i\in [n]} (y^*_i - \Iprod{x^*_i, \Theta} )^2$ be the empirical loss minimizer on the uncorrupted samples.
Then, 
\begin{equation*}
    \Norm{\hat\Sigma^{1/2}\Paren{\hat\Theta- \pE_{\tzeta}[\Theta] }  }_2 \leq  \bigO{\epsilon^{1-\frac{1}{k}}~\lambda_k } \Paren{ \pE_{\tzeta}\left[ \expecf{}{\Paren{ y_i'
    - \Iprod{x_i',\Theta }}^2 }^{k} \right]^{\frac{1}{2k}} + \expecf{}{  \Paren{ y_i^* - \Iprod{x_i^*,\hat\Theta }}^2 }^{\frac{1}{2}} }
\end{equation*}
\end{lemma}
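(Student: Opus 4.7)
The plan is to lift the SoS inequality from Lemma~\ref{lem:key_sos_lemma}, simplified using Lemma~\ref{lem:robust_sample_covariance}, to a scalar inequality on $A = \pE_{\tzeta}\bigl[\Norm{\hat\Sigma^{1/2}(\hat\Theta-\Theta)}_2^{2k}\bigr]$ by taking pseudo-expectations, solve the resulting quadratic-like inequality in $A$, and then transfer the bound on $A$ into a bound on $\Norm{\hat\Sigma^{1/2}(\hat\Theta - \pE_\tzeta[\Theta])}_2$ via Cauchy--Schwarz for pseudo-distributions.

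First, I would invoke Lemma~\ref{lem:robust_sample_covariance} inside SoS to replace $\Norm{\expecf{}{x_i'(x_i')^\top}^{1/2}(\hat\Theta-\Theta)}_2^k$ in the RHS of Lemma~\ref{lem:key_sos_lemma} by $(1+O(\delta^{k/2}))\Norm{\hat\Sigma^{1/2}(\hat\Theta-\Theta)}_2^k$, yielding a degree-$O(k)$ SoS derivation of an inequality of the form $\Norm{\hat\Sigma^{1/2}(\hat\Theta-\Theta)}_2^{2k} \leq T_1 + T_2 + T_3$, where $T_1$ and $T_3$ are multiples of $\Norm{\hat\Sigma^{1/2}(\hat\Theta-\Theta)}_2^k$ with coefficients of order $\epsilon^{k-1}\lambda_k^k$ times $\sigma^{k/2}$ and $\expecf{}{(y_i^* - \Iprod{x_i^*,\hat\Theta})^2}^{k/2}$ respectively, and $T_2 = O(\epsilon^{k-2}\lambda_k^{2k})\cdot \Norm{\hat\Sigma^{1/2}(\hat\Theta-\Theta)}_2^{2k}$. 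Taking $\pE_\tzeta$ on both sides (using Fact~\ref{fact:sos-soundness}) and applying Cauchy--Schwarz for pseudo-distributions (Fact~\ref{fact:pseudo-expectation-cauchy-schwarz}) to decouple $\pE_\tzeta[\sigma^{k/2}\Norm{\hat\Sigma^{1/2}(\hat\Theta-\Theta)}_2^k] \leq \pE_\tzeta[\sigma^k]^{1/2}\, A^{1/2}$, together with pseudo-Hölder (Fact~\ref{fact:pseudo-expectation-holder}) to bound $\pE_\tzeta[\Norm{\hat\Sigma^{1/2}(\hat\Theta-\Theta)}_2^k] \leq A^{1/2}$, converts the SoS bound into a scalar inequality $A \leq \alpha A^{1/2} + \beta A$ where $\alpha = O(\epsilon^{k-1}\lambda_k^k)\bigl(\pE_\tzeta[\sigma^k]^{1/2} + \expecf{}{(y_i^*-\Iprod{x_i^*,\hat\Theta})^2}^{k/2}\bigr)$ and $\beta = O(\epsilon^{k-2}\lambda_k^{2k})$. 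For $\epsilon$ small enough that $\beta \leq 1/2$, this yields $A^{1/2} \leq 4\alpha$ by standard real-world algebra.

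Finally, I would pass from $A$ to the target quantity $\Norm{\hat\Sigma^{1/2}(\hat\Theta - \pE_\tzeta[\Theta])}_2^{2k}$. By linearity of pseudo-expectation we have $\hat\Sigma^{1/2}(\hat\Theta - \pE_\tzeta[\Theta]) = \pE_\tzeta[\hat\Sigma^{1/2}(\hat\Theta-\Theta)]$; applying Fact~\ref{fact:pseudo-expectation-cauchy-schwarz} coordinate-wise gives $\Norm{\pE_\tzeta[v]}_2^2 = \sum_j (\pE_\tzeta[v_j])^2 \leq \sum_j \pE_\tzeta[v_j^2] = \pE_\tzeta[\Norm{v}_2^2]$, and a further application of pseudo-Hölder lifts this to $\Norm{\pE_\tzeta[v]}_2^{2k} \leq \pE_\tzeta[\Norm{v}_2^{2k}] = A$. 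Taking the $(1/2k)$-th power of the resulting bound on $A$ and using $(a+b)^{1/k} \leq 2(a^{1/k}+b^{1/k})$ to split the two terms inside $\alpha$ yields precisely the claimed inequality. The main subtlety I anticipate is in the $\beta$-absorption step: one must check that the coefficient $O(\epsilon^{k-2}\lambda_k^{2k})$ is indeed strictly less than $1/2$, which follows from the implicit smallness assumption on $\epsilon$ in Theorem~\ref{thm:optimal_efficient_robust_regression}; the rest of the argument is routine manipulation of pseudo-expectations combined with standard SoS facts.
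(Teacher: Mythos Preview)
Your proposal is correct and follows the same overall architecture as the paper: combine Lemma~\ref{lem:key_sos_lemma} with Lemma~\ref{lem:robust_sample_covariance}, absorb the term proportional to $\Norm{\hat\Sigma^{1/2}(\hat\Theta-\Theta)}_2^{2k}$ into the left side, pass to pseudo-expectations, and use pseudo-H\"older to bound $\Norm{\hat\Sigma^{1/2}(\hat\Theta-\pE_\tzeta[\Theta])}_2^{2k}$ by $A$.

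The one genuine difference is in \emph{where} the cancellation happens. The paper keeps everything in SoS: after absorbing the $\beta$-term it invokes Fact~\ref{lem:cancellation-sos} (SoS cancellation, with $a=\Norm{\hat\Sigma^{1/2}(\hat\Theta-\Theta)}_2^2$ and $C$ a polynomial involving $\sigma^{k/2}$) to deduce $\Norm{\hat\Sigma^{1/2}(\hat\Theta-\Theta)}_2^{2k}\leq C^2$ \emph{inside} the proof system, and only then takes $\pE_\tzeta$. You instead take $\pE_\tzeta$ immediately, use pseudo-Cauchy--Schwarz to decouple $\pE_\tzeta[\sigma^{k/2}\Norm{\cdot}^k]$, and perform the cancellation $A\leq \alpha A^{1/2}+\beta A\Rightarrow A^{1/2}\leq O(\alpha)$ in ordinary real arithmetic. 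Your route is slightly more elementary in that it avoids the SoS cancellation lemma, at the cost of one extra pseudo-Cauchy--Schwarz step; the paper's route keeps the argument uniformly inside SoS. Either order yields the same bound up to constants, and your handling of the smallness condition $\beta=O(\epsilon^{k-2}\lambda_k^{2k})<1/2$ is exactly the right concern to flag.
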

\begin{proof}
Observe, combining Lemma \ref{lem:key_sos_lemma} and Lemma \ref{lem:robust_sample_covariance}, we obtain 
\begin{equation}
\label{eqn:final_sos_bound}
\begin{split}
    \calA\sststile{}{} \Biggl\{  \Norm{ \hat\Sigma^{1/2}\Paren{\hat\Theta - \Theta} }^{2k}_2 & \leq \bigO{ \frac{2^{3k}\epsilon^{k-1} ~\lambda_k^k  }{1+2^{3k} (2\epsilon)^{k-2}\lambda_k^{2k}} }\Norm{ \hat\Sigma^{1/2}\Paren{\hat\Theta - \Theta} }^{k}_2  \\ 
    & \hspace{0.2in} \Paren{ \expecf{}{\Paren{ y_i'
    - \Iprod{x_i',\Theta }}^2 }^{\frac{k}{2}} + \expecf{}{\Paren{ y_i^* - \Iprod{x_i^*,\hat\Theta }}^2 }^{\frac{k}{2}} } \Biggr\}
\end{split}
\end{equation}
Using Cancellation within SoS (Fact \ref{lem:cancellation-sos}) along with the SoS Almost Triangle Inequality, we can conclude 
\begin{equation}
\label{eqn:final_sos_bound_cancellation}
\begin{split}
    \calA\sststile{}{} \Biggl\{  \Norm{ \hat\Sigma^{1/2}\Paren{\hat\Theta - \Theta} }^{2k}_2 & \leq \bigO{2^{3k}~\epsilon^{k-1} ~\lambda_k^k  }^{2} \\ 
    & \hspace{0.2in} \Paren{ \expecf{}{\Paren{ y_i'
    - \Iprod{x_i',\Theta }}^2 }^{k} + \expecf{}{\Paren{ y_i^* - \Iprod{x_i^*,\hat\Theta }}^2 }^{k} } \Biggr\}
\end{split}
\end{equation}

Recall, $\tzeta$ is a degree-$\bigO{k}$ pseudo-expectation satisfying $\calA$. Therefore, it follows from Fact \ref{fact:sos-soundness} along with Equation \ref{eqn:final_sos_bound},
\begin{equation}
\label{eqn:taking_pseudoexp}
\begin{split}
    \pE_{\tzeta }\left[\Norm{ \hat\Sigma^{\frac{1}{2}}\Paren{\hat\Theta - \Theta} }^{2k}_2\right]
    & \leq \bigO{2^{4k}~\epsilon^{k-1} ~\lambda_k^k }^{2}  \\
    &\hspace{0.2in} 
     \Paren{  \pE_{\tzeta}\left[\expecf{}{\Paren{ y_i'
    - \Iprod{x_i',\Theta }}^2 }^{k} \right]  +  \expecf{}{\Paren{ y_i^* - \Iprod{x_i^*,\hat\Theta }}^2 }^{k}  }
\end{split}
\end{equation}
Further, using Fact \ref{fact:pseudo-expectation-holder}, we have  $ \Norm{ \hat\Sigma^{\frac{1}{2}}\Paren{\hat\Theta - \pE_{\tzeta} \Theta} }^{2k}_2 \leq \pE_{\tzeta }\left[\Norm{ \hat\Sigma^{\frac{1}{2}}\Paren{\hat\Theta - \Theta} }^{2k}_2\right]$. Substituting above and taking the $(1/2k)$-th root, 
\begin{equation}
\begin{split}
    \Norm{ \hat\Sigma^{\frac{1}{2}}\Paren{\hat\Theta - \pE_{\tzeta} [\Theta]} }_2 & \leq \bigO{\epsilon^{1-\frac{1}{k}}~\lambda_k  } \Paren{  \pE_{\tzeta}\left[\expecf{}{\Paren{ y_i'
    - \Iprod{x_i',\Theta }}^2 }^{k} \right]  +  \expecf{}{\Paren{ y_i^* - \Iprod{x_i^*,\hat\Theta }}^2 }^{k}  }^{1/2k} \\
    & \leq \bigO{\epsilon^{1-\frac{1}{k}}~\lambda_k } \Paren{ \pE_{\tzeta}\left[ \expecf{}{\Paren{ y_i'
    - \Iprod{x_i',\Theta }}^2 }^{k} \right]^{\frac{1}{2k}} + \expecf{}{  \Paren{ y_i^* - \Iprod{x_i^*,\hat\Theta }}^2 }^{\frac{1}{2}} }
\end{split}
\end{equation}
which concludes the proof.  
\end{proof}

\begin{lemma}[Bounding Optimization and Generalization Error]
\label{lem:bounding_opt_and_gen_error}
Under the hypothesis of Theorem \ref{thm:optimal_efficient_robust_regression}, 
\begin{enumerate}
    \item $\pE_{\tzeta}\left[ \expecf{}{\Paren{ y_i'
    - \Iprod{x_i',\Theta }}^2 }^{k} \right]^{\frac{1}{2k}} \leq  \expecf{}{ y_i^* - \Iprod{x_i^*, \hat\Theta}^2 }^{\frac{1}{2}} $, and
    \item For any $\zeta >0$, if $n \geq n_0$, such that $n_0 = \Omega\left(\max\{c_4 d/\zeta^2, d^{\mathcal{O}(k)}\}\right)$, with probability at least $1-1/\poly(d)$,  $\expecf{}{ y_i^* - \Iprod{x_i^*, \hat\Theta}^2 }^{\frac{1}{2}} \leq \Paren{1+ \zeta} \expecf{x,y \sim \calD}{  y - \Iprod{x, \Theta^*}^2 }^{\frac{1}{2}}$.
\end{enumerate}
\end{lemma}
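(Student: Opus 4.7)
The plan is to handle the two parts separately: Part 1 via an SDP feasibility-and-optimality argument inside the sum-of-squares system, and Part 2 via a classical concentration bound for empirical risk minimization. For Part 1, I would first adapt the argument of Lemma \ref{lem:soundness} to verify that the assignment $w_i = \indic{i\text{ uncorrupted}}$, $(x_i', y_i') = (x_i^*, y_i^*)$, and $\Theta = \hat\Theta$ is feasible for $\calA_{\epsilon,\lambda_k}$. The gradient constraint is precisely the OLS optimality condition for $\hat\Theta$ on the uncorrupted samples; certifiable hypercontractivity of the $\hat\Theta$-residual follows from Fact \ref{fact:certifiable_hypercontractive_under_affine_transformations} and Fact \ref{fact:sampling_preserves_hypercontract}; and the certifiable negatively correlated moments constraint transfers by re-running the operator-norm argument from the proof of Lemma \ref{lem:soundness} with $\hat\Theta$ in place of $\Theta^*$. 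At this witness the SDP objective evaluates to at most $\expecf{}{(y_i^* - \Iprod{x_i^*, \hat\Theta})^2}^k$, so by optimality of $\tzeta$,
\begin{equation*}
    \pE_\tzeta\left[\Paren{\tfrac{1}{n}\sum_i w_i(y_i' - \Iprod{x_i',\Theta})^2}^k\right] \leq \expecf{}{(y_i^* - \Iprod{x_i^*, \hat\Theta})^2}^k.
\end{equation*}

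To pass from the weighted objective on the left to the unweighted quantity that appears in Part 1, I would derive the auxiliary SoS inequality
\begin{equation*}
\calA \sststile{2k}{w,x',y',\Theta} \Set{\Paren{\tfrac{1}{n}\sum_i (y_i' - \Iprod{x_i',\Theta})^2}^k \leq \Paren{1 - \bigO{\sqrt{\epsilon}\,\lambda_2}}^{-k}\Paren{\tfrac{1}{n}\sum_i w_i(y_i' - \Iprod{x_i',\Theta})^2}^k}
\end{equation*}
by expanding the residual $\tfrac{1}{n}\sum_i (1-w_i)(y_i' - \Iprod{x_i',\Theta})^2$, applying SoS Cauchy--Schwarz (Fact \ref{fact:pseudo-expectation-cauchy-schwarz}) together with the identity $(1-w_i)^2 = 1 - w_i$ (so that $\tfrac{1}{n}\sum_i (1-w_i)^2 = \epsilon$), and invoking the $r=2$ noise-hypercontractivity constraint to bound the empirical fourth moment by the square of the empirical second moment. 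Rearranging and raising to the $k$-th power yields the displayed inequality. Taking pseudo-expectations, combining with the previous display, and extracting the $2k$-th root gives Part 1 up to a $\Paren{1 - \bigO{\sqrt\epsilon\lambda_2}}^{-1/2} = \bigO{1}$ prefactor that is absorbed into the final constant.

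For Part 2, by OLS optimality of $\hat\Theta$ on the uncorrupted samples, $\tfrac{1}{n}\sum_i (y_i^* - \Iprod{x_i^*,\hat\Theta})^2 \leq \tfrac{1}{n}\sum_i (y_i^* - \Iprod{x_i^*,\Theta^*})^2$. The right-hand side is an i.i.d. average of the scalar random variable $(y - \Iprod{x,\Theta^*})^2$, whose mean is $\err_\calD(\Theta^*)$ and whose variance is at most $\bigO{c_4 \err_\calD(\Theta^*)^2}$ by $(c_4,4)$-hypercontractivity of the noise. Chebyshev's inequality then yields $\tfrac{1}{n}\sum_i (y_i^* - \Iprod{x_i^*,\Theta^*})^2 \leq (1+\zeta)^2\err_\calD(\Theta^*)$ with probability at least $1 - 1/\poly(d)$ once $n \geq \Omega(c_4 d/\zeta^2)$, and taking square roots completes Part 2.

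The main obstacle is verifying the $\hat\Theta$-witness against the certifiable negatively correlated moments constraint, since $\hat\Theta$ depends on the very samples that are used to certify the constraint. I would address this by observing that $(x, y - \Iprod{x,\hat\Theta})$ is an affine shift of $(x, y - \Iprod{x,\Theta^*})$ in the noise coordinate, so Fact \ref{fact:certifiable_hypercontractive_under_affine_transformations} preserves certifiable hypercontractivity of the joint distribution; the operator-norm sampling argument in the proof of Lemma \ref{lem:soundness}, via Lemma \ref{lem:hypercontractive_lowner_sampling}, then goes through with $\hat\Theta$ in place of $\Theta^*$. The assumed sample size $n \geq (d\log d)^{\Omega(k)}$ in Theorem \ref{thm:optimal_efficient_robust_regression} is large enough to absorb the sample dependence of $\hat\Theta$ via a standard union-bound argument.
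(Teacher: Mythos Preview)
Your approach is essentially the same as the paper's: for Part 1 the paper also exhibits the point-mass witness $(w, x^*, y^*, \hat\Theta)$, invokes Lemma~\ref{lem:soundness} for feasibility, and uses optimality of $\tzeta$; for Part 2 it uses exactly your OLS-optimality-plus-Chebyshev argument. The one difference is that the paper writes the optimality inequality directly with the unweighted objective (treating the $w_i$ in the algorithm's objective as immaterial) and does not separately discuss feasibility at $\hat\Theta$ versus $\Theta^*$, so your weighted-to-unweighted SoS bridge and your remarks on the $\hat\Theta$-witness are extra care on points the paper glosses over rather than a different strategy.
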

\begin{proof}
We exhibit a degree-$\bigO{k}$ pseudo-distribution $\hat\zeta$ such that it is supported on a point mass and attains objective value at most $\expecf{}{ y_i^* - \Iprod{x_i^*, \hat\Theta}^2 }^{\frac{1}{2}}$. Since our objective function minimizes over all degree-$\bigO{k}$ pseudo-distributions, the resulting objective value w.r.t. $\tzeta$ can only be better. Let $\hat\zeta$ be the pseudo-distribution supported on $(w, x^*, y^*, \hat\Theta)$ such that $w_i =1$ if $x_i = x_i^*$ (i.e. the $i$-th sample is not corrupted.) It follows from $n \geq n_0$ and  Lemma \ref{lem:soundness} that this assignment satisfies the constraint system $\calA_{\epsilon, \lambda_k}$. Then, the objective value satisfies
\begin{equation}
    \pE_{\tzeta}\left[ \expecf{}{\Paren{ y_i'
    - \Iprod{x_i',\Theta }}^2 }^{k} \right] \leq \pE_{\hat\zeta}\left[ \expecf{}{\Paren{ y_i'
    - \Iprod{x_i',\Theta }}^2 }^{k}  \right] = \expecf{}{ \Paren{y_i^* - \Iprod{x_i^*, \hat\Theta }}^2}^k
\end{equation}
Taking $(1/2k)$-th roots yields the first claim. 

To bound the second claim, let $\calU$ be the uniform distribution on the uncorrupted samples, $x_i^*, y_i^*$. Observe, by optimality of $\hat\Theta$ on the uncorrupted samples, $\err_{\calU}(\hat\Theta) \leq \err_{\calU}(\Theta^*)$. Consider the random variable $z_i = \Paren{y_i^* - \Iprod{x_i^*, \Theta^*} }^2 - \expecf{x,y\sim\calD}{ \Paren{y - \Iprod{x, \Theta^*}}^2 }$. Since $\expecf{}{z_i} = 0$, we apply Chebyschev's inequality to obtain
\begin{equation*}
\begin{split}
    \Pr\left[ \frac{1}{n} \sum_{i\in [n]} z_i \geq  \zeta \right]   = \frac{\expecf{}{ z_1^2}}{\zeta^2 n}
    & \leq \frac{ \expecf{}{ \Paren{y - \Iprod{x, \Theta }}^4 } }{\zeta^2 n} \\
    & \leq c_4 \frac{\err_{\calD}(\Theta^*)^2 }{n \zeta^2 }
\end{split}
\end{equation*}
Therefore, with probability at least $1-\delta$, 
\begin{equation*}
    \err_{\calU}(\hat\Theta) \leq  \Paren{1 + \sqrt{\frac{c_4 }{n \delta}} }\err_{\calD}(\Theta^*)
\end{equation*}
Therefore, setting $n = \Omega(c_4 d /\zeta^2)$, it follows that with probability $1-1/\poly(d)$, for any $\zeta > 0$, 
\begin{equation*}
    \err_{\calU}(\hat\Theta) \leq  \Paren{1 + \zeta }\err_{\calD}(\Theta^*)
\end{equation*}
Taking square-roots concludes the proof. 
\end{proof}

\begin{proof}[Proof of Theorem \ref{thm:optimal_efficient_robust_regression}]

Given $n \geq n_0$ samples, it follows from Lemma \ref{lem:soundness}, that with probability $1-1/\poly(d)$, the constraint system $\calA_{\epsilon, \lambda_k}$ is feasible. Let $\xi_1$ be the event that the system is feasible and condition on it. Then, it follows from Lemma \ref{lem:rounding} and Lemma \ref{lem:bounding_opt_and_gen_error}, with probability $1-1/\poly(d)$, 
\begin{equation}
\label{eqn:sos_loss_upperbound}
    \Norm{\hat\Sigma^{1/2} \Paren{ \pE_{\tzeta}[\Theta]  - \hat\Theta } }_2 \leq \bigO{\lambda_k~\epsilon^{1-1/k} } \err_{\calD}(\Theta^*)^{1/2}
\end{equation}
Let $\xi_2$ be the event that \eqref{eqn:sos_loss_upperbound} holds and condition on it. It then follows from Fact \ref{fact:convergnce_of_empirical_moments}, with probability $1-1/\poly(d)$, 
\begin{equation}
\label{eqn:sos_loss_lowerbound}
    \Norm{ \Paren{\Sigma^* }^{1/2}\Paren{  \pE_{\tzeta}[\Theta]  - \hat\Theta } }_2 \leq \bigO{\lambda_k~\epsilon^{1-1/k} } \err_{\calD}(\Theta^*)^{1/2}
\end{equation}
Let $\xi_2$ be the event that \eqref{eqn:sos_loss_lowerbound} holds and condition on it. It remains to relate the hyperplanes $\hat\Theta$ and $\Theta^*$. By reverse triangle inequality, 

\begin{equation*}
    \Norm{ \Paren{\Sigma^* }^{1/2}\Paren{  \pE_{\tzeta}[\Theta]  - \Theta^* } }_2 - \Norm{ \Paren{\Sigma^* }^{1/2}\Paren{ \Theta^*  - \hat\Theta } }_2 \leq \Norm{ \Paren{\Sigma^* }^{1/2}\Paren{  \pE_{\tzeta}[\Theta]  - \hat\Theta } }_2
\end{equation*}
Using normal equations, we have $\hat\Theta = \hat\Sigma^{-1} \expecf{}{x_i y_i}$ and $\Theta^* = (\Sigma^*)^{-1}\expecf{}{xy}$. Since $\hat\Sigma \preceq (1+0.01) \Sigma^*$, 

\begin{equation}
\begin{split}
    \Norm{ \Paren{\Sigma^* }^{1/2}\Paren{ \Theta^*  - \hat\Theta } }_2 & = \Norm{ \Paren{\Sigma^* }^{1/2}\Paren{ \hat\Sigma^{-1}\hat\Sigma \Theta^*  - \hat\Sigma^{-1} \expecf{}{x_i y_i} } }_2 \\
    & = \Norm{ \Paren{\Sigma^* }^{1/2} \hat\Sigma^{-1}\Paren{  \expecf{}{x_i \Paren{y_i - x_i^\top \Theta^*}} } }_2 \\
    & \leq 1.01 \Norm{   \expecf{}{\Paren{\Sigma^* }^{-1/2} x_i \Paren{y_i - x_i^\top \Theta^*}}  }_2
\end{split}
\end{equation}
By Jensen's inequality
\begin{equation*}
    \expecf{}{\Norm{\frac{1}{n} \sum_{i \in [n]}{\Paren{\Sigma^* }^{-1/2} x_i \Paren{y_i - x_i^\top \Theta^*}}  }_2} \leq \sqrt{ \expecf{}{\Norm{\frac{1}{n} \sum_{i \in [n]}{\Paren{\Sigma^* }^{-1/2} x_i \Paren{y_i - x_i^\top \Theta^*}}  }^2_2}}
\end{equation*}
Let $z_i = \sum_{i \in [n]}{\Paren{\Sigma^* }^{-1/2} x_i \Paren{y_i - x_i^\top \Theta^*}}$. Let $(\sum_{i\in [n]} z_i)_1$ denote the first coordinate of the vector. We bound the expectation of this coordinate as follows: 

\begin{equation}
\label{eqn:coordinate_wise_bound}
\begin{split}
    \expecf{}{(\sum_{i\in [n]} z_i)^2_1} & = \frac{1}{n^2} \expecf{}{\sum_{i,i'\in [n]} \Paren{(\Sigma^*)^{-1} x_i x_{i'}}_1 \Paren{y_i - x_i^{\top} \Theta^*} \Paren{y_{i'} - x_{i'}^{\top} \Theta^*}   } \\ 
    & = \frac{1}{n^2} \expecf{}{ \sum_{i\in[n]} \Paren{ (\Sigma^*)^{-1}x_i^2}_1 \Paren{y_{i} - x_{i}^{\top} \Theta^*}^2  }\\
    & =\frac{1}{n} \expecf{}{ (\Sigma^*)^{-1} (x)^2_1 \Paren{y - x^\top \Theta^*}}
\end{split}
\end{equation}
where the second equality follows from independence of the samples. Using negatively correlated moments, we have

\begin{equation*}
     \expecf{}{(\Sigma^*)^{-1} (x)^2_1 \Paren{y - x^\top \Theta^*}^2 } \leq  \expecf{}{ (\Sigma^*)^{-1} (x)^2_1} \expecf{}{ \Paren{y - x^\top \Theta^*}^2} 
\end{equation*}
Setting $v = (\Sigma^*)^{1/2}e_1$ and using Hypercontractivity of the covariates and the noise in the above equation, 
\begin{equation}
\label{eqn:4_hypercontractivity}
     \expecf{}{ \Sigma^{-1} (x)^2_1} \expecf{}{ \Paren{y - x^\top \Theta^*}^2}  \leq  \bigO{c_2^2 ~\eta^2_2} \err_{\calD}(\Theta^*)
\end{equation}
Summing over the coordinates, and combining \eqref{eqn:coordinate_wise_bound}, \eqref{eqn:4_hypercontractivity},  we obtain 

\begin{equation}
    \expecf{}{\Norm{\frac{1}{n} \sum_{i \in [n]}{\Paren{\Sigma^* }^{-1/2} x_i \Paren{y_i - x_i^\top \Theta^*}}  }_2} \leq  \bigO{c_2 \eta_2} \sqrt{\frac{d ~\err_{\calD}(\Theta^*) }{n}}
\end{equation}
Applying Chebyschev's Inequality , with probability $1-\delta$

\begin{equation*}
    \Norm{ (\Sigma^*)^{1/2} \Paren{\Theta^* - \E_{\tzeta}[\Theta]} }_2 \leq \bigO{\lambda_k ~\epsilon^{1-1/k} + c_2 ~\eta_2 \sqrt{\frac{d}{\delta n}} } \err_{\calD}(\Theta^*)^{1/2}
\end{equation*} 
Since $n \geq n_0$, we can simplify the above bound and obtain the claim. 

The running time of our algorithm is clearly dominated by computing a degree-$\bigO{k}$ pseudo-distribution satisfying $\calA_{\epsilon, \lambda_k}$. Given that our constraint system consists of $O(n)$ variables and $\poly(n)$ constraints, it follows from Fact \ref{fact:eff-pseudo-distribution} that the pseudo-distribution $\tzeta$ can be computed in $n^{\mathcal{O}(k)}$ time.  

\end{proof}

\section{Lower bounds}

In this section, we present information-theoretic lower bounds on the rate of convergence of parameter estimation and least-squares error for robust regression. Our constructions proceed by demonstrating two distributions over regression instances that are $\epsilon$-close in total variation distance and the marginal distribution over the covariates is hypercontractive, yet the true hyperplanes are $f(\epsilon)$-far in scaled $\ell_2$ distance. 

\subsection{True Linear Model}
Consider the setting where there exists an optimal hyperplane $\Theta^*$ that is used to generate the data, with the addition of independent noise added to each sample, i.e. 
\[ y = \inprod{x}{\Theta^*} + \omega, \]
where $\omega$ is independent of $x$. Further, we assume that covariates and noise are hypercontractive.  In this setting, Theorem \ref{thm:robust_identitifiability_independent} implies that we can recover a hyperplane close to $\Theta^*$ at a rate proportional to $\epsilon^{1-1/k}$. We show that this dependence is tight for $k=4$. We note that independent noise is a special case of the distribution having negatively correlated moments. 

\begin{theorem}[True Linear Model Lower Bound, Theorem \ref{thm:informal_independent_noise_lowerbound} restated]
\label{thm:lower_bound_independent_noise}
For any $\epsilon >0$, there exist two distributions $\calD_1, \calD_2$ over $\R^2 \times\R $ such that the marginal distribution over $\R^2$ has covariance $\Sigma$ and is $(c_k,k)$-hypercontractive yet $\Norm{\Sigma^{1/2}(\Theta_{1} - \Theta_{2} ) }_2 =\bigOmega{ \sqrt{c_k}~\sigma~\epsilon^{1-1/k}} $, where $\Theta_1 , \Theta_2$ be the optimal hyperplanes for $\calD_1 $ and $\calD_2$ respectively, $\sigma = \max( \err_{\calD_1}(\Theta_1),  \err_{\calD_2}(\Theta_2)) < 1/\epsilon^{1/k}$ and the noise $\omega$ is uniform over $[-\sigma, \sigma]$.
\end{theorem}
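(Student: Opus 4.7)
The plan is to exhibit two distributions sharing a common marginal on $x$ but with different optimal regressors, arranged so that their joint TV distance is exactly $\epsilon$ while the marginal is hypercontractive with an explicit constant. Fix a scale $R \asymp \epsilon^{-1/k}$ and let the shared marginal on $x = (x_1, x_2) \in \R^2$ be supported on $\{(0,0), (\pm R, 0)\}$ with masses $1-\epsilon$, $\epsilon/2$, $\epsilon/2$ (the second coordinate is trivial, only to place us in $\R^2$); then $\Sigma = \mathrm{diag}(\epsilon R^2, 0)$. Since $x$ is symmetric around zero, and for every $v$ we have $\mathbb{E}\Iprod{v,x}^{2r} = v_1^{2r}\epsilon R^{2r}$ and $(\mathbb{E}\Iprod{v,x}^2)^r = v_1^{2r}\epsilon^r R^{2r}$, Definition~\ref{def:hyp} collapses to $\epsilon^{1-r} \leq c_k^r$ for every $r \leq k/2$. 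The binding constraint at $r = k/2$ forces $c_k = \Theta(\epsilon^{-(k-2)/k})$; this is the regime in which the heavy tail of $x_1$ saturates the hypercontractivity budget.

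Next, define $y$ conditional on $x$ by $y = \Iprod{x,\Theta_i} + \omega$, with $\omega \sim \mathrm{Unif}[-\sigma,\sigma]$ independent of $x$, taking $\Theta_1 = 0$ and $\Theta_2 = \delta e_1$ with $\delta = 2\sigma/R$. Because the $x$-marginals agree, the TV distance reduces to $\mathbb{E}_x[\mathrm{TV}(\mathrm{Unif}[-\sigma,\sigma],\mathrm{Unif}[\delta x_1-\sigma,\delta x_1+\sigma])] = \mathbb{E}_x[\min(1,|\delta x_1|/(2\sigma))]$. By our choice of $\delta$, the conditional TV equals $1$ on the outlier event $\{x_1 = \pm R\}$ (the two uniform supports separate entirely since $\delta R = 2\sigma$) and equals $0$ on $\{x_1 = 0\}$, so the overall TV is exactly $\epsilon$. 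Independence of $\omega$ gives $\err_{\calD_i}(\Theta) = \mathbb{E}\omega^2 + \mathbb{E}\Iprod{x,\Theta - \Theta_i}^2$, identifying $\Theta_i$ as the population regressor for $\calD_i$ and yielding $\err_{\calD_i}(\Theta_i) = \sigma^2/3$, consistent (up to the absolute constants implicit in the theorem's use of ``$\sigma$'') with the stated definition of $\sigma$ as $\max_i \err_{\calD_i}(\Theta_i)$.

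Finally, a direct computation gives $\|\Sigma^{1/2}(\Theta_1-\Theta_2)\|_2^2 = \delta^2 \Sigma_{11} = (2\sigma/R)^2 \cdot \epsilon R^2 = 4\sigma^2\epsilon$, so the scaled parameter gap equals $2\sigma\sqrt{\epsilon}$. A brief exponent check collapses $\sqrt{c_k}\,\sigma\,\epsilon^{1-1/k}$ at $c_k \asymp \epsilon^{-(k-2)/k}$ to $\sigma\sqrt{\epsilon}$ --- the exponents $1-1/k$ and $-(k-2)/(2k)$ telescope to $1/2$ --- so the computed gap matches the target $\Omega$-bound up to absolute constants. A standard Le Cam argument then converts the $\epsilon$-TV closeness into the information-theoretic impossibility of parameter recovery to better than half this gap under $\epsilon$-contamination. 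The one delicate balancing act is the choice of $R$: any smaller and the TV budget is under-used; any larger and hypercontractivity inflates $c_k$ and weakens the target $\sqrt{c_k}\,\sigma\,\epsilon^{1-1/k}$. The scale $R \asymp \epsilon^{-1/k}$ is the unique choice at which the TV constraint and Definition~\ref{def:hyp} are saturated simultaneously, which is exactly what produces the tight rate, and the auxiliary hypothesis $\sigma < \epsilon^{-1/k}$ is precisely what keeps the noise scale below the outlier scale $R$, so that the outlier event is meaningful rather than washed out by noise.
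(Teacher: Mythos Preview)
Your proposal is correct and follows essentially the same approach as the paper: place a rare outlier at scale $\epsilon^{-1/k}$ so that hypercontractivity is saturated with $c_k \asymp \epsilon^{-(k-2)/k}$, pair it with uniform noise, and flip the regressor on the outlier event to get TV $\asymp \epsilon$ and parameter gap $\asymp \sigma\sqrt{\epsilon} = \sqrt{c_k}\,\sigma\,\epsilon^{1-1/k}$. Your version is slightly more stripped down (a discrete three-atom marginal with a degenerate second coordinate, versus the paper's continuous marginal with both coordinates non-trivial); the only cosmetic caveat is that your $\Sigma$ is singular, so $\Theta_i$ is non-unique, which you could avoid by making $x_2$ uniform on $[-1,1]$ without changing any of the computations that matter.
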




\begin{proof}
We construct a $2$-dimensional instance where the marginal distribution over covariates is identical for $\calD_1$ and $\calD_2$. The pdf is given as follows: for $q\in \{1,2\}$ on the first coordinate,  $x_1$,
\begin{equation*}
\begin{split}
\calD_q(x_1) = \begin{cases}
1/2 , ~~&  \text{if} ~~ x_1 \in [-1,1] \\
0  & \text{otherwise} ~~
\end{cases}
\end{split}
\end{equation*}
and on the second coordinate, $x_2$, 
\begin{equation*}
\begin{split}
\calD_q(x_2) = \begin{cases}
\epsilon/2, ~~&  \text{if} ~~ x_2 \in \{ -1/\epsilon^{1/k}, 1/\epsilon^{1/k} \} \\
\frac{1-\epsilon}{2\epsilon\sigma }  & \text{if} ~~ x_2 \in [-\epsilon\sigma , \epsilon\sigma ]\\
0 & \text{otherwise}
\end{cases}
\end{split}
\end{equation*}
Next, we set $\Theta_1 = (1,1)$, $\Theta_2 = (1,-1)$ and $\omega$ to be uniform over $[-\sigma, \sigma]$. Therefore, 
\begin{equation}
\label{eqn:conditional_dist}
\begin{split}
\calD_1(y \mid (x_1,x_2) ) &= x_1 + x_2 + \omega ~~~~\textrm{ and}\\
\calD_2(y \mid (x_1,x_2) ) &= x_1 - x_2 + \omega
\end{split}
\end{equation}
Observe, $\expecf{}{x^k_1} = \int^{1}_{-1} x^k/2 = 1/(k+1)$ and $\expecf{}{x^2_1} =\int^{1}_{-1} x^2/2 = 1/3$. Further, 
\begin{equation*}
\begin{split}
     \Exp[x_2^k] & = \frac{(1-\epsilon)}{\epsilon\sigma } \frac{(\epsilon\sigma)^{k+1}}{k+1} + \epsilon \cdot \Paren{ \frac{1}{\epsilon^{1/k}}}^{k} = 1 + \frac{(1-\epsilon)}{(k+1)} (\epsilon\sigma)^k \\
     \Exp[x_2^2] & = \frac{(1-\epsilon)}{3\epsilon\sigma } (\epsilon\sigma)^3 + \epsilon \cdot \Paren{ \frac{1}{\epsilon^{1/k}} }^2 = \epsilon^{1-2/k} + \frac{1-\epsilon}{3}(\epsilon \sigma)^2
\end{split}
\end{equation*}
Observe,  $\Exp[x_2^k] \leq  (1/(c\epsilon^{k/2 - 1})) ~ \Exp[x_2^2]^{k/2} $, for a fixed constant $c$. Then, for any unit vector $v$, 
\begin{equation*}
\begin{split}
    \expecf{}{\Iprod{x, v}^k} \leq \expecf{}{ (2 x_1 v_1)^k + (2 x_2 v_2)^k } & \leq  c^{k/2}_k~ \Paren{ \expecf{}{(x_1 v)^2}^{k/2}  + \expecf{}{(x_2 v)^2}^{k/2} }\\
    & \leq c^{k/2}_k~ \expecf{}{\Iprod{x, v}^2}^{k/2}
\end{split}
\end{equation*}
where $c^{k/2}_k = 2^k /c\epsilon^{k/2 - 1}$. Therefore, $\calD_1,\calD_2$ are $(c_k, k)$-hypercontractive over $\R^2$. Next, we compute the TV distance between the two distributions. 
\begin{equation}
\label{eqn:tv_computation}
    \begin{split}
        d_{\tv}\Paren{\calD_1, \calD_2} &= \frac{1}{2}\int_{\R^2 \times \R} \abs{\calD_1(x_1, x_2, y) - \calD_2(x_1, x_2, y) } \\
        & =  \frac{1}{2}\int_{\R^2} \calD_1(x_1, x_2) \int_{\R}\abs{\calD_1(y \mid (x_1, x_2) ) - \calD_2( y\mid (x_1, x_2) ) }
    \end{split}
\end{equation}
where the last equality follows from the definition of conditional probability. It follows from Equation \eqref{eqn:conditional_dist} that $\calD_1(y \mid (x_1, x_2) ) = \calU(x_1+x_2 -\sigma, x_1 +x_2 +\sigma)$ and $\calD_2( y\mid (x_1, x_2) ) = \calU(x_1-x_2 -\sigma, x_1 -x_2 +\sigma)$.  If $\abs{x_2} \geq \sigma$ the intervals are disjoint and $\abs{\calD_1(y \mid (x_1, x_2) ) - \calD_2( y\mid (x_1, x_2) ) } = 2$. If $\abs{x_2} < \sigma$, then two symmetric non-intersecting regions have mass $ 2\abs{x_2}/2\sigma$ and the intersection region contributes $0$. Therefore, $\abs{\calD_1(y \mid (x_1, x_2) ) - \calD_2( y\mid (x_1, x_2) ) } = 2\abs{x_2}/\sigma$ and \eqref{eqn:tv_computation} can be evaluated as 
\begin{equation*}
\begin{split}
    d_{\tv}(\calD_1, \calD_2) &= \frac{1}{2} \int_{\R} 2\indic{ \abs{x_2}  \geq \sigma} +  \frac{2\abs{x_2}}{\sigma} \indic{\abs{x_2} < \sigma } \\
    & =  \prob{\abs{x_2} \geq \sigma} + \frac{1}{\sigma} \expecf{x_2 \sim \calD_1}{\abs{x_2} \indic{\abs{x_2} < \sigma} } \\
    & = 2\epsilon  
\end{split}
\end{equation*}
Finally, we lower bound the parameter distance. Since the coordinates are independent, $\Sigma$ is a diagonal matrix with $\Sigma_{1,1} = \expecf{}{x^2_1} = 1/3$ and $\Sigma_{2,2}= \expecf{}{x^2_2} = \epsilon^{1 - 2/k} + (\epsilon\sigma)^2/3 $. Further, $\Theta_1 - \Theta_2 = (0, 2)$. Thus, $\Norm{\Sigma^{1/2}\Paren{\Theta_1 - \Theta_2}}_2 = 2 \Sigma_{2,2}^{1/2} \geq 2 \epsilon^{1/2 - 1/k} $. For any $\sigma < 1/\epsilon^{1/k}$,
\begin{equation*}
\begin{split}
    \Norm{\Sigma^{1/2}\Paren{\Theta_1 - \Theta_2}}_2   \geq 2~ \epsilon^{1/2 - 1/k} &> 2~\sigma~ \epsilon^{1/2} \\
    & \geq 2~\sqrt{c_k}~\sigma~\epsilon^{1 - 1/k}
\end{split}
\end{equation*}
which concludes the proof. 

\end{proof}



\subsection{Agnostic Model}
Next, consider the setting where we simply observe samples from $(x,y) \sim \calD$, and our goal is to return is to return the minimizer of the squared error, given by $\Theta^* = \expecf{}{x x^{\top}}^{-1}\expecf{}{xy}$. Here, the distribution of the noise is allowed to depend on the covariates arbitrarily. We further assume the noise is hypercontractive and obtain a lower bound proportional to $\epsilon^{1-2/k}$ for recovering an estimator close to $\Theta^*$. This matches the upper boundd obtained in Corollary \ref{cor:dependent_noise}.

\begin{theorem}[Agnostic Model Lower Bound, Theorem \ref{thm:informal_dependent_noise_lowerbound} restated]
\label{thm:lower_bound_dependent_noise}
For any $\epsilon >0$, there exist two distributions $\calD_1, \calD_2$ over $\R^2 \times\R $ such that the marginal distribution over $\R^2$ has covariance $\Sigma$ and is $(c_k,k)$-hypercontractive yet $\Norm{\Sigma^{1/2}(\Theta_{1} - \Theta_{2} ) }_2 =\bigOmega{ \sqrt{c_k}~\sigma~\epsilon^{1 -2/k}} $, where $\Theta_1 , \Theta_2$ be the optimal hyperplanes for $\calD_1 $ and $\calD_2$ respectively, $\sigma = \max( \err_{\calD_1}(\Theta_1),  \err_{\calD_2}(\Theta_2)) < 1/\epsilon^{1/k}$ and the noise is a function of the marginal distribution of $\R^2$.
\end{theorem}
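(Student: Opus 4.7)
The plan is to adapt the construction from the proof of Theorem~\ref{thm:lower_bound_independent_noise}: I keep the marginal distribution on $x$ essentially identical but replace the independent uniform noise with a deterministic, $x$-dependent noise chosen to maximize the shift of the optimum hyperplane for a fixed error budget. Concretely, take $x_1 \sim \mathrm{Unif}[-1,1]$ and $x_2$ supported on $\{-R,0,R\}$ with $R = 1/\epsilon^{1/k}$ and $\Pr[x_2 = \pm R] = \epsilon/2$ (a small continuous bulk around $0$, exactly as in Theorem~\ref{thm:lower_bound_independent_noise}, can be added with no material effect). The computation there gives $\Sigma_{22} = \Theta(\epsilon^{1-2/k})$ and verifies that the $x$-marginal is $(c_k,k)$-hypercontractive with $c_k = \Theta(\epsilon^{2/k - 1})$.

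The new ingredient is the conditional $y \mid x$. Set $y = x_1 + f_q(x_2)$ deterministically with $f_q$ supported on $\{|x_2| = R\}$, and split $f_q$ into an \emph{antisymmetric} linear-in-$x_2$ part and a \emph{symmetric} constant-on-$\{|x_2|=R\}$ part: put $f_1(R) = L+s$, $f_1(-R) = -L+s$, $f_1(0) = 0$, and obtain $f_2$ by flipping only the antisymmetric piece, i.e. $f_2(R) = -L+s$, $f_2(-R) = L+s$. Using independence of $x_1$ and $x_2$, a direct calculation gives $\expecf{}{x_2 f_q(x_2)} = \pm\,\epsilon R L$, so $\Theta_q = \Sigma^{-1}\expecf{}{xy} = (1, \pm L/R)$, and the residual $y - \Iprod{x,\Theta_q}$ equals $s$ on $\{|x_2|=R\}$ and $0$ elsewhere, giving $\err_{\calD_q}(\Theta_q) = \epsilon s^2$. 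The essential point is that the antisymmetric component is absorbed completely into $\Theta_q$ (producing the shift while incurring no residual), while the symmetric component is orthogonal in $L^2(\calD_q)$ to every linear function $\Iprod{x,\Theta}$ (using $\expecf{}{x_1}=0$) and therefore contributes its full $\ell_2$-mass $\epsilon s^2$ to the error, independently of $L$.

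I will choose $s = \sigma/\sqrt{\epsilon}$, forcing $\err_{\calD_q}(\Theta_q) = \sigma^2$, and $L = \sigma R = \sigma/\epsilon^{1/k}$, which is consistent with the hypothesis $\sigma < 1/\epsilon^{1/k}$. Then
\[
\Norm{\Sigma^{1/2}(\Theta_1 - \Theta_2)}_2 \;=\; 2\sqrt{\Sigma_{22}}\cdot\frac{L}{R} \;=\; 2L\sqrt{\epsilon} \;=\; 2\sigma\,\epsilon^{1/2 - 1/k},
\]
which equals $\Theta(\sqrt{c_k}\,\sigma\,\epsilon^{1-2/k})$ since $\sqrt{c_k} = \Theta(\epsilon^{1/k - 1/2})$. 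The total-variation bound $d_{\tv}(\calD_1,\calD_2) \leq \epsilon$ is immediate because the $x$-marginals coincide and the conditionals (both point masses in $y$) differ only on the event $\{|x_2| = R\}$, which carries probability $\epsilon$. The companion excess-risk statement $|\err_{\calD_1}(\Theta_2) - \err_{\calD_1}(\Theta_1)| = \Norm{\Sigma^{1/2}(\Theta_1 - \Theta_2)}_2^2 = \Theta(\sigma^2\,\epsilon^{1-2/k})$ (which dominates $\bigOmega{\sigma^2\,\epsilon^{2-4/k}}$ for $k\geq 4$) follows for free from the Pythagorean identity for least-squares loss.

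The main reason the dependent-noise rate improves from $\epsilon^{1-1/k}$ to $\epsilon^{1-2/k}$ (after the $\sqrt{c_k}$ factor) is precisely the freedom to align the noise with $x_2$ via the antisymmetric component. The independent construction of Theorem~\ref{thm:lower_bound_independent_noise} was forced to use symmetric noise of variance $\Theta(\sigma^2)$, capping the effective antisymmetric magnitude at $\sigma$; here, because the noise distribution is allowed to depend on $x$, an antisymmetric spike of magnitude $L = \sigma R$ at $|x_2|=R$ does not inflate the residual at all, gaining an extra factor of $R = \epsilon^{-1/k}$ in the scaled parameter separation. Every other step---the hypercontractivity verification, the covariance computation, and the TV calculation---transfers directly from the proof of Theorem~\ref{thm:lower_bound_independent_noise}, so the only genuinely new content is the symmetric/antisymmetric decomposition of $f_q$ and the elementary algebra that it enables.
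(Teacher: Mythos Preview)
Your proof is correct and takes a genuinely different route from the paper. The paper treats only $k=4$ (asserting the general case is similar), uses an $x_2$-marginal whose bulk is uniform on $[-1,1]$ so that $c_k = \Theta(1)$, and sets $\calD_1(y\mid x)=x_2$ versus $\calD_2(y\mid x)=x_2\cdot\indic{|x_2|\le 1}$ (i.e.\ it simply zeroes $y$ at the spikes); this forces $\err_{\calD_1}(\Theta_1)=0$, places $\Theta_1=(0,1)$, and shifts the second coordinate of $\Theta_2$ by $\Theta(\sqrt{\epsilon})$, giving $\|\Sigma^{1/2}(\Theta_1-\Theta_2)\|_2=\Omega(\sqrt{\epsilon})$. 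Your antisymmetric/symmetric split of $f_q$ is a new ingredient not in the paper: it decouples the parameter-shifting piece $L$ (fully absorbed by the regressor) from the error-generating piece $s$, which lets you handle general $k$ in one stroke, keep both errors symmetric at $\sigma^2$, choose $\sigma$ freely subject to $\sigma<\epsilon^{-1/k}$, and hit the stated bound with equality rather than over-delivering. The price is that your $c_k=\Theta(\epsilon^{2/k-1})$ blows up as $\epsilon\to 0$, so the $\sqrt{c_k}$ factor in the conclusion is doing real work; the paper's construction keeps $c_k$ an absolute constant---arguably the more natural regime for a hypercontractivity assumption---at the cost of a more ad hoc, $k=4$-specific computation.
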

\begin{proof}
We provide a proof for the special case of $k=4$. The same proof extends to general $k$. We again construct a $2$-dimensional instance where the marginal distribution over covariates is identical for $\calD_1$ and $\calD_2$. The pdf is given as follows: for $q\in \{1,2\}$ on the first coordinate,  $x_1$,
\begin{equation*}
\begin{split}
\calD_q(x_1) = \begin{cases}
1/2 , ~~&  \text{if} ~~ x_1 \in [-1,1] \\
0  & \text{otherwise} ~~
\end{cases}
\end{split}
\end{equation*}
and on the second coordinate, $x_2$, 
\begin{equation*}
\begin{split}
\calD_q(x_2) = \begin{cases}
\epsilon/2, ~~&  \text{if} ~~ x_2 \in \{ -1/\epsilon^{1/4}, 1/\epsilon^{1/4} \} \\
\frac{1-\epsilon}{2 }  & \text{if} ~~ x_2 \in [-1 , 1 ]\\
0 & \text{otherwise}
\end{cases}
\end{split}
\end{equation*}
Observe, $\expecf{}{x_1^4} = 1/5$ and $\expecf{}{x_1^2} = 1/3$. Similarly, $\expecf{}{x_2^4} = 1 + (1-\epsilon)/5$ and $\expecf{}{x_2^2} =  \sqrt{\epsilon} +(1-\epsilon)/3$. Therefore, the marginal distribution over $\R^2$ is $(c, 4)$-hypercontractive for a fixed constant $c$. Next, let
\begin{equation}
\label{eqn:dependent_noise}
\begin{split}
\calD_1(y \mid (x_1,x_2) ) &=  x_2  ~~~~~~~\textrm{ and}\\
\calD_2(y \mid (x_1,x_2) ) &= \begin{cases} 0 ~~~~~~~ \text{if} ~~ \abs{x_2} = 1/\epsilon^{1/4} \\
x_2 ~~~~~ \text{otherwise} 
\end{cases}
\end{split}
\end{equation}
Then, 
\begin{equation*}
\begin{split}
    d_{\tv}(\calD_1, \calD_2) & = \frac{1}{2}\int_{\R^2} \calD_1(x_1, x_2) \int_{\R}\abs{\calD_1(y \mid (x_1, x_2) ) - \calD_2( y\mid (x_1, x_2) ) } \\
    & = \frac{1}{2} \int_{\R} \abs{x_2} \indic{ \abs{x_2} = 1/\epsilon^{1/4}}  \\
    & = \epsilon
\end{split}
\end{equation*}
Since the coordinates over $\R^2$ are independent the covariance matrix $\Sigma$ is diagonal, such that $\Sigma_{1,1}  = \expecf{}{x_1^2}  = 1/3$ and $\Sigma_{2,2} = \expecf{}{x_2^2} = \sqrt{\epsilon} +(1-\epsilon)/3$. We can then compute the optimal hyperplanes using normal equations: 
\begin{equation*}
    \Theta_1 = \expecf{x\sim\calD_1}{x x^\top}^{-1} \expecf{x,y\sim \calD_1}{ xy } = \Sigma^{-1}\expecf{x,y\sim \calD_1}{ xy }
\end{equation*}
Observe, using \eqref{eqn:dependent_noise}, 
\begin{equation*}
    \expecf{}{x_1 y} = \int_{\R} x_1 y \calD_1(x_1 y) = \int_{\R} x_1 y \calD_1(x_1) \calD_1(y) = 0
\end{equation*}
since $x_1$ and $y$ are independent.  Further, 
\begin{equation*}
    \expecf{}{x_2 y} = \int_{\R} x_2 y D(x_2, y) = \int_\R x^2_2  D(x_2) = \sqrt{\epsilon} +(1-\epsilon)/3
\end{equation*}
Therefore, $\Theta_1 = (0, 1)$. Similarly, 
\begin{equation*}
    \Theta_2 = \expecf{x\sim\calD_2}{x x^\top}^{-1} \expecf{x,y\sim \calD_2}{ xy } = \Sigma^{-1}\expecf{x,y\sim \calD_2}{ xy }
\end{equation*}
Further, $\expecf{}{x_1 y} =0$. However, 
\begin{equation*}
    \expecf{}{x_2 y} = \int_\R x_2 y \calD_2(x_2, y) = \int_\R  x^2_2 \indic{\abs{x_2} \leq1 }\calD_2(x_2) = 1-\epsilon
\end{equation*}
Therefore, $\Theta_2 = \Paren{0, \frac{1-\epsilon}{1+\sqrt{\epsilon} }}$. Then, 
\begin{equation*}
\begin{split}
    \Norm{\Sigma^{1/2}\Paren{\Theta_1 - \Theta_2} }_2 = \sqrt{\sqrt{\epsilon} +(1-\epsilon)/3} \cdot \frac{\sqrt{\epsilon}+\epsilon}{1+\sqrt{\epsilon}} = \Omega(\sqrt{\epsilon})
\end{split}
\end{equation*}
which concludes the proof. 
\end{proof}

\section{Bounded Covariance Distributions}

In the heavy-tailed setting, the minimal assumption is to consider a distribution over the covariates with bounded covariance. In this setting, we show that robust estimators for linear regression do not exist, even when the underlying linear model has no noise, i.e. the uncorrupted samples are drawn as follows: $y_i = \Iprod{\Theta^*, x_i}$.

\begin{theorem}[Lower Bound for Bounded Covariance Distributions]
\label{thm:lb_bounded_covariance}
For all $\epsilon>0$, there exist two distributions $\calD_1, \calD_2$ over $\R \times \R$ corresponding to the linear model $y = \Iprod{\Theta_1, x}$ and $y = \Iprod{\Theta_2, x}$ respectively, such that the marginal distribution over $\R$ has variance $\sigma$ and $d_{\tv}\Paren{\calD_1, \calD_2} \leq \epsilon$, yet $\abs{\sigma\Paren{\Theta_1 -\Theta_2}} = \Omega(\sigma)$.
\end{theorem}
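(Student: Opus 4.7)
The plan is to exhibit a one-dimensional noiseless example where almost all of the mass of the covariate sits at a single uninformative point and a tiny $\epsilon$-fraction sits far out, chosen just heavy enough to account for all of the variance but light enough to be hidden within the adversary's $\epsilon$-budget. This captures the intuition that, absent any hypercontractivity hypothesis, a constant fraction of the second moment can be concentrated on an $\epsilon$-mass atom whose label the adversary can flip with no visible effect on the bulk of the distribution.

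Concretely, I would take the marginal of $x$ to be identical under both $\calD_1$ and $\calD_2$, putting $\Pr[x = 1/\sqrt{\epsilon}] = \Pr[x = -1/\sqrt{\epsilon}] = \epsilon/2$ and $\Pr[x = 0] = 1-\epsilon$. Then $\expecf{}{x} = 0$ and $\expecf{}{x^2} = \epsilon \cdot (1/\epsilon) = 1$, so the marginal has variance $\sigma = \Theta(1)$ (the construction trivially rescales to any target $\sigma$). On top of this marginal I define the two noiseless conditional laws $\calD_1(y \mid x) = \delta_{\Theta_1 x}$ and $\calD_2(y \mid x) = \delta_{\Theta_2 x}$, with $\Theta_1 = 1$ and $\Theta_2 = -1$.

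For the TV bound, I condition on the three atoms of $x$. At the atom $x = 0$ (mass $1-\epsilon$), both conditionals are the point mass $\delta_0$, so this atom contributes zero to $d_{\tv}$. At each of the informative atoms $x = \pm 1/\sqrt{\epsilon}$, the two conditional laws are point masses on disjoint points ($\Theta_1 x$ versus $\Theta_2 x = -\Theta_1 x$), so the contribution to $d_{\tv}$ from these two atoms is exactly their total marginal mass, $\epsilon$. Hence $d_{\tv}(\calD_1, \calD_2) = \epsilon$.

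Finally, the scaled parameter gap is $|\sigma(\Theta_1 - \Theta_2)| = \sigma \cdot 2 = \Omega(\sigma)$, which proves the theorem. There is no real obstacle in this argument; the only subtle point to highlight is that noiselessness of the linear model is essential --- it is what forces the two joint distributions to coincide on the dominant atom $\{x=0\}$ while being mutually singular on the entire informative $\epsilon$-mass, so that the TV cost of changing $\Theta$ matches exactly the mass of the tail atoms that carry all of the variance. This is the precise way in which bounded covariance, unlike $(c_k,k)$-hypercontractivity, fails to rule out adversarial indistinguishability.
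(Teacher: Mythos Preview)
Your proof is correct and follows the same high-level scheme as the paper: take identical covariate marginals under $\calD_1$ and $\calD_2$, place $1-\epsilon$ mass at the origin where the two noiseless models $y=\Theta_1 x$ and $y=\Theta_2 x$ agree, and push the remaining $\epsilon$-mass far enough out to carry all of the variance while being freely editable by the adversary. The paper implements the $\epsilon$-mass piece with a Student's $t$-distribution with $2+\epsilon$ degrees of freedom (so $\sigma^2 = \epsilon\cdot\frac{2+\epsilon}{\epsilon} = 2+\epsilon$ and the fourth moment is infinite), whereas you use the simpler three-atom law $\Pr[x=\pm 1/\sqrt{\epsilon}]=\epsilon/2$. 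Both computations of $d_{\tv}$ go through by the same conditioning argument you wrote down. Your construction is more elementary and verifies the theorem as stated; the paper's Student's $t$ choice has the minor expository advantage that the covariate law has an \emph{infinite} fourth moment, so it is manifestly outside every $(c_4,4)$-hypercontractive class regardless of the constant, whereas in your example the fourth-to-second-moment ratio is $1/\epsilon$, finite for each fixed $\epsilon$ but unbounded as $\epsilon\to 0$. Either way the contrast with the hypercontractive upper bounds is clear.
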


Our hard instance relies on the so called \textit{Student's t-distribution}, which has heavy tails when the degrees of freedom are close to $2$. 

\begin{definition}[Student's $t$-distribution]
Given $\nu > 1$, Student's $t$-distribution has the following probability density function: 
\begin{equation*}
    f_{\nu}(t) = \frac{ \Gamma\Paren{\frac{\nu + 1}{2} } }{\sqrt{\nu \pi} \hspace{0.05in} \Gamma\Paren{\frac{\nu }{2} } } \Paren{1 + \frac{t^2}{\nu} }^{- \frac{\nu+1}{2}} 
\end{equation*}
where $\Gamma(z) = \int_{0}^\infty x^{z-1} e^x dx $, for $z\in\R$, is the Gamma function. 
\end{definition}

We use the following facts about Student's $t$-distribution: 

\begin{fact}[Mean and Variance]
The mean of Student's $t$-distribution is $\expecf{x\sim f_\nu}{x} = 0$ for $\nu > 1$ and undefined otherwise. The variance of Student's $t$-distribution is 
\begin{equation*}
    \expecf{x\sim f_\nu}{x^2} =
    \begin{cases}
    \infty & \mbox{if }  1 < \nu \leq 2\\
    \frac{\nu}{\nu-2} & \mbox{if }  2 < \nu  \\
    \textrm{undefined } & \textrm{otherwise}
    \end{cases}
\end{equation*}

\end{fact}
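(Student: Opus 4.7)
The plan is to establish each case separately by direct integration against the density $f_\nu(t)$, distinguishing cases based on the tail behavior of the integrand. Throughout I will use that $f_\nu$ is an even function and that for large $|t|$, $f_\nu(t) \sim C_\nu |t|^{-(\nu+1)}$, so that $|t|^m f_\nu(t)$ is integrable at infinity iff $m - (\nu+1) < -1$, i.e., iff $\nu > m$.

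\textbf{Mean.} I will first show absolute integrability of $t f_\nu(t)$ iff $\nu > 1$: the tail integrand behaves as $|t|^{-\nu}$, which is integrable at infinity precisely when $\nu > 1$. When $\nu > 1$, the integral $\int_{-\infty}^\infty t f_\nu(t)\,dt$ equals $0$ by odd symmetry of $t \mapsto t f_\nu(t)$ (since $f_\nu$ is even). When $\nu \leq 1$, the positive and negative tails individually diverge, so the mean is undefined as a Lebesgue integral.

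\textbf{Variance, qualitative cases.} Since the mean is $0$ when it exists, the variance equals $\int t^2 f_\nu(t)\,dt$. Tail analysis gives integrand $\sim |t|^{1-\nu}$, which is integrable at infinity iff $\nu > 2$. For $1 < \nu \leq 2$ the integral diverges to $+\infty$ (the integrand is non-negative), giving the $\infty$ case. For $\nu \leq 1$, the mean is already undefined, so the variance is undefined as well.

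\textbf{Variance, exact value for $\nu > 2$.} I will compute $\int_{-\infty}^\infty t^2 f_\nu(t)\,dt$ explicitly. Using the substitution $u = t^2/\nu$ on $(0,\infty)$ and doubling by symmetry, the integral reduces to a Beta-type integral
\[
\int_0^\infty t^2 \Paren{1 + t^2/\nu}^{-(\nu+1)/2}\,dt = \frac{\nu^{3/2}}{2}\int_0^\infty u^{1/2}(1+u)^{-(\nu+1)/2}\,du.
\]
Recognizing the right-hand integral as $B(3/2, (\nu-2)/2) = \Gamma(3/2)\Gamma((\nu-2)/2)/\Gamma((\nu+1)/2)$, multiplying by the normalizing constant $\Gamma((\nu+1)/2)/(\sqrt{\nu\pi}\,\Gamma(\nu/2))$ and doubling for the symmetric tail, the $\Gamma((\nu+1)/2)$ factors cancel, leaving
\[
\expecf{x\sim f_\nu}{x^2} = \frac{\nu\,\Gamma(3/2)\,\Gamma((\nu-2)/2)}{\sqrt{\pi}\,\Gamma(\nu/2)}.
\]
Finally, I apply $\Gamma(3/2) = \sqrt{\pi}/2$ and the functional equation $\Gamma(\nu/2) = ((\nu-2)/2)\,\Gamma((\nu-2)/2)$ to collapse this to $\nu/(\nu-2)$.

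The main obstacle is just bookkeeping: correctly tracking the normalization constant and applying the Gamma recursion, together with a careful statement of when each case applies. Nothing here requires a nontrivial idea beyond the Beta function evaluation and the tail comparison for integrability.
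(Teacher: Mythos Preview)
Your proof is correct. The paper itself does not prove this statement at all---it is presented as a background ``Fact'' about Student's $t$-distribution and cited without argument, as is standard for such textbook material. Your direct-integration approach via the Beta function identity and Gamma recursion is the canonical way to establish these formulas, and the bookkeeping (tail comparison for integrability, the substitution $u = t^2/\nu$, and the simplification $\Gamma(\nu/2) = \tfrac{\nu-2}{2}\Gamma((\nu-2)/2)$) is all handled correctly.
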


The intuition behind our lower bound is to construct a regression instance where the covariates are non-zero only on an $\epsilon$-measure support and are heavy tailed when non-zero. As a consequence, the adversary can introduce a distinct valid regression instance by changing a different $\epsilon$-measure of the support. It is then information-theoretically impossible to distinguish between the true and the planted models.  

\begin{proof}[Proof of Theorem \ref{thm:lb_bounded_covariance}]
We construct a $1$-dimensional instance where the marginal distribution over covariates is identical for $\calD_1$ and $\calD_2$. The pdf is given as follows: for $q\in \{1,2\}$ the marginal distribution on the covariates is given as follows:
\begin{equation*}
\begin{split}
\calD_q(x) = \begin{cases}
1 -\epsilon  , ~~&  \textrm{if} ~~ x =0 \\
\epsilon\cdot f_{2+\epsilon}(x)  & \text{otherwise} ~~
\end{cases}
\end{split}
\end{equation*}
The distribution of the labels is gives as follows: 
\begin{equation*}
    \calD_1\Paren{ y \mid x } = x ~~ \textrm{and} ~~ \calD_2\Paren{ y \mid x } = - x
\end{equation*}
Next, we compute the total variation distance between $\calD_1$ and $\calD_2$. Recall,
\begin{equation}
\begin{split}
    d_\tv(\calD_1, \calD_2) &= \frac{1}{2} \int_{\R \times \R} \left| \calD_1(x,y) - \calD_2(x,y) \right|  \\
    & = \frac{1}{2}\int_{\R} \calD_1(x) \int_{\R}\abs{\calD_1(y \mid x ) - \calD_2( y\mid x ) } \\
    & = \frac{1}{2} \int_{\R}\abs{\calD_1(y \mid x ) - \calD_2( y\mid x ) }\Paren{\indic{x=0} +\indic{x\neq 0}} \\
    & =  \frac{1}{2} \int_{\R}\abs{ 2 x } \indic{x\neq 0} \leq \epsilon
\end{split}
\end{equation}

Observe, since the regression instances have no noise, we can obtain a perfect fit by setting $\Theta_1 = 1$ and $\Theta_2 = -1$. Further, for $q \in\{1,2 \}$,
\begin{equation}
    \expecf{x\sim\calD_q}{x} = \Paren{1-\epsilon}\cdot 0 + \epsilon \cdot \expecf{x\sim f_{2+\epsilon}}{x} = 0 
\end{equation}
and 
\begin{equation}
    \expecf{x\sim\calD_q}{x^2} = \Paren{1-\epsilon}\cdot 0 + \epsilon \cdot \expecf{x\sim f_{2+\epsilon}}{x^2} = \epsilon \cdot \frac{2+\epsilon}{\epsilon} 
\end{equation}
Thus,
\begin{equation}
    \left| \expecf{x\sim\calD_q}{x^2}^{1/2} \Paren{\Theta_1 - \Theta_2} \right| = \Paren{2+\epsilon}^{1/2} \cdot 2 = 2 \sigma
\end{equation}
which completes the proof. We note that the $4$-th moment of $f_{2+\epsilon}(t)$ is infinite and thus it is not hypercontractive, even for $k=4$. 
\end{proof}

\section*{Acknowledgment}
We thank Sivaraman Balakrishnan, Sam Hopkins, Pravesh Kothari, Jerry Li, Pradeep Ravikumar and David Wajc for illuminating discussions related to this project. In particular, we thank Pravesh Kothari for answering several technical questions regarding lower bounds appearing in a previous version of \cite{DBLP:conf/colt/KlivansKM18}.

\clearpage

\bibliography{local,custom2,custom,scholar,dblp,mathreview}
\bibliographystyle{alpha}

\appendix
\newpage
\appendix

\section{Robust Identifiability for Arbitrary Noise}
\label{sec:identifiability_dependent_noise}

\begin{proof}[Proof of Corollary \ref{cor:dependent_noise}]
Consider a maximal coupling of $\calD, \calD'$ over $(x,y) \times (x',y')$, denoted by $\calG$, such that the marginal of $\calG$ $(x,y)$ is $\calD$, the marginal on $(x', y')$ is $\calD'$ and $\mathbb{P}_{\calG}[ \indic{(x,y) = (x',y')}] = 1-\epsilon$. Then, for all $v$, 
\begin{equation}
\label{eqn:dependent_main_bound}
\begin{split}
    \Iprod{v, \Sigma_{\calD} (\Theta_{\calD} - \Theta_{\calD'})  } & = \expecf{\calG}{\Iprod{v, x x^{\top} (\Theta_{\calD} - \Theta_{\calD'}) + xy - xy  }} \\
    & =  \expecf{\calG}{\Iprod{v, x \Paren{ \Iprod{x, \Theta_{\calD}}  - y}  }} + \expecf{\calG}{\Iprod{v, x \Paren{  y - \Iprod{x, \Theta_{\calD'}} }  }} 
\end{split}
\end{equation}

Since $\Theta_\calD$ is the minimizer for the least squares loss, we have the following gradient condition : for all $v \in \mathbb{R}^d$,
\begin{equation}
    \label{eqn:dependent_grad}
    \expecf{(x,y)\sim \calD}{ \Iprod{v,  ( \Iprod{x, \Theta_{\calD}} - y) x } } = 0
\end{equation}
Since $\calG$ is a coupling, using the gradient condition \eqref{eqn:dependent_grad} and using that $1 = \indic{(x,y) = (x',y')}$ $+ \indic{(x,y) \neq (x',y')}$, we can rewrite equation \eqref{eqn:dependent_main_bound} as

\begin{equation}
\label{eqn:dependent_main_bound_restated}
\begin{split}
    \Iprod{v, \Sigma_{\calD} (\Theta_{\calD} - \Theta_{\calD'})  }
    & =   \expecf{\calG}{\Iprod{v, x \Paren{  y - \Iprod{x, \Theta_{\calD'}}}} \indic{(x,y) = (x',y')} } \\
    & \hspace{0.15in} +  \expecf{\calG}{\Iprod{v, x \Paren{  y - \Iprod{x, \Theta_{\calD'}}}} \indic{(x,y) \neq (x',y')} }    \\
    & = \expecf{\calG}{\Iprod{v, x' \Paren{  y' -\Iprod{x', \Theta_{\calD'}}}} \indic{(x,y) = (x',y')} } \\
    & \hspace{0.15in} +  \expecf{\calG}{\Iprod{v, x \Paren{  y - \Iprod{x, \Theta_{\calD'}}}} \indic{(x,y) \neq (x',y')} }    \\
\end{split}
\end{equation}
Consider the first term in the last equality above. Using the gradient condition for $\Theta_{\calD'}$ along with Hölder's Inequality, we have

\begin{equation}
\label{eqn:dependent_hypercontactivity_d'}
\begin{split}
     \Big|\mathbb{E}_{\calG} \Big[ &\Iprod{v, x'  \Paren{  y' -\Iprod{x', \Theta_{\calD'}}}}  \indic{(x,y) = (x',y')}  \Big]\Big| \\
    & =  \Big|\expecf{\calD'}{\Iprod{v, x' \Paren{  y' -\Iprod{x', \Theta_{\calD'}}}}  }
     - \expecf{\calG}{\Iprod{v, x' \Paren{  y' -\Iprod{x', \Theta_{\calD'}}}} \indic{(x,y) \neq (x',y')} } \Big| \\
    & = \left|\expecf{\calG}{\Iprod{v, x' \Paren{  y' -\Iprod{x', \Theta_{\calD'}}}} \indic{(x,y) \neq (x',y')} } \right| \\ 
    & \leq \left|\expecf{\calG}{ \indic{(x,y) \neq (x',y')}^{k/(k-2)} }^{(k-2)/k} \right| \cdot \left|\expecf{\calD'}{ \Iprod{v, x' \Paren{  y' -\Iprod{x', \Theta_{\calD'}}}}^{k/2}  }^{2/k} \right| \\ 
\end{split}
\end{equation}
Observe, since $\calG$ is a maximal coupling $\expecf{\calG}{ \indic{(x,y) \neq (x',y')}}^{(k-2)/k} \leq \epsilon^{1-2/k}$. Here, we no longer have independence of the noise and the covariates, therefore using Cauchy-Schwarz 
\[
\expecf{\calD'}{ \Iprod{v, x'}^{k/2} \cdot  \Paren{  y' -\Iprod{x', \Theta_{\calD'}}}^{k/2}  }  \leq \Paren{\expecf{\calD'}{ \Iprod{v, x'}^{k} } \expecf{\calD'}{  \Paren{  y' -\Iprod{x', \Theta_{\calD'}}}^{k}} }^{1/2}
\]
By hypercontractivity of the covariates and the noise, we have 

\[
\expecf{\calD'}{ \Iprod{v, x'}^{k} }^{1/k} \expecf{\calD'}{  \Paren{  y' -\Iprod{x', \Theta_{\calD'}}}^{k}  }^{1/k} \leq \bigO{ c_k ~ \eta_k } \Paren{ v^{\top}\Sigma_{\calD'}v}^{1/2} \expecf{x',y'\sim\calD'}{\Paren{y' - \Iprod{x', \Theta_{\calD'}} }^2 }^{1/2}
\]
Therefore, we can restate \eqref{eqn:dependent_hypercontactivity_d'} as follows 
\begin{equation}
\label{eqn:dependent_hypercontractive_bound_d'_noise}
\begin{split}
\Big| \expecf{\calG}{\Iprod{v, x'  \Paren{  y' -\Iprod{x', \Theta_{\calD'}}}}  \indic{(x,y) = (x',y')}  }\Big| & \leq \bigO{ c_k ~ \eta_k ~\epsilon^{\frac{k-2}{k}} } \Paren{ v^{\top}\Sigma_{\calD'}v}^{\frac{1}{2}} \\
& \hspace{0.15in} \expecf{x',y'\sim\calD'}{\Paren{y' - \Iprod{x', \Theta_{\calD'}} }^2 }^{\frac{1}{2}}
\end{split}
\end{equation}
It remains to bound the second term in the last equality of equation \eqref{eqn:dependent_main_bound_restated}, and we proceed as follows : 

\begin{equation}
\label{eqn:dependent_hypercontactivity_d}
\begin{split}
    \expecf{\calG}{\Iprod{v, x \Paren{  y - \Iprod{x, \Theta_{\calD'}}  }} \indic{(x,y) \neq (x',y')} } 
    & =  \expecf{\calG}{\Iprod{v, x  x^{\top} \Paren{ \Theta_\calD - \Theta_{\calD'}}} \indic{(x,y) \neq (x',y')} } \\
    & \hspace{0.15in}+ \expecf{\calG}{\Iprod{v, x \Paren{  y - \Iprod{x, \Theta_{\calD}}}} \indic{(x,y) \neq (x',y')} }\\
\end{split}
\end{equation}
We bound the two terms above separately.  Observe, applying Hölder's Inequality to the first term, we have
\begin{equation}
\label{eqn:dependent_tricky_term}
    \begin{split}
         \expecf{\calG}{\Iprod{v, x  x^{\top} \Paren{ \Theta_\calD - \Theta_{\calD'}}} \indic{(x,y) \neq (x',y')} } & \leq  \expecf{\calG}{\indic{(x,y) \neq (x',y')} }^{\frac{k-2}{k}}  \expecf{\calG}{\Iprod{v, x  x^{\top} \Paren{ \Theta_\calD - \Theta_{\calD'}}}^{\frac{k}{2}}}^{\frac{2}{k}}\\
         & \leq  ~\epsilon^{\frac{k-2}{k}} \expecf{\calG}{\Iprod{v, x  x^{\top} \Paren{ \Theta_\calD - \Theta_{\calD'}}}^{\frac{k}{2}}}^{\frac{2}{k}} \\
    \end{split}
\end{equation}
To bound the second term in equation \ref{eqn:dependent_hypercontactivity_d}, we again use Hölder's Inequality followed by Cauchy-Schwarz noise and covariates.

\begin{equation}
\label{eqn:dependent_hypercontract_holder}
    \begin{split}
        \expecf{\calG}{\Iprod{v, x \Paren{  y - \Iprod{x, \Theta_{\calD}}}} \indic{(x,y) \neq (x',y')} } & \leq \expecf{\calG}{ \indic{(x,y) \neq (x',y')} }^{\frac{k-1}{k}}\expecf{\calG}{\Iprod{v, x \Paren{  y - \Iprod{x, \Theta_{\calD}}}}^{k} }^{\frac{1}{k}} \\
        &\leq \epsilon^{\frac{k-2}{k}} \expecf{x\sim \calD}{\Iprod{v,x}^{k/2} }^{2/k} \expecf{x, y\sim \calD}{ \Paren{  y - \Iprod{x, \Theta_{\calD}}}^{k/2} }^{2/k}  \\
        & \leq  \epsilon^{\frac{k-2}{k}}~ c_k ~\eta_k \Paren{ v^{\top} \Sigma_{\calD} v}^{1/2} \expecf{x,y\sim\calD}{ (y -  \Iprod{x, \Theta_{\calD}})^2}^{1/2} 
    \end{split}
\end{equation}
where the last inequality follows from hypercontractivity of the covariates and noise. 
Substituting the upper bounds obtained in Equations \eqref{eqn:dependent_tricky_term} and \eqref{eqn:dependent_hypercontract_holder} back in to \eqref{eqn:dependent_hypercontactivity_d},
\begin{equation*}
\begin{split}
    \expecf{\calG}{\Iprod{v, x \Paren{  y - \Iprod{x, \Theta_{\calD'}}}} \indic{(x,y) \neq (x',y')} } & \leq\epsilon^{\frac{k-2}{k}} \expecf{\calG}{\Iprod{v, x  x^{\top} \Paren{ \Theta_\calD - \Theta_{\calD'}}}^{\frac{k}{2}}}^{\frac{2}{k}} \\
    & \hspace{0.15in} +   \epsilon^{\frac{k-2}{k}}~ c_k ~\eta_k \Paren{ v^{\top} \Sigma_{\calD} v}^{1/2} \expecf{x,y\sim\calD}{ (y -  \Iprod{x, \Theta_{\calD}})^2}^{1/2} 
\end{split}
\end{equation*}
Therefore, we can now upper bound both terms in Equation \eqref{eqn:dependent_main_bound_restated} as follows: 

\begin{equation}
\label{eqn:dependent_main_bound_rerestated}
\begin{split}
    \Iprod{v, \Sigma_{\calD} (\Theta_{\calD} - \Theta_{\calD'})  } & \leq \bigO{ c_k ~ \eta_k ~\epsilon^{\frac{k-2}{k}}} \Paren{ v^{\top}\Sigma_{\calD'}v}^{1/2} \expecf{x',y'\sim\calD'}{\Paren{y' - \Iprod{x', \Theta_{\calD'}} }^2 }^{1/2} \\
    & \hspace{0.15in} + \bigO{\epsilon^{\frac{k-2}{k}}} \expecf{\calG}{\Iprod{v, x  x^{\top} \Paren{ \Theta_\calD - \Theta_{\calD'}}}^{k/2}}^{2/k} \\
    & \hspace{0.15in} + \bigO{\epsilon^{\frac{k-2}{k}}~ c_k ~\eta_k} \Paren{ v^{\top} \Sigma_{\calD} v}^{1/2} \expecf{x,y\sim\calD}{ (y -  \Iprod{x, \Theta_{\calD}})^2}^{1/2} 
\end{split}
\end{equation}
Recall, since the marginals of $\calD$ and $\calD'$ on $\R^d$ are $(c_k, k)$-hypercontractive and $\Norm{\calD - \calD' }_{\tv} \leq \epsilon$, it follows from Fact \ref{fact:hypercontractive_covariance} that 
\begin{equation}
\label{eqn:dependent_cov_closeness_under_tv}
    \Paren{1- 0.1}\Sigma_{\calD'} \preceq \Sigma_{\calD} \preceq \Paren{1+0.1}\Sigma_{\calD'}
\end{equation}
when $\epsilon \leq \bigO{(1/c_k k)^{k/(k-2)}}$.
Now, consider the substitution $v = \Theta_\calD - \Theta_{\calD'}$. Observe, 
\begin{equation}
\label{eqn:dependent_hypercontractivity_along_regressor}
\begin{split}
\expecf{\calG}{\Iprod{v, x  x^{\top} \Paren{ \Theta_\calD - \Theta_{\calD'}}}^{k/2}}^{2/k} & = \expecf{\calD}{\Iprod{ x,   \Paren{ \Theta_\calD - \Theta_{\calD'}}}^{k}}^{2/k} \\
& \leq c^2_k  \Norm{\Sigma^{1/2}_{\calD} (\Theta_{\calD} - \Theta_{\calD'}) }^2_2
\end{split}
\end{equation}
Then, using the bounds in \eqref{eqn:dependent_cov_closeness_under_tv} and \eqref{eqn:dependent_hypercontractivity_along_regressor} along with $v = \Theta_{\calD} - \Theta_{\calD'}$ in 
Equation \ref{eqn:dependent_main_bound_rerestated}, we have
\begin{equation}
\label{eqn:dependent_main_bound_rererestated}
\begin{split}
    \Paren{1- \bigO{\epsilon^{\frac{k-2}{k}} c_k^2 } } \Norm{\Sigma^{1/2}_{\calD} (\Theta_{\calD} - \Theta_{\calD'}) }^2_2 & \leq \bigO{ c_k ~ \eta_k ~\epsilon^{\frac{k-2}{k}}} \Norm{\Sigma^{1/2}_{\calD} (\Theta_{\calD} - \Theta_{\calD'}) }_2  \\
    & \hspace{0.15in} \Paren{ \expecf{x',y'\sim\calD'}{\Paren{y' - \Iprod{x', \Theta_{\calD'}} }^2 }^{\frac{1}{2}} + \expecf{x,y\sim\calD}{ (y -  \Iprod{x, \Theta_{\calD}})^2}^{\frac{1}{2}} } \\
\end{split}
\end{equation}
Dividing out \eqref{eqn:dependent_main_bound_rererestated} by $\Paren{1- \bigO{\epsilon^{\frac{k-2}{k}} c_k^2 } } \Norm{\Sigma^{1/2}_{\calD} (\Theta_{\calD} - \Theta_{\calD'}) }^2_2$ and observing that $\bigO{\epsilon^{\frac{k-2}{k}} c_k^2 }$ is upper bounded by a fixed constant less than $1$ yields the parameter recovery bound.

Given the parameter recovery result above, we bound the least-squares loss between the two hyperplanes on $\calD$ as follows: 
\begin{equation}
    \begin{split}
        \big| \err_{\calD}(\Theta_{\calD}) - \err_{\calD}(\Theta_{\calD'}) \big| & = \Big| \expecf{(x,y)\sim \calD}{\Paren{ y - x^{\top}\Theta_{\calD} }^2   - {\Paren{ y - x^{\top}\Theta_{\calD'} + x^{\top}\Theta_{\calD} - x^{\top}\Theta_{\calD}   }^2 } }  \Big| \\
        & = \Big| \expecf{(x,y)\sim \calD}{ \Iprod{  x,(\Theta_{\calD} - \Theta_{\calD'}) }^2 + 2(y- x^{\top}\Theta_{\calD})x^{\top}(\Theta_{\calD} - \Theta_{\calD'})  } \Big|\\
        & \leq \bigO{ c_k^2~\eta_k^2~\epsilon^{2-4/k} }\Paren{ \expecf{x',y'\sim\calD'}{\Paren{y' - \Iprod{x', \Theta_{\calD'}} }^2 } + \expecf{x,y\sim\calD}{ (y -  \Iprod{x, \Theta_{\calD}})^2} } 
    \end{split}
\end{equation}
where the last inequality follows from observing $\expecf{}{\Iprod{ \Theta_{\calD} - \Theta_{\calD'}, x (y- x^{\top}\Theta_{\calD})} } = 0$ (gradient condition) and squaring the parameter recovery bound. 
\end{proof}

\section{Efficient Estimator for Arbitrary Noise}
\label{sec:arbitrary_noise_polytime}

In this section, we provide a proof of the key SoS lemma required to obtain a polynomial time estimator. The remainder of the proof, including the feasibility of the constraints and rounding is identical to the one presented in Section \ref{sec:efficient_estimator}. 

\begin{lemma}[Robust Identifiability in SoS for Arbitrary Noise]
\label{lem:key_sos_lemma_arbitrary_noise}
Consider the hypothesis of Theorem \ref{thm:optimal_efficient_robust_regression}.
Let $w, x',y'$ and $\Theta$ be feasible solutions for the polynomial constraint system $\calA$. Let $\hat{\Theta} = \arg\min_{\Theta} \frac{1}{n}\sum_{i\in [n]} (y^*_i - \Iprod{x^*_i, \Theta} )^2$ be the empirical loss minimizer on the uncorrupted samples and let $\hat\Sigma = \expecf{}{x_i^* (x_i^*)^{\top}}$ be the covariance of the uncorrupted samples. Then, 
\begin{equation*}
\begin{split}
    \calA\sststile{4k}{w,x',y',\Theta} \Biggl\{  \Norm{ \hat\Sigma^{1/2}\Paren{\hat\Theta - \Theta} }^{2k}_2  &\leq 2^{3k}(2\epsilon)^{k-2} c_k^k ~\eta_k^k ~\sigma^{k/2}  \Norm{ \expecf{}{ x_i' (x_i')^{\top}}^{1/2}\Paren{\hat\Theta - \Theta} }^{k}_2 \\
    & \hspace{0.15in} + 2^{3k} (2\epsilon)^{k-2}c_k^{2k} \Norm{ \hat\Sigma^{1/2} \Paren{\hat\Theta -\Theta}}^{2k}_2 \\
    & \hspace{0.15in}+ 2^{3k} (2\epsilon)^{k-2} c_k^k ~\eta_k^k   \expecf{}{\Paren{ y_i^* - \Iprod{x_i^*,\hat\Theta }}^2 }^{k/2} \Norm{ \hat\Sigma^{1/2}\Paren{\hat\Theta - \Theta} }^{k}_2 \Biggr\}
\end{split}
\end{equation*}
\end{lemma}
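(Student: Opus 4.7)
The plan is to mirror the proof of Lemma \ref{lem:key_sos_lemma} step-for-step, changing only the place where certifiable negatively correlated moments was invoked, and replacing it with a combination of SoS Hölder/Cauchy--Schwarz plus the certifiable hypercontractivity of the covariates and the noise (both of which remain available as explicit constraints in $\calA_{\epsilon,\lambda_k}$ for this corollary). Concretely, I begin with the decomposition of $\Iprod{v, \hat\Sigma(\hat\Theta-\Theta)}^k$ used in Equation \eqref{eqn:main_sos_bound}, which splits into a term eliminated by the empirical gradient condition at $\hat\Theta$ and a term of the form $\Iprod{v, \expecf{}{x_i^*(y_i^* - \Iprod{x_i^*,\Theta})}}^k$. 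As before, I introduce the auxiliary indicators $w_i'$ for truly uncorrupted samples, split according to $w_i'w_i$ vs $1-w_i'w_i$, and use the gradient constraint hardcoded in $\calA_{\epsilon,\lambda_k}$ on the SoS variables to kill the ``indicated'' contribution, reducing to the ``uncorrupted-but-not-indicated'' error term as in Equations \eqref{eqn:grad_on_sos_var}--\eqref{eqn:independence}.

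The critical modification is at the analog of Equation \eqref{eqn:independence}. Instead of applying SoS Hölder with exponent $k$ (which would force me to bound $\mathbb{E}[\Iprod{v,x_i'}^k(y_i' - \Iprod{x_i',\Theta})^k]$ and therefore require negatively correlated moments), I apply SoS Hölder (Fact \ref{fact:sos-holder}) with exponent $k/2$, obtaining $(2\epsilon)^{k-2}$ in place of $(2\epsilon)^{k-1}$ and leaving the residual $\bigl(\expecf{}{\Iprod{v,x_i'}^{k/2}(y_i' - \Iprod{x_i',\Theta})^{k/2}}\bigr)^{2}$. I then invoke SoS Cauchy--Schwarz (a degree-$O(k)$ instance of Fact \ref{fact:sos-holder}) applied to the sequences $\Iprod{v,x_i'}^{k/2}$ and $(y_i' - \Iprod{x_i',\Theta})^{k/2}$ to decouple the two factors:
\[
  \Bigl(\tfrac{1}{n}\sum_i \Iprod{v,x_i'}^{k/2}(y_i' - \Iprod{x_i',\Theta})^{k/2}\Bigr)^{2} \;\leq\; \Bigl(\tfrac{1}{n}\sum_i \Iprod{v,x_i'}^{k}\Bigr)\Bigl(\tfrac{1}{n}\sum_i (y_i' - \Iprod{x_i',\Theta})^{k}\Bigr).
\]
Certifiable hypercontractivity of $x_i'$ and of $y_i' - \Iprod{x_i',\Theta}$, both explicitly encoded as constraints of $\calA_{\epsilon,\lambda_k}$, then bound each factor by $c_k^{k}\Iprod{v,\expecf{}{x_i'(x_i')^{\top}}v}^{k/2}$ and $\eta_k^{k}\sigma^{k/2}$ respectively, producing the first summand of the claimed bound.

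The second term of Equation \eqref{eqn:almost_tri_second} is handled identically to the proof of Lemma \ref{lem:key_sos_lemma}: the piece $\expecf{}{(1-w_i')x_i^*(x_i^*)^{\top}(\Theta-\hat\Theta)}$ is treated by Hölder with exponent $k/2$ and hypercontractivity along $\hat\Theta-\Theta$, yielding the $\lambda_k^{2k}\Norm{\hat\Sigma^{1/2}(\hat\Theta-\Theta)}^{2k}$ term, while the residual $\expecf{}{(1-w_i')x_i^*(y_i^* - \Iprod{x_i^*,\hat\Theta})}$ on the uncorrupted samples is bounded by the same Hölder-with-exponent-$k/2$ plus Cauchy--Schwarz plus hypercontractivity route, now applied to the uncorrupted samples (whose certifiable hypercontractivity follows from Lemma \ref{lem:soundness} and Fact \ref{fact:sampling_preserves_hypercontract}), producing the third summand. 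Substituting $v \mapsto \hat\Theta - \Theta$ in the aggregated bound and using certifiable hypercontractivity to convert $\expecf{}{\Iprod{x_i^*,\hat\Theta-\Theta}^k}^{2}$ into $\lambda_k^{2k}\Norm{\hat\Sigma^{1/2}(\hat\Theta-\Theta)}^{2k}$ yields the stated inequality.

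The main obstacle is purely bookkeeping: verifying that every transition above admits a degree-$O(k)$ SoS derivation and that all constants track through the \ref{eq:sos-substitution} and \ref{eq:sos-addition-multiplication-rule} rules. No new distributional assumption is needed---the negatively-correlated-moments constraint of $\calA_{\epsilon,\lambda_k}$ is simply dropped, and the Cauchy--Schwarz replacement is the reason the $\epsilon$-exponent degrades from $k-1$ to $k-2$ uniformly across the three summands, matching the $\epsilon^{1-2/k}$ rate of Corollary \ref{cor:dependent_noise}.
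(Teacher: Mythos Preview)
Your proposal is correct and follows essentially the same approach as the paper's own proof. Both mirror the proof of Lemma~\ref{lem:key_sos_lemma} and replace the certifiable negatively-correlated-moments step at Equation~\eqref{eqn:independence} by applying SoS H\"older with exponent $k/2$ (yielding the $(2\epsilon)^{k-2}$ factor), followed by SoS Cauchy--Schwarz to decouple $\Iprod{v,x_i'}^{k/2}$ from $(y_i'-\Iprod{x_i',\Theta})^{k/2}$, and then certifiable hypercontractivity of the covariates and noise separately; the analogous modification is made for the uncorrupted-sample term at Equation~\eqref{eqn:holder_to_first}, and the rest is identical to Lemma~\ref{lem:key_sos_lemma}.
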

\begin{proof}
Consider the empirical covariance of the uncorrupted set given by $\hat\Sigma= \expecf{}{x_i^* (x_i^*)^{\top}}$. Then, using the \ref{eq:sos-substitution}, along with Fact \ref{fact:sos-almost-triangle}
\begin{equation}
\label{eqn:dependent_main_sos_bound}
\begin{split}
    \sststile{2k}{\Theta} \Biggl\{ \Iprod{v, \hat\Sigma \Paren{\hat\Theta - \Theta } }^k & = \Iprod{v, \expecf{}{x^*_i (x_i^*)^{\top}\Paren{\hat\Theta - \Theta } + x_i^*y_i^* - x_i^*y_i^*  }  }^k \\
    &= \Iprod{v, \expecf{}{x^*_i \Paren{ \Iprod{x_i^*,\hat\Theta }  - y_i^* } } + \expecf{}{x^*_i \Paren{ y_i^* - \Iprod{x_i^*,\Theta }  }}  }^k \\
    & \leq 2^k \Iprod{v, \expecf{}{x^*_i \Paren{ \Iprod{x_i^*,\hat\Theta }  - y_i^* } } }^k + 2^k\Iprod{v, \expecf{}{x^*_i \Paren{ y_i^* - \Iprod{x_i^*,\Theta }  }}  }^k  \Biggr\}
\end{split}
\end{equation}
Since $\hat\Theta$ is the minimizer of $\expecf{}{\Paren{ \Iprod{x^*_i, \Theta } - y_i^*  }^2 }$, the gradient condition (appearing in Equation  \eqref{eqn:dependent_grad} of the indentifiability proof) implies this term is $0$. Therefore, 
it suffices to bound the second term. 

For all $i\in [n]$, let $w'_i= w_i$ iff the $i$-th sample is uncorrupted in $\calX_\epsilon$, i.e. $x_i = x_i^*$. Then, it is easy to see that $\sum_i w'_i \geq (1-2\epsilon)n$. Further, since $\calA \sststile{2}{w} \Set{ (1- w_i' w_i)^2 = (1-w_i'w_i)}$, 
\begin{equation}
\label{eqn:dependent_bound_uncorrupted_not_indicated}
    \calA \sststile{2}{w} \Set{\frac{1}{n}\sum_{i \in [n]} (1-w'_iw_i)^2 =  \frac{1}{n}\sum_{i \in [n]} (1-w'_iw_i) \leq 2\epsilon }
\end{equation}
The above equation bounds the uncorrupted points in $\calX_\epsilon$ that are not indicated by $w$. Then, using the \ref{eq:sos-substitution}, along with the SoS Almost Triangle Inequality (Fact \ref{fact:sos-almost-triangle}),

\begin{equation}
\label{eqn:dependent_split_terms}
\begin{split}
    \calA \sststile{2k}{\Theta, w'} \Biggl\{\Iprod{v, \expecf{}{x^*_i \Paren{ y_i^* - \Iprod{x_i^*,\Theta }  }}  }^k & = \Iprod{v, \expecf{}{x^*_i \Paren{ y_i^* - \Iprod{x_i^*,\Theta }(w'_i + 1 -w'_i)  }}  }^k  \\
    & =  \Iprod{v, \expecf{}{w'_ix^*_i \Paren{ y_i^* - \Iprod{x_i^*,\Theta }}} + \expecf{}{(1-w'_i)x^*_i \Paren{ y_i^* - \Iprod{x_i^*,\Theta }}}  }^k \\
    &\leq 2^k\Iprod{v, \expecf{}{w'_ix^*_i \Paren{ y_i^* - \Iprod{x_i^*,\Theta }}}}^k  \\
    & \hspace{0.15in} + 2^k \Iprod{v, \expecf{}{(1-w'_i)x^*_i \Paren{ y_i^* - \Iprod{x_i^*,\Theta }}}}^k \Biggr\}
\end{split}
\end{equation}
Consider the first term of the last inequality in \eqref{eqn:dependent_split_terms}. Observe, since $w'_i x_i^*= w_i w_i'  x'_i$ and similarly, $w'_i y_i^*= w_i w_i'  y'_i$, 
\[\calA \sststile{4}{\Theta, w'} \Set{\expecf{}{w'_ix^*_i \Paren{ y_i^* - \Iprod{x_i^*,\Theta }}} = \expecf{}{w'_i w_i x'_i \Paren{ y_i' - \Iprod{x_i',\Theta }}}} 
\] 
For the sake of brevity, the subsequent statements hold for relevant SoS variables and have degree $O(k)$ proofs.  Using the \ref{eq:sos-substitution}, 

\begin{equation}
\label{eqn:dependent_grad_on_sos_var}
\begin{split}
    \calA \sststile{}{} \Biggl\{ \Iprod{v, \expecf{}{w'_ix^*_i \Paren{ y_i^* - \Iprod{x_i^*,\Theta }}}}^k & = \Iprod{v, \expecf{}{w'_i w_i x'_i \Paren{ y_i' - \Iprod{x_i',\Theta }}}}^k \\
    & =\Iprod{v, \expecf{}{  x'_i \Paren{ y_i' - \Iprod{x_i',\Theta }}} + \expecf{}{(1-w_i'w_i)  x'_i \Paren{ y_i' - \Iprod{x_i',\Theta }}}}^k \\
    & \leq 2^k \Iprod{v, \expecf{}{  x'_i \Paren{ y_i' - \Iprod{x_i',\Theta }}} }^k \\
    & \hspace{0.15in} + 2^k \Iprod{v,  \expecf{}{(1-w_i'w_i)  x'_i \Paren{ y_i' - \Iprod{x_i',\Theta }}} }^k  \Biggr\}
\end{split}
\end{equation}
Observe, the first term in the last inequality above is identically $0$, since we enforce the gradient condition on the SoS variables $x',y'$ and $\Theta$. We can then rewrite the second term using linearity of expectation, followed by applying SoS Hölder's Inequality (Fact \ref{fact:sos-holder}) combined with $\calA \sststile{2}{w} \Set{(1-w'_iw_i)^{2} = 1- w_i'w_i}$ to get 
\begin{equation}
\label{eqn:dependent_independence}
\begin{split}
    \calA \sststile{}{} \Biggl\{ \Iprod{v,  \expecf{}{(1-w_i'w_i) x'_i \Paren{ y_i' - \Iprod{x_i',\Theta }}} }^k & = \expecf{}{\Iprod{v,  (1-w_i') w_i x'_i \Paren{ y_i' - \Iprod{x_i',\Theta }} } }^k \\ 
    &  = \expecf{}{ (1-w_i'w_i) \Iprod{ v,  x'_i}   \Paren{ y_i' - \Iprod{ x_i',\Theta }}  }^k \\
    & \leq \expecf{}{(1-w_i'w_i)}^{k-2}\expecf{}{\Iprod{ v,  x'_i}^{k/2}   \Paren{ y_i' - \Iprod{ x_i',\Theta }}^{k/2} }  \\
    & \leq (2\epsilon)^{k-2} \expecf{}{\Iprod{ v, x'_i}^k }\expecf{}{  \Paren{ y_i' - \Iprod{ x_i',\Theta }}^k } \Biggr\}
\end{split}
\end{equation}
where the last inequality follows from \eqref{eqn:dependent_bound_uncorrupted_not_indicated} and the SoS Cauchy Schwarz Inequality. 
Using the certifiable-hypercontractivity of the covariates, 
\begin{equation}
\label{eqn:dependent_certifiable_hypercontrac_cov}
    \calA \sststile{2k}{w,x'} \Set{\expecf{}{\Iprod{ v,  x'_i}^k} \leq c_k^k \expecf{}{ \Iprod{ v,  x'_i}^2 }^{k/2} = c_k^k  \Iprod{ v,\expecf{}{  x'_i (x'_i)^{\top}}v }^{k/2} }
\end{equation}
Further, using certifiable hypercontractivity of the noise,
\begin{equation}
    \label{eqn:dependent_certibiable_hyperconstrac_noise}
    \calA \sststile{}{} \Set{\expecf{}{\Paren{ y_i' - \Iprod{w_i x_i',\Theta }}^k } \leq \eta_k^k \expecf{}{  (y'_i - \Iprod{x'_i, \Theta})^2 ) }^{k/2} }
\end{equation}
Recall, $\sigma = \expecf{}{  (y'_i - \Iprod{x'_i, \Theta})^2 )}$
Combining the upper bounds obtained in \eqref{eqn:dependent_certifiable_hypercontrac_cov} and \eqref{eqn:dependent_certibiable_hyperconstrac_noise}, and plugging this back into \eqref{eqn:dependent_independence}, we get 
\begin{equation}
\label{eqn:dependent_updated_independence}
    \begin{split}
    \calA \sststile{}{} \Biggl\{ \Iprod{v,  \expecf{}{(1-w_i')  x'_i \Paren{ y_i' - \Iprod{x_i',\Theta }}} }^k \leq (2\epsilon)^{k-2} c_k^k ~\eta_k^k ~\sigma^{k/2}  \Iprod{ v,\expecf{}{  x'_i (x'_i)^{\top}}v }^{k/2}  \Biggr\}
\end{split}
\end{equation}
Recall, we have now bounded the first term of the last inequality in \eqref{eqn:dependent_split_terms}. 
Therefore, it remains to bound the second term of the last inequality in \eqref{eqn:dependent_split_terms}. Using the \ref{eq:sos-substitution}, we have 
\begin{equation}
\label{eqn:dependent_almost_tri_second}
\begin{split}
    \calA \sststile{}{} \Biggl\{\Iprod{v, \expecf{}{(1-w'_i)x^*_i \Paren{ y_i^* - \Iprod{x_i^*,\Theta }}}}^k & = \Iprod{v, \expecf{}{(1-w'_i)x^*_i \Paren{ y_i^* - \Iprod{x_i^*,\Theta -\hat\Theta +\hat\Theta }}}}^k\\
    &\leq 2^k \Iprod{v, \expecf{}{(1-w'_i)x^*_i \Paren{ y_i^* - \Iprod{x_i^*,\hat\Theta }}}}^k \\
    & \hspace{0.15in} + 2^k\Iprod{v, \expecf{}{(1-w'_i)x^*_i \Paren{  \Iprod{x_i^*,\Theta-\hat\Theta }}}}^k \Biggr\} 
\end{split}
\end{equation}
We again handle each term separately. Observe, the first term when decoupled is a statement about the uncorrupted samples. Therefore, using the SoS Hölder's Inequality (Fact \ref{fact:sos-holder}), 
\begin{equation}
\label{eqn:dependent_holder_to_first}
\begin{split}
\calA \sststile{}{} \Biggl\{ \Iprod{v, \expecf{}{(1-w'_i)x^*_i \Paren{ y_i^* - \Iprod{x_i^*,\hat\Theta }}}}^k &= \expecf{}{(1-w'_i)\Iprod{v, x^*_i \Paren{ y_i^* - \Iprod{x_i^*,\hat\Theta }}}}^k \\
& \leq \expecf{}{(1-w'_i)}^{k-2} \expecf{}{\Iprod{v, x^*_i \Paren{ y_i^* - \Iprod{x_i^*,\hat\Theta }}}^{k/2}} \\
& \leq (2\epsilon)^{k-2} \expecf{}{\Iprod{v, x_i^*}^{k}} \expecf{}{\Paren{ y_i^* - \Iprod{x_i^*,\hat\Theta }}^{k} }  \Biggr\}
\end{split}
\end{equation}
Using certifiable hypercontractivity of the $x_i^*$s, 
\[
\expecf{}{\Iprod{v, x_i^*}^k } \leq c_k^k \expecf{}{ \Iprod{v, x_i^*}^2 }^{k/2} = c_k^k \Iprod{v, \hat\Sigma v }^{k/2}
\]
where $\hat\Sigma = \expecf{}{x_i^*( x_i^*)^{\top}}$ and similarly using hypercontractivity of the noise, 
\[
\expecf{}{\Paren{ y_i^* - \Iprod{x_i^*,\hat\Theta }}^k } \leq \eta_k^k \expecf{}{\Paren{ y_i^* - \Iprod{x_i^*,\hat\Theta }}^2 }^{k/2}
\]
Then, by the \ref{eq:sos-substitution}, we can bound \eqref{eqn:dependent_holder_to_first} as follows:

\begin{equation}
\label{eqn:dependent_upper_bound_uncorrupted}
    \calA \sststile{}{} \Biggl\{ \Iprod{v, \expecf{}{(1-w'_i)x^*_i \Paren{ y_i^* - \Iprod{x_i^*,\hat\Theta }}}}^k 
 \leq (2\epsilon)^{k-1} c_k^k ~\eta_k^k   \expecf{}{\Paren{ y_i^* - \Iprod{x_i^*,\hat\Theta }}^2 }^{k/2} \Iprod{v, \hat\Sigma v }^{k/2} \Biggr\}
\end{equation}
In order to bound the second term in \eqref{eqn:dependent_almost_tri_second}, we use the SoS Hölder's Inequality,
\begin{equation}
\label{eqn:dependent_holder_2/k}
\begin{split}
    \calA \sststile{}{} \Biggl\{ \Iprod{v, \expecf{}{(1-w'_i)x^*_i \Paren{  \Iprod{x_i^*,\Theta-\hat\Theta }}}}^k  & =  \expecf{}{ (1-w_i')^{k-2} \Iprod{v, x_i^* \Paren{\Iprod{x_i^*, \Theta-\hat\Theta }} }} \\
    & \leq \expecf{}{1-w'_i}^{k-2} \expecf{}{\Paren{v^{\top}x^*_i (x^*_i)^{\top} (\Theta-\hat\Theta) }^{\frac{k}{2}} }^2 \\
    & \leq (2\epsilon)^{k-2} \expecf{}{\Paren{v^{\top}x^*_i (x^*_i)^{\top} (\Theta-\hat\Theta) }^{\frac{k}{2}} }^2 \Biggr\}
\end{split}
\end{equation}
Combining the bounds obtained in \eqref{eqn:dependent_upper_bound_uncorrupted} and \eqref{eqn:dependent_holder_2/k}, we can restate Equation \eqref{eqn:dependent_almost_tri_second} as follows 
\begin{equation}
\label{eqn:dependent_second_term_in15}
\begin{split}
    \calA \sststile{}{} \Biggl\{\Iprod{v, \expecf{}{(1-w'_i)x^*_i \Paren{ y_i^* - \Iprod{x_i^*,\Theta }}}}^k &\leq 2^k (2\epsilon)^{k-1} c_k^k ~\eta_k^k   \expecf{}{\Paren{ y_i^* - \Iprod{x_i^*,\hat\Theta }}^2 }^{k/2} \Iprod{v, \hat\Sigma v }^{k/2} \\
    & \hspace{0.15in} + 2^k(2\epsilon)^{k-2} \expecf{}{\Paren{v^{\top}x^*_i (x^*_i)^{\top} (\Theta-\hat\Theta) }^{\frac{k}{2}} }^2 \hspace{0.2in} \Biggr\} 
\end{split}
\end{equation}
Combining \eqref{eqn:dependent_second_term_in15} with \eqref{eqn:dependent_updated_independence}, we obtain an upper bound for the last inequality in Equation \eqref{eqn:dependent_split_terms}. Therefore, using the \ref{eq:sos-substitution}, we obtain 
\begin{equation}
\begin{split}
    \calA \sststile{}{} \Biggl\{\Iprod{v, \expecf{}{x^*_i \Paren{ y_i^* - \Iprod{x_i^*,\Theta }  }}  }^k 
    &\leq 2^k(2\epsilon)^{k-1} c_k^k ~\eta_k^k ~\sigma^{k/2}  \Iprod{ v,\expecf{}{  x'_i (x'_i)^{\top}}v }^{k/2} \\
    & \hspace{0.15in} + 2^{2k} (2\epsilon)^{k-2} \expecf{}{\Paren{v^{\top}x^*_i (x^*_i)^{\top} (\Theta-\hat\Theta) }^{\frac{k}{2}} }^2 \\
    & \hspace{0.15in}+ 2^{2k} (2\epsilon)^{k-1} c_k^k ~\eta_k^k   \expecf{}{\Paren{ y_i^* - \Iprod{x_i^*,\hat\Theta }}^2 }^{k/2} \Iprod{v, \hat\Sigma v }^{k/2} \Biggr\}
\end{split}
\end{equation}
The remaining proof is identical to Lemma \ref{lem:key_sos_lemma}.
\end{proof}

\section{Proof of Lemma \ref{lem:hypercontractive_lowner_sampling} }

\begin{lemma}[Löwner Ordering for Hypercontractive Samples (restated)]
Let $\calD$ be a $(c_k, k)$-hypercontractive distribution with covariance $\Sigma$ and and let $\calU$ be the uniform distribution over $n$ samples. Then, with probability $1-\delta$, 
\[
\Norm{ \Sigma^{-1/2}\hat\Sigma\Sigma^{-1/2} - I }_F \leq \frac{C_4d^2}{\sqrt{n}\sqrt{\delta}},
\]
where $\hat\Sigma = \frac{1}{n}\sum_{i\in [n]} x_i x_i^{\top}$. 
\end{lemma}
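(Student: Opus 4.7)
The plan is to whiten the samples, compute the expected squared Frobenius norm of the deviation, and finish with Markov's inequality; no tail bounds or matrix concentration machinery beyond second moments is needed to obtain the claimed rate.

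\textbf{Step 1: Whitening.} Let $y_i = \Sigma^{-1/2} x_i$ and note that the distribution of $y_i$ has covariance $I$ and remains $(c_k,k)$-hypercontractive by Fact~\ref{fact:certifiable_hypercontractive_under_affine_transformations}. Setting $\widetilde{\Sigma} = \frac{1}{n}\sum_{i\in[n]} y_i y_i^\top = \Sigma^{-1/2}\hat\Sigma\Sigma^{-1/2}$, the goal reduces to bounding $\|\widetilde\Sigma - I\|_F$.

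\textbf{Step 2: Second moment computation.} Expanding the Frobenius norm coordinatewise,
\[
\E\Bigl[\|\widetilde\Sigma - I\|_F^2\Bigr] \;=\; \sum_{j,l\in[d]} \E\Bigl[\Bigl(\tfrac{1}{n}\sum_{i\in[n]} (y_i)_j (y_i)_l - \delta_{jl}\Bigr)^2\Bigr].
\]
Since the samples are i.i.d. and $\E[(y_i)_j(y_i)_l] = \delta_{jl}$, independence across $i$ gives, for each pair $(j,l)$,
\[
\E\Bigl[\Bigl(\tfrac{1}{n}\sum_{i\in[n]} (y_i)_j (y_i)_l - \delta_{jl}\Bigr)^2\Bigr]
\;=\; \frac{1}{n}\,\operatorname{Var}\bigl((y_1)_j(y_1)_l\bigr) \;\leq\; \frac{1}{n}\,\E\bigl[(y_1)_j^2 (y_1)_l^2\bigr].
\]

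\textbf{Step 3: Fourth moment bound via hypercontractivity.} By Cauchy--Schwarz,
\[
\E\bigl[(y_1)_j^2 (y_1)_l^2\bigr] \;\leq\; \sqrt{\E[(y_1)_j^4]\,\E[(y_1)_l^4]},
\]
and applying $(c_k,k)$-hypercontractivity with $r=2$ (valid since $k\geq 4$) to the test vectors $e_j,e_l$ yields $\E[(y_1)_j^4] \leq c_k^2 \,\E[(y_1)_j^2]^2 = c_k^2$, and similarly for $l$. Thus each term is at most $c_k^2/n$, so summing over the $d^2$ pairs gives
\[
\E\Bigl[\|\widetilde\Sigma - I\|_F^2\Bigr] \;\leq\; \frac{c_k^2\, d^2}{n}.
\]

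\textbf{Step 4: Markov and finish.} Applying Markov's inequality to the nonnegative random variable $\|\widetilde\Sigma - I\|_F^2$,
\[
\Pr\Bigl[\|\widetilde\Sigma - I\|_F^2 \;\geq\; \frac{c_k^2 d^2}{n\,\delta}\Bigr] \;\leq\; \delta,
\]
and taking square roots gives the claim with $C_4 = c_k$ (and in fact a sharper $d$ factor than $d^2$, but $d^2$ suffices as stated). Substituting back $\widetilde\Sigma = \Sigma^{-1/2}\hat\Sigma\Sigma^{-1/2}$ concludes the proof.

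The only substantive step is the fourth-moment control in Step 3, and that is precisely what hypercontractivity is designed to provide; the rest is a standard variance-plus-Markov argument, so no real obstacle is anticipated.
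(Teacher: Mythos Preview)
Your proof is correct and follows the same core outline as the paper's argument: whiten, control fourth moments via hypercontractivity, and finish with a second-moment tail bound. The only substantive difference is in how the tail bound is applied. The paper bounds each of the $d^2$ entries of $\Sigma^{-1/2}\hat\Sigma\Sigma^{-1/2}-I$ individually by Chebyshev, then takes a union bound over all entries before passing to the Frobenius norm; this is what produces the $d^2$ in the numerator. You instead compute $\E\bigl[\|\widetilde\Sigma-I\|_F^2\bigr]$ globally and apply Markov once, which avoids the union bound and, as you note, actually yields the sharper bound $c_k d/\sqrt{n\delta}$ rather than the stated $C_4 d^2/\sqrt{n\delta}$. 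Both arguments rely on exactly the same fourth-moment control (your Cauchy--Schwarz step is equivalent to the paper's AM--GM step), so the difference is purely in the packaging of the probabilistic conclusion; your route is slightly cleaner and loses one fewer factor of $d$.
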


\begin{proof}
Let $\tilde{x_i} = \Sigma^{-1/2} x_i$ and observe that $\frac{1}{n} \sum_i \tilde{x_i} \tilde{x_i}^T = \Sigma^{-1/2}\hat\Sigma\Sigma^{-1/2}$. Moreover, we know that $\Exp[\tilde{x}\tilde{x}^T] = I$. Let $z_{j,k}$ be the $(j,k)$ entry of $\Sigma^{-1/2}\hat\Sigma\Sigma^{-1/2} - I$ given by,
\[ 
z_{j,k} = \frac{1}{n} \sum_{i\in [n]} \tilde{x_i}(j) \tilde{x_i}(k) - \Exp[\tilde{x}(i) \tilde{x}(k)]
\]
Using Chebyshev's inequality, we get that with probability at least $1 - \delta$,
\[ \abs{z_{jk}} \leq \frac{\Exp[\tilde{x}(j)^2 \tilde{x}(k)^2]}{\sqrt{n} \sqrt{\delta}} \textonTop{(i)}{\leq} \frac{\Exp[\tilde{x}(j)^4] + \Exp[\tilde{x}(k)^4]}{2 \sqrt{n} \sqrt{\delta}}, \]
where $(i)$ follows from AM-GM inequality. To bound $\Exp[\tilde{x}(j)^4]$, we use hypercontractivity.
\[ \Exp[\tilde{x}(j)^4] = \Exp[(v^T x)^4] \leq C_4 \Exp[(v^Tx)^2]^2, \]
where $v = \Sigma^{-1/2} e_j$. Plugging this above, we get that $\Exp[\tilde{x}(j)^4] \leq C_4.$ which in turn implies that with probability at least $1-\delta$,
\[ \abs{z_{jk}} \leq \frac{C_4}{\sqrt{n \delta}}. \]
Taking a union bound over $d^2$ entries of $\Sigma^{-1/2}\hat\Sigma\Sigma^{-1/2} - I$, we get that with probability at least $1-\delta$, 
\[ \Norm{ \Sigma^{-1/2}\hat\Sigma\Sigma^{-1/2} - I }_F \leq \frac{C_4d^2}{\sqrt{n}\sqrt{\delta}}  \]
\end{proof}

\end{document}